\documentclass{article} 
\usepackage{iclr2027_conference,times}
\usepackage[table]{xcolor}

\usepackage{amsmath,amsfonts,bm}

\def\eqref#1{equation~\ref{#1}}

\def\1{\bm{1}}

\DeclareMathAlphabet{\mathsfit}{\encodingdefault}{\sfdefault}{m}{sl}
\SetMathAlphabet{\mathsfit}{bold}{\encodingdefault}{\sfdefault}{bx}{n}

\newcommand{\R}{\mathbb{R}}

\DeclareMathOperator*{\argmin}{arg\,min}

\usepackage{hyperref}

\hypersetup{
	colorlinks=true,
	citecolor=blue!65!black!65,
	linkcolor=blue!55!black!65,
	urlcolor=purple!40!black!60,
}

\usepackage{url}

\usepackage{etoc}
\usepackage{titletoc}

\usepackage{subcaption}
\usepackage{amsthm}
\usepackage{amsmath,amssymb,amsfonts,mathtools,bm,mathrsfs}

\usepackage{booktabs}
\usepackage{multirow}
\usepackage{microtype}
\usepackage{graphicx}
\usepackage{xcolor}
\usepackage{algorithm}
\usepackage{algpseudocode}
\usepackage{tikz}
\usetikzlibrary{arrows.meta,positioning,fit}
\usepackage{enumitem}
\usepackage{hyperref}
\usepackage{url}

\usepackage{colortbl} 

\usepackage[capitalize,noabbrev]{cleveref}

\newtheorem{definition}{Definition}
\newtheorem{theorem}{Theorem}
\newtheorem{proposition}{Proposition}
\newtheorem{corollary}{Corollary}
\newtheorem{lemma}{Lemma}

\providecommand{\R}{\mathbb{R}}

\providecommand{\cG}{\boldsymbol{\mathcal{G}}}
\providecommand{\cM}{\boldsymbol{\mathcal{M}}}

\providecommand{\cP}{\boldsymbol{\mathcal{P}}}

\providecommand{\polar}{\operatorname{polar}}

\providecommand{\argmin}{\operatorname*{arg\,min}}

\usepackage{amsmath,amsfonts,amssymb,bm,mathtools}

\title{TUON: A Tensor-Aware Optimizer Utilizing Tensor-SVD and Polarization for Large Language Model Pretraining}

\author{Anda Tang \\
	the State Key Lab of General AI\\
	School of Intelligence Science and Technology\\
	Peking University\\
	Beijing, China\\
	\texttt{tanganda@pku.edu.cn} \\
	\And
	Xuanlin Yang\\
	the State Key Lab of General AI\\
	School of Intelligence Science and Technology\\
	Peking University\\
	Beijing, China\\
	\texttt{2401112159@stu.pku.edu.cn}
	\And
	Junjie Wang \\
	the State Key Lab of General AI\\
	School of Intelligence Science and Technology\\
	Peking University\\
	Beijing, China\\
	\texttt{wangjunjie25@stu.pku.edu.cn} \\
	\And
	Zhouchen Lin \thanks{Corresponding author.} \\
	the Pazhou Laboratory (Huangpu)\\
	the State Key Lab of General AI\\
	School of Intelligence Science and Technology\\
	the Institute for Artificial Intelligence\\
	Peking University\\
	Beijing, China\\
	\texttt{zlin@pku.edu.cn}
}

\iclrfinalcopy 
\begin{document}

\maketitle

\begin{abstract}
	Large scale language model~(LLM) pretraining is increasingly bottlenecked by optimizer efficiency, driving optimizer design from vector-aware methods such as AdamW to matrix-aware methods. 
	Matrix-aware optimizers such as Muon exploit matrix structure but treat layers independently, neglecting cross-layer correlations. Tensor-aware optimizers are therefore needed, yet existing ones rely on a matricization-based tensor algebra and remain restricted to shallow stacking and specific modules. 
	In this paper, we propose \textbf{TUON}, a novel optimizer built on a transform-based tensor algebra derived from the Tensor-SVD (T-SVD). Unlike methods that polarize a globally unfolded matrix, TUON mixes layers via a unitary transform $\boldsymbol{Q}$ and then polarizes each transformed slice, and inverts the transform to recover the update. This mix-then-separate strategy captures cross-layer correlations while preserving per-layer diversity, avoiding the structural dimension imbalance and cross-layer misalignment of unfolding. Thus it successfully  scales to more layers and diverse modules beyond QKV. 
	Theoretically, we prove that TUON performs steepest descent under a $\boldsymbol{Q}$-transformed tensor spectral norm, and identify its update as the tensor polar factor of the T-SVD. We further establish transform non-redundancy, norm and smoothness comparisons with Muon and TEON, and a convergence guarantee. Empirically, TUON improves consistently over Muon and TEON on GPT-124M/350M and Qwen-0.6B/4B, with ablations confirming the transformation mechanism, cross-layer structure beyond two-layer stacking, adaptability beyond QKV, and robustness to the polarization method.
\end{abstract}

\section{Introduction}

As model and data scales continue to grow, the resource intensity of large scale language model~(LLM) pretraining has placed optimization efficiency at the forefront of research~\cite{liu2025cola,liu2025muon, zhang2026lax}.
In this context, optimizers are critical for training efficiency.
The past years have witnessed optimizer advances along different structural granularity levels, from \textit{vector-aware} to \textit{matrix-aware} methods.	
Among them, Adam \citep{kingma2015adam} and AdamW~\citep{loshchilov2017decoupled} remain the default and most widely-used optimizers. It treats parameters via coordinate-wise momentum statistics, placing it within the \textit{vector-aware} family~\citep{xie2024adan,zhang2024adam}. 
Then, \textit{matrix-aware} optimizers~\citep{ gupta2018shampoo,vyas2025soap} 	 are emerging as a promising paradigm for next-generation efficient training algorithms. As a representative, Muon \citep{jordan2024muon} has achieved record-breaking pretraining performance on LLMs \citep{liu2025muon, team2025kimi}. 
Compared with vector-aware optimizers, the improvement may stem from matrix-aware optimizers preserving more information from the matrix structure in Transformers. 

A Transformer naturally has richer structure than the matrix view suggests. Each module repeats across depth with compatible shapes, forming a tensor \(\boldsymbol{\mathcal{W}} \in \mathbb{R}^{m \times n \times K}\) indexing layers. Yet to fit into optimizer frameworks, this structure is usually split into separate matrices \(\boldsymbol{W}^{(1)}, \ldots, \boldsymbol{W}^{(K)} \in \mathbb{R}^{m \times n}\), losing cross-layer information. Existing matrix-aware methods operate layer-wise and thus neglect cross-layer correlations. 
This has motivated \textit{tensor-aware} optimizers. TEON \citep{zhang2026teon} matricizes the third-order gradient tensors along the first and second modes and orthogonalizes them, extending Muon to the tensor level. Tensorion \citep{bogachev2026tensorion} instead relaxes the tensor spectral norm into a tractable surrogate, operating on native higher-order parameters within a single layer and adaptively selecting the unfolding.

Despite their empirical success, existing tensor-aware optimizers exhibit two practical limitations. 
First, they are restricted in scope. TEON is applied for QKV-specific structures and performs worse on other modules such as output projections and MLPs, so its benefits remain confined to a narrow subset of the network.  Tensorion has been applied to vision tasks, leaving its applicability to LLMs unclear. 
Second, they are confined to shallow stacking. TEON supports only two-layer stacking, and its performance degrades as more layers are concatenated, preventing it from realizing the theoretical information gain that larger stacking sizes are expected to yield \citep{zhang2026teon}.


These limitations stem from two structural problems. The first is a dimension imbalance induced by unfolding. As the number of layers grows, the unfolded matrix becomes increasingly rectangular, so its number of singular directions is bounded by its smaller dimension, and may be far smaller than the larger dimension. Thus, these directions span only a small subspace of the full parameter space, causing information loss. 
The second is cross-layer misalignment. Even within the same module type, each slice has its own semantic role and evolves differently during training, which means each dominant subspace may not align. 
Global unfolding concatenates all slices into one matrix to solve for a single shared subspace that approximates all slices collectively. This compromise subspace smooths out slice-specific characteristics and suppresses head- or layer-specific signals. 
Ultimately, these problems stem from the tensor algebra underlying existing tensor-aware optimizers. Addressing them requires a different tensor algebra, which induces a geometry better suited to cross-layer optimization. This raises a natural question:
\par
{\centering  \textit{Can we design a tensor-aware optimizer that captures cross-layer structure beyond two-layer stacking, adapts to diverse module types beyond QKV, and scales across model sizes, while being theoretically interpretable under a proper tensor geometry?} \par
}

In this work, we answer this question affirmatively by introducing \textbf{TUON}, a \textbf{T}ensor-aware optimizer \textbf{U}tilizing tens\textbf{O}r-SVD and polarizatio\textbf{N}, built upon a \textit{transform-based tensor algebra} grounded in the Tensor-SVD (T-SVD)~\citep{kilmer2011factorization,kernfeld2015tensor,zheng2026refta}. 
For third-order tensors in Transformer, TUON applies a unitary transform \(\boldsymbol{Q}\) along the layer mode to exploit cross-layer structure, then polarizes each transformed slice independently, and inverts the transform to recover the update. 
While the algebra only requires \(\boldsymbol{Q}\) to be invertible,  we further require it to decorrelate layer-wise coupling to prevent mutual interference between slices during polarization. Thus we propose \textbf{DecorQ}, a data-dependent transform that diagonalizes the layer Gram matrix and decouples the update into independent orthogonal subproblems. This \textit{mix-then-separate} design captures cross-layer structure while preserving per-layer diversity, avoiding the information loss of global unfolding and scaling gracefully to many layers and diverse modules beyond QKV.

Theoretically, we first give the tensor spectral under  ${\boldsymbol{Q}}$-transformation and its dual. 
Then we prove that TUON is steepest descent for this norm. 
We further show that the update is the tensor polar factor of a \(Q\)-product T-SVD and prove that the transform is non-redundant.
At last, we establish norm comparability and smoothness comparison with Muon and TEON, and provide a convergence guarantee. 
Empirically, TUON delivers consistent improvements across  GPT-124M/350M and Qwen-0.6B/4B. 
Ablations further confirm the effectiveness of the transform DecorQ, its extension beyond two-layer stacking, its adaptability beyond QKV, and its robustness to the polarization method.


Our main contributions are as follows:

%

\begin{itemize}
	\item We propose TUON, a tensor-aware optimizer based on transform-based tensor algebra that better exploits cross-layer information, with the DecorQ transform to decouple layers without mutual interference during polarization. The optimizer applies across layer depths, module types, and model sizes.

	\item We prove that TUON performs steepest descent under the transform-induced norm. Moreover, we identify the update as the tensor polar factor of a \(\boldsymbol{Q}\)-product T-SVD, and prove the non-redundancy of this transform.
	
	\item We establish geometry comparisons with Muon and TEON. The transform distorts the Muon geometry by at most a factor of \(\sqrt{K}\), and at a common operator radius, the TUON oracle is at least as strong as the TEON oracle, and can be up to \(\sqrt{K}\) stronger.
	
	\item We provide comprehensive experiments. TUON improves over Muon and TEON on GPT-124M/350M and Qwen-0.6B/4B, with ablations confirming the transformation mechanism, cross-layer structure beyond two-layer stacking, adaptability beyond QKV, and robustness to the polarization method.
\end{itemize}

\S \ref{sec:method}  presents the TUON method, \S \ref{sec:theory} its theoretical properties, \S \ref{sec:experiments} the experimental results. \S \ref{conclusion} concludes with future directions.
\textbf{Notation}, \textbf{preliminaries}, and a comprehensive \textbf{related work} review are provided in Appendices~\ref{unified-notation}, \ref{app:preliminaries}, and \ref{related-work-appendix}, respectively.


\section{Methodology of TUON}
\label{sec:method}

\subsection{Momentum tensor and layer transform}
For one compatible group, stack the momenta at $t$-th iteration as
$
\cM_t(:,:,k)=\boldsymbol{M}_t^{(k)}$ and $
\cM_t=\beta\cM_{t-1}+(1-\beta)\cG_t$
where $\cM_t\in\R^{m\times n\times K}$. Given a unitary $\boldsymbol{Q}\in \mathbb{C}^{K\times K}$, $\boldsymbol{Q}^{\mathsf H} \boldsymbol{Q}=I$, where $\boldsymbol{Q}^{\mathsf H}$ denotes the conjugate transpose of $\boldsymbol{Q}$, our tensor-product-transform treats every tube as a length-$K$ column,
\begin{equation}
	\widehat{\cM}_t=\cM_t\times_3\boldsymbol{Q},
	\quad
	\text{where the $j$-th slice of $\widehat{\cM}_t$}\quad \widehat{\boldsymbol{M}}_{t,j}=\widehat{\boldsymbol{\mathcal{M}}}_t(:,:,j)=\sum_{k=1}^K \boldsymbol{Q}_{jk}\boldsymbol{M}_t^{(k)}.
	\label{eq:method-transform}
\end{equation}

\paragraph{Choice of the layer transform \(\boldsymbol{Q}\).}
\label{sec:choice-of-q}

In this tensor algebra derived from T-SVD, the layer transform \(\boldsymbol{Q}\) only needs to be unitary (or invertible). Many standard transforms satisfy this requirement, including the DCT, the Haar wavelet transform, and the FFT (discussed in Appendix~\ref{Selection-of-Q}). These fixed transforms are simple and computationally efficient. However, a fixed transform imposes a prior mismatched to actual layer statistics. To see this, let \(\mathsf{M}_t \in \mathbb{R}^{K \times mn}\) be the mode-3 unfolding of the momentum tensor, whose \(k\)-th row is \(\operatorname{vec}(\boldsymbol{M}_t^{(k)})^{\mathsf T}\), and define the layer-wise Gram matrix
\begin{equation}
	\mathsf{C}_t := \mathsf{M}_t \mathsf{M}_t^{\mathsf T} \in \mathbb{R}^{K \times K},
	\qquad
	(\mathsf{C}_t)_{ij} = \langle \boldsymbol{M}_t^{(i)}, \boldsymbol{M}_t^{(j)} \rangle_{\mathrm F}.
\end{equation}
This matrix encodes the pairwise coupling among layers. Large off-diagonal entries indicate strongly correlated update directions, while near-zero entries indicate decoupled evolution. A good choice of \(\boldsymbol{Q}\) should decorrelate this structure so that the transformed slices evolve independently.

This motivates our default choice, \textbf{DecorQ}. We set \(\boldsymbol{Q} = U_3^{\mathsf T}\), where \(U_3\) is the left singular matrix of the SVD \(\mathsf{M}_t = \boldsymbol{U}_3 \boldsymbol{\Sigma}_3 \boldsymbol{V}_3^{\mathsf T}\). Under this transform, the Gram matrix of the transformed slices becomes
\begin{equation}
	\widehat{\mathsf{C}}_t
	= \boldsymbol{Q} \mathsf{M}_t \mathsf{M}_t^{\mathsf T} \boldsymbol{Q}^{\mathsf T}
	= \boldsymbol{U}_3^{\mathsf T} \bigl( \boldsymbol{U}_3 \boldsymbol{\Sigma}_3^2 \boldsymbol{U}_3^{\mathsf T} \bigr) \boldsymbol{U}_3
	= \boldsymbol{\Sigma}_3^2,
\end{equation}
which is diagonal. Hence the transformed slices satisfy
\(\langle \widehat{\boldsymbol{M}}_{t,i}, \widehat{\boldsymbol{M}}_{t,j} \rangle_{\mathrm F} = 0\) for \(i \neq j\), which are mutually orthogonal.  The original coupled $K$-layer of a tensor is therefore decoupled into \(K\) independent single-channel problems, \textit{each evolving in its own orthogonal subspace without mutual interference}. 
DecorQ is data-dependent and is refreshed from the momentum statistics during training, so the transform adapts to the evolving cross-layer structure rather than imposing a fixed prior. We compare DecorQ against DCT, Haar, and DFT in  experiments in \S\ref{Q-ablation}, where DecorQ consistently achieves the best performance. 
Unlike existing approaches that orthogonalize the unfolded matrix as a whole, TUON uses unfolding only to construct \(\boldsymbol{Q}\). 
Full derivations and additional transform choices are provided in Appendix~\ref{Selection-of-Q}.

\subsection{Polarization and inverse transform}\label{Polarization and inverse transform}
Then for each transformed slice $\widehat{\boldsymbol{M}}_{t,j}$, we polarize (or orthogonalize) them, defined as
\begin{equation}
	\widehat{\boldsymbol{P}}_{t,j}=\polar(\widehat{\boldsymbol{M}}_{t,j})=\boldsymbol{U}_{t,j}\boldsymbol{V}_{t,j}^{\top},
\end{equation}
where $\boldsymbol{U}_{t,j}$ and $\boldsymbol{V}_{t,j}$ correspond to the singular value decomposition~(SVD) of $\widehat{\boldsymbol{M}}_{t,j}=\boldsymbol{U}_{t,j}\boldsymbol{\Sigma}_{t,j}\boldsymbol{V}_{t,j}^{\top}$. Then the polarized tensor $\widehat{\cP}_t$ is made up by each polarized slice $\widehat{\boldsymbol{P}}_{t,j}$, i.e., $\widehat{\boldsymbol{P}}_{t,j}=\widehat{\boldsymbol{\mathcal{P}}}_t(:,:,j), \left(j=1,\cdots , K\right)$. 	
Instead of exact SVD, the polar factor is approximated by fast iterative methods such as Newton--Schulz~(NS, Appendix~\ref{Newton--Schulz-polynomial}).

After obtaining the approximate polar factors $\widehat{\cP}_t$, we invert the transform to recover the update in the original tensor domain as 
\begin{equation}
	\cP_t=\widehat{\cP}_t\times_3 \boldsymbol{Q}^{\mathsf H}.
	\label{eq:method-exact-polar}
\end{equation}
To control the overall step magnitude while preserving the direction of the update, we then scale the entire stack by $
\gamma_t = \frac{1}{\operatorname{RMS}(\boldsymbol{\mathcal{P}}_t) + \epsilon}
$, where $\operatorname{RMS}(\boldsymbol{\mathcal{P}}_t)
= \sqrt{ \frac{1}{mnK} \sum_{i=1}^m \sum_{j=1}^n \sum_{k=1}^K |\boldsymbol{\mathcal{P}}_t(i,j,k)|^2 }$ is the root-mean-square value (typically \(1/\sqrt{\max(m,n)}\), as suggested by  the rank--energy identity in Proposition~\ref{prop:rank-energy}), and $\epsilon$  avoids division by zero. 
Since \(\gamma_t\) is shared across all layers, it only changes the effective trust-region radius, not the relative direction across slices. Finally, with decoupled weight decay, $\boldsymbol{\mathcal{W}}_{t+1}
= (1 - \eta_t \lambda) \, \boldsymbol{\mathcal{W}}_t
- \eta_t \, \widetilde{\boldsymbol{\mathcal{P}}}_t, \quad \widetilde{\boldsymbol{\mathcal{P}}}_t = \gamma_t \, \boldsymbol{\mathcal{P}}_t$ 
where $\eta_t$ is the learning rate and $\lambda$ is the weight-decay coefficient. 

\begin{algorithm}[t]
	\caption{TUON}
	\label{alg:aqtmuon}
	\small
	\begin{algorithmic}[1]
		\Require compatible matrices \(\{\boldsymbol{W}_t^{(k)}\}_{k=1}^K\),  gradients \(\{\boldsymbol{G}_t^{(k)}\}_{k=1}^K\) and momentum \(\{\boldsymbol{M}_t^{(k)}\}_{k=1}^K\); learning rate $\eta_t$, momentum coefficient $\beta$, weight decay coefficient  $\lambda$, NS steps \(J\) 
		\State \(\boldsymbol{\mathcal{G}}_t\gets \operatorname{stack}_3(\boldsymbol{G}_t^{(1)},\ldots,\boldsymbol{G}_t^{(K)})\)
		\State \(\boldsymbol{\mathcal{M}}_t\gets\beta \boldsymbol{\mathcal{M}}_{t-1}+(1-\beta)\boldsymbol{\mathcal{G}}_t\)
		\State Compute the Gram matrix \(\mathsf{C}_t := \mathsf{M}_t \mathsf{M}_t^{\mathsf T} \in \mathbb{R}^{K \times K}\) with
		\((\mathsf{C}_t)_{k\ell} = \langle \boldsymbol{\mathcal{M}}_t(:,:,k), \boldsymbol{\mathcal{M}}_t(:,:,\ell) \rangle_{\mathrm F}\)
		\State Compute eigendecomposition \(\mathsf{C}_t = \mathbf{U}_t \boldsymbol{\Lambda}_t \mathbf{U}_t^{\mathsf T}\)
		\State Set \(\boldsymbol{Q}_t \gets \mathbf{U}_t^{\mathsf T}\) \Comment{Refresh per 200 iterations}
		\State \(\widehat{\boldsymbol{\mathcal{M}}}_t\gets\boldsymbol{\mathcal{M}}_t\times_3\boldsymbol{Q}_t\)
		\For{\(j=1,\ldots,K\) \textbf{in parallel}}
		\State \(\widehat{\boldsymbol{\mathcal{P}}}_t(:,:,j)\gets \operatorname{polarization}(\widehat{\boldsymbol{\mathcal{M}}}_t(:,:,j))\) \Comment{Newton--Schulz, \(J\) steps}
		\EndFor
		\State \(\boldsymbol{\mathcal{P}}_t\gets\widehat{\boldsymbol{\mathcal{P}}}_t\times_3\boldsymbol{Q}_t^{\mathsf H}\)
		\State \(\boldsymbol{\mathcal{W}}_{t+1}\gets(1-\eta_t\lambda)\boldsymbol{\mathcal{W}}_t-\eta_t\gamma_t\boldsymbol{\mathcal{P}}_t\) with \(\gamma_t\gets 1/(\operatorname{RMS}(\boldsymbol{\mathcal{P}}_t)+\epsilon)\)
	\end{algorithmic}
\end{algorithm}

\begin{figure}[!ht]
	\centering
	{
		\includegraphics[width=0.7\textwidth]{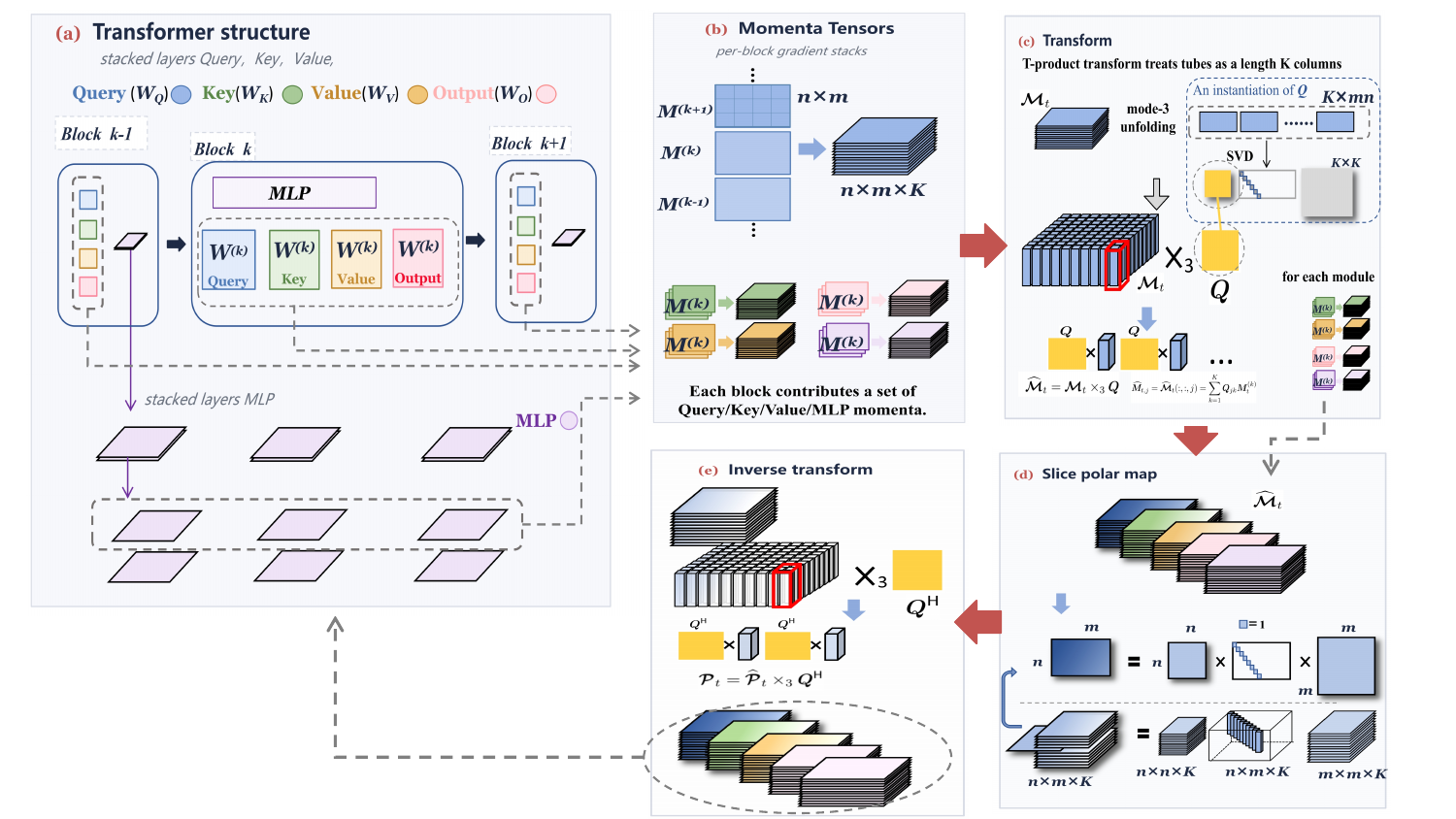}
		\label{}
	}
	\caption{Illustration of the TUON update}
	\label{Illustrations of flowchart} 
\end{figure}

\paragraph{Complexity analysis.}\label{complexity}
Table~\ref{tab:complexity} compares the resulting complexities. In the typical regime \(K \ll \min(m,n)\), TUON has the comparable asymptotic cost as Muon and TEON.  A detailed derivation of the per-iteration complexity is deferred to Appendix~\ref{complexity-appendix}.

\begin{table}[!ht]
	\centering
	\caption{Per-iteration computational complexity for a single tensor group of \(K \times m \times n\).}
	\label{tab:complexity}
	\small
	\setlength{\tabcolsep}{6pt}
	\begin{tabular}{@{}lll@{}}
		\toprule
		\textbf{Optimizer} & \textbf{Per-iteration complexity} & \textbf{Dominant term} \\
		\midrule
		Muon & \(O\!\left(K\,mn \cdot \min(m,n)\right)\) & \(K\,mn \cdot \min(m,n)\) \\
		TEON & \(O\!\left(K\,mn \cdot \min(m,nK)\right)\) & \(Km n  \cdot \min(m,nK)\) \\
		TUON & \(O\!\left(K\,mn \cdot \min(m,n) + K^2 mn\right)\) & \(K\,mn \cdot \min(m,n)\) \\
		\bottomrule
	\end{tabular}
\end{table}

\section{Steepest descent theory and comparative advantages}
\label{sec:theory}

This section develops the complete theory of TUON.   
Preliminaries including Matrix SVD, operator/nuclear norm duality, polar factors, norm-induced LMO, T-SVD, and norm-smoothness are reviewed in Appendix~\ref{app:preliminaries}.  The Muon and TEON geometries, including their trust-region updates, are stated in Appendix~\ref{app:baselines}.  Detailed step-by-step proofs of every result in this section are given in Appendix~\ref{app:proofs}.  
The theoretical development answers \textbf{four questions}.  \textbf{i)} Is the update in our method the exact steepest descent direction under some well-defined norm geometry~(Sec.~\ref{subsec:exact-geometry})? 
\textbf{ii)} Does the layer transform cancel algebraically with the inverse transform (Sec.~\ref{subsec:nonredundancy-energy})? \textbf{iii)} How does the geometry compare with Muon and TEON (Sec.~\ref{subsec:muon-comparison})? \textbf{iv)} Can a strict convergence be certified  (Sec.~\ref{subsec:convergence-theory})?


Let $\boldsymbol{\mathcal{X}}\in\mathbb{R}^{m\times n\times K}$ stack $K$ compatible matrices, with frontal slices $\boldsymbol{X}_k:=\boldsymbol{\mathcal{X}}(:,:,k)$.  
Define mode-1 unfolding as
$
\mathscr U(\boldsymbol{\mathcal{X}}):=[\boldsymbol{X}_1,\ldots,\boldsymbol{X}_{nK}]\in\mathbb{C}^{m\times nK}.
$
Under this convention, 
$
\mathscr U(\boldsymbol{\mathcal{X}}\times_3\boldsymbol{Q})
=\mathscr U(\boldsymbol{\mathcal{X}})R_{\boldsymbol{Q}}$
, $
R_{\boldsymbol{Q}}:={\boldsymbol{Q}}^{\mathsf T}\otimes I_n,
$
where $\otimes$ is the Kronecker product and $I_n$ is the identity matrix of size $n$. Since both $\boldsymbol{Q}$ and $I_n$ are unitary, their Kronecker product $R_{\boldsymbol{Q}}$ is unitary.  Below, \(\|\cdot\|_{\mathrm{Muon}}\) and \(\|\cdot\|_{\mathrm{TEON}}\) denote the spectral norms used by Muon and TEON, respectively; their duals and optimizer maps are given in Appendix~\ref{app:baselines}. Appendix~\ref{app:q-block-is-norm} verifies directly that $\|\cdot\|_{{\boldsymbol{Q}},\mathrm{Ts}}$ is a norm. Proposition~\ref{prop:q-dual} identifies the dual norm associated with TUON, which measures the largest first-order alignment with a direction in the unit trust region. The proof can be found in Appendix~\ref{app:proofs}.

\begin{definition}[Tensor spectral and nuclear norms under  ${\boldsymbol{Q}}$-transform]
	\label{def:q-norms}
	Define $
	\|\boldsymbol{\mathcal{X}}\|_{{\boldsymbol{Q}},\mathrm{Ts}}
	:=\max_{1\le j\le K}\|\widehat{\boldsymbol{X}}_j\|_{\mathrm{op}}$ and
	$
	\|\boldsymbol{\mathcal{X}}\|_{{\boldsymbol{Q}},\mathrm{Tn}}
	:=\sum_{j=1}^K\|\widehat{\boldsymbol{X}}_j\|_* 
	$
	where \(\|\cdot\|_{\mathrm{op}}\) and \(\|\cdot\|_*\) denote the matrix operator (spectral) norm and nuclear norm, respectively.
\end{definition}


\begin{proposition}[Dual norm]
	\label{prop:q-dual}
	For unitary ${\boldsymbol{Q}}$, $\|\cdot\|_{{\boldsymbol{Q}},\mathrm{Tn}}$ is the dual norm of $\|\cdot\|_{{\boldsymbol{Q}},\mathrm{Ts}}$, defined as  
			$\sup_{\|\boldsymbol{\mathcal{D}}\|_{{\boldsymbol{Q}},\mathrm{Ts}}\le1}
			\langle\boldsymbol{\mathcal{G}},\boldsymbol{\mathcal{D}}\rangle_{\mathrm F}
			=\|\boldsymbol{\mathcal{G}}\|_{{\boldsymbol{Q}},\mathrm{Tn}}$. 
\end{proposition}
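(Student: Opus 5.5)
The plan is to push everything into the transformed domain, where the inner product and both norms split slice by slice, and then reuse the matrix operator/nuclear duality on each slice.

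\textbf{Step 1 (unitary invariance of the pairing).} Since $R_{\boldsymbol{Q}}=\boldsymbol{Q}^{\mathsf T}\otimes I_n$ is unitary and $\mathscr U(\boldsymbol{\mathcal{X}}\times_3\boldsymbol{Q})=\mathscr U(\boldsymbol{\mathcal{X}})R_{\boldsymbol{Q}}$, we have
\[
\langle\boldsymbol{\mathcal{G}},\boldsymbol{\mathcal{D}}\rangle_{\mathrm F}
=\operatorname{Re}\operatorname{tr}\bigl(\mathscr U(\boldsymbol{\mathcal{G}})^{\mathsf H}\mathscr U(\boldsymbol{\mathcal{D}})\bigr)
=\operatorname{Re}\sum_{j=1}^K\langle\widehat{\boldsymbol{G}}_j,\widehat{\boldsymbol{D}}_j\rangle_{\mathrm F}.
\]
Here the real part is taken because $\boldsymbol{Q}$ may be complex, while $\boldsymbol{\mathcal{G}}$ and $\boldsymbol{\mathcal{D}}$ are real and the original pairing is real. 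The map $\boldsymbol{\mathcal{D}}\mapsto\widehat{\boldsymbol{\mathcal{D}}}$ is a bijection, with inverse $\times_3\boldsymbol{Q}^{\mathsf H}$.

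\textbf{Step 2 (upper bound).} For each slice, Appendix~\ref{app:preliminaries} gives the matrix duality $\operatorname{Re}\langle A,B\rangle\le\|A\|_*\|B\|_{\mathrm{op}}$. If $\|\boldsymbol{\mathcal{D}}\|_{\boldsymbol{Q},\mathrm{Ts}}\le1$, then every $\|\widehat{\boldsymbol{D}}_j\|_{\mathrm{op}}\le1$. Summing over slices gives $\langle\boldsymbol{\mathcal{G}},\boldsymbol{\mathcal{D}}\rangle_{\mathrm F}\le\sum_j\|\widehat{\boldsymbol{G}}_j\|_*$, which is $\|\boldsymbol{\mathcal{G}}\|_{\boldsymbol{Q},\mathrm{Tn}}$.

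\textbf{Step 3 (attainment).} Choose $\widehat{\boldsymbol{D}}_j^\star=\polar(\widehat{\boldsymbol{G}}_j)=\boldsymbol{U}_j\boldsymbol{V}_j^{\mathsf H}$ from the compact SVD. This has operator norm at most $1$ (or equals $0$ when the slice vanishes) and satisfies $\langle\widehat{\boldsymbol{G}}_j,\widehat{\boldsymbol{D}}_j^\star\rangle=\|\widehat{\boldsymbol{G}}_j\|_*$. Set $\boldsymbol{\mathcal{D}}^\star=\widehat{\boldsymbol{\mathcal{D}}}^\star\times_3\boldsymbol{Q}^{\mathsf H}$. Then $\|\boldsymbol{\mathcal{D}}^\star\|_{\boldsymbol{Q},\mathrm{Ts}}\le1$ and equality holds in Step~2, so the supremum is a maximum equal to $\|\boldsymbol{\mathcal{G}}\|_{\boldsymbol{Q},\mathrm{Tn}}$. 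This $\boldsymbol{\mathcal{D}}^\star$ is exactly the TUON direction~\eqref{eq:method-exact-polar}, which sets up the later steepest-descent statement.

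\textbf{Main obstacle: realness of the maximizer.} The main difficulty is that $\boldsymbol{\mathcal{D}}$ ranges over real tensors when $\boldsymbol{Q}$ is complex, as with the DFT. I must check that $\boldsymbol{\mathcal{D}}^\star$ is real. For real $\boldsymbol{Q}$, such as DecorQ, DCT or Haar, this is immediate.

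For complex $\boldsymbol{Q}$, I would first try to avoid the issue by proving the duality over the complex space $\mathbb{C}^{m\times n\times K}$ with the real inner product $\operatorname{Re}\langle\cdot,\cdot\rangle$. There the argument above is complete, and the map to the transformed domain is an isometric bijection of real inner-product spaces. If the statement must be read over real tensors, the fallback is a conjugate-symmetry argument as in the standard T-SVD. One needs the slices of real tensors to satisfy $\widehat{\boldsymbol{X}}_{j'}=\overline{\widehat{\boldsymbol{X}}_j}$ for a paired index $j'$, which holds for the DFT. Under this pairing $\polar(\overline{A})=\overline{\polar(A)}$, so $\widehat{\boldsymbol{\mathcal{D}}}^\star$ inherits the symmetry, and hence $\boldsymbol{\mathcal{D}}^\star$ is real. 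I would state this assumption explicitly, restricting to real $\boldsymbol{Q}$ or to conjugate-symmetric transforms, rather than claim it for arbitrary unitary $\boldsymbol{Q}$.
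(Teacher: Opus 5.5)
Your proposal is correct and is essentially the paper's proof. You move the pairing to the transformed domain by unitary invariance (you use the unfolding identity with $\boldsymbol{R}_{\boldsymbol{Q}}$, where the paper uses the tube-wise Parseval lemma), bound each slice by the operator--nuclear H\"older inequality, and attain the bound with the slice polar factors mapped back by $\times_3\boldsymbol{Q}^{\mathsf H}$.

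Your realness caveat is a genuine refinement the paper skips, since its proof implicitly works over complex tensors and leaves the real part of the pairing implicit. The restriction you propose is really needed: take $K=2$, $m=n=1$, $\boldsymbol{Q}=\frac{1}{\sqrt2}\begin{bmatrix}1&e^{i\pi/4}\\-1&e^{i\pi/4}\end{bmatrix}$ and $\boldsymbol{\mathcal{G}}=(1,1)/\sqrt2$; then $\|\boldsymbol{\mathcal{G}}\|_{\boldsymbol{Q},\mathrm{Tn}}\approx1.307$, while the supremum over real $\boldsymbol{\mathcal{D}}$ is only about $1.082$.
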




\subsection{Steepest descent and T-SVD polar}
\label{subsec:exact-geometry}



For clarity, we fix the notation in this part as follows. Define the update of TUON by a general $\boldsymbol{Q}$ transform as  $
\boldsymbol{\mathcal{P}}
:=
\bigl( \operatorname{polar}\left((\boldsymbol{\mathcal{M}} \times_3 \boldsymbol{Q})_{k\in \{1,\cdots, K\}} \right)\bigr) \times_3 \boldsymbol{Q}^{\mathsf H}.
$ 
From Theorem~\ref{thm:q-lmo}, we will show that TUON is normalized steepest descent under the \(\boldsymbol{Q}\)-transformed tensor spectral norm, rather than a heuristic composition of matrix normalizations. A shared positive scale on all slices only changes the effective trust-region radius \(\rho\), not the update direction.

\begin{theorem}
	\label{thm:q-lmo}
	For every radius $\rho > 0$, $-\rho \boldsymbol{\mathcal{P}}
	\in
	\argmin_{\| \boldsymbol{\mathcal{D}} \|_{\boldsymbol{Q},\mathrm{Ts}} \le \rho}
	\langle \boldsymbol{\mathcal{M}}, \boldsymbol{\mathcal{D}} \rangle_{\mathrm F}$ 
	and the optimal value is $
	-\rho \| \boldsymbol{\mathcal{M}} \|_{\boldsymbol{Q},\mathrm{Tn}}
	= -\rho \sum_{j=1}^K \| \widehat{\boldsymbol{M}}_j \|_*.
	$
\end{theorem}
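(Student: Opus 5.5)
The plan is to pass to the transformed domain, where the problem splits slice by slice. The tool is Parseval for the mode-3 product. Since $\mathscr U(\boldsymbol{\mathcal{X}}\times_3\boldsymbol{Q})=\mathscr U(\boldsymbol{\mathcal{X}})R_{\boldsymbol{Q}}$ with $R_{\boldsymbol{Q}}$ unitary, for all $\boldsymbol{\mathcal{A}},\boldsymbol{\mathcal{B}}$ we have
\[
\langle \boldsymbol{\mathcal{A}},\boldsymbol{\mathcal{B}}\rangle_{\mathrm F}=\operatorname{Re}\sum_{j=1}^K\langle \widehat{\boldsymbol{A}}_j,\widehat{\boldsymbol{B}}_j\rangle_{\mathrm F}.
\]
Here the real part is needed only when $\boldsymbol{Q}$ is complex; it is harmless because the left-hand side is real for real tensors. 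Under this identity the objective $\langle\boldsymbol{\mathcal{M}},\boldsymbol{\mathcal{D}}\rangle_{\mathrm F}$ becomes $\operatorname{Re}\sum_j\langle\widehat{\boldsymbol{M}}_j,\widehat{\boldsymbol{D}}_j\rangle_{\mathrm F}$. The constraint $\|\boldsymbol{\mathcal{D}}\|_{\boldsymbol{Q},\mathrm{Ts}}\le\rho$ becomes $\|\widehat{\boldsymbol{D}}_j\|_{\mathrm{op}}\le\rho$ for every $j$, which is just Definition~\ref{def:q-norms}. The map $\boldsymbol{\mathcal{D}}\mapsto\widehat{\boldsymbol{\mathcal{D}}}$ is a bijection with inverse $\times_3\boldsymbol{Q}^{\mathsf H}$, so we may optimize over $\widehat{\boldsymbol{\mathcal{D}}}$ directly. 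The resulting problem is separable across $j$ with a box of operator-norm constraints.

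\textbf{Lower bound.} For each slice, matrix von Neumann / Hölder duality (operator–nuclear, Appendix~\ref{app:preliminaries}) gives $\operatorname{Re}\langle\widehat{\boldsymbol{M}}_j,\widehat{\boldsymbol{D}}_j\rangle\ge-\|\widehat{\boldsymbol{D}}_j\|_{\mathrm{op}}\|\widehat{\boldsymbol{M}}_j\|_*\ge-\rho\|\widehat{\boldsymbol{M}}_j\|_*$. Summing over $j$ shows that every feasible $\boldsymbol{\mathcal{D}}$ has objective at least $-\rho\sum_j\|\widehat{\boldsymbol{M}}_j\|_*=-\rho\|\boldsymbol{\mathcal{M}}\|_{\boldsymbol{Q},\mathrm{Tn}}$. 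This is essentially the content of Proposition~\ref{prop:q-dual} applied to $-\boldsymbol{\mathcal{D}}/\rho$, so I would simply invoke that proposition, scaled by $\rho$.

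\textbf{Attainment.} Take $\boldsymbol{\mathcal{D}}=-\rho\boldsymbol{\mathcal{P}}$. Then $\widehat{\boldsymbol{\mathcal{D}}}=-\rho\,\boldsymbol{\mathcal{P}}\times_3\boldsymbol{Q}=-\rho\,\widehat{\boldsymbol{\mathcal{P}}}$, because $\boldsymbol{Q}^{\mathsf H}\boldsymbol{Q}=I$ gives $(\cdot\times_3\boldsymbol{Q}^{\mathsf H})\times_3\boldsymbol{Q}=\cdot\times_3(\boldsymbol{Q}\boldsymbol{Q}^{\mathsf H})$, which is the identity. Hence $\widehat{\boldsymbol{D}}_j=-\rho\,\boldsymbol{U}_j\boldsymbol{V}_j^{\mathsf H}$ has operator norm at most $\rho$ (exactly $\rho$ unless the slice is zero), so $\boldsymbol{\mathcal{D}}$ is feasible. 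Moreover $\langle\widehat{\boldsymbol{M}}_j,\boldsymbol{U}_j\boldsymbol{V}_j^{\mathsf H}\rangle=\operatorname{tr}(\boldsymbol{V}_j\boldsymbol{\Sigma}_j\boldsymbol{U}_j^{\mathsf H}\boldsymbol{U}_j\boldsymbol{V}_j^{\mathsf H})=\operatorname{tr}\boldsymbol{\Sigma}_j=\|\widehat{\boldsymbol{M}}_j\|_*$, using the compact SVD. Summing over $j$ shows the lower bound is attained, which proves both membership in the argmin and the stated optimal value.

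\textbf{Main obstacle.} The step needing care is the Parseval identity and the composition rule for mode-3 products with complex unitary $\boldsymbol{Q}$, together with checking that $\boldsymbol{\mathcal{P}}$ is real when $\boldsymbol{\mathcal{M}}$ is real. I would handle the first via the unfolding relation already stated, $\operatorname{tr}(R_{\boldsymbol{Q}}^{\mathsf H}\,\mathscr U(\cdot)^{\mathsf H}\mathscr U(\cdot)\,R_{\boldsymbol{Q}})$, with a consistent convention of conjugating the second argument. For realness, the easy case is the DecorQ choice of a real orthogonal $\boldsymbol{Q}$, where everything stays real. For complex $\boldsymbol{Q}$ such as the DFT, I would either state the result over $\mathbb{C}$ or use the conjugate-symmetry of transformed slices; the polar factor commutes with conjugation, so the inverse transform returns a real tensor.
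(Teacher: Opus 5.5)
Your proposal is correct and follows essentially the same route as the paper. You get the lower bound from the slice-wise operator--nuclear H\"older inequality (equivalently Proposition~\ref{prop:q-dual} rescaled by $\rho$), then show attainment at $-\rho\boldsymbol{\mathcal{P}}$ via Parseval and $\langle\widehat{\boldsymbol{M}}_j,\operatorname{polar}(\widehat{\boldsymbol{M}}_j)\rangle_{\mathrm F}=\|\widehat{\boldsymbol{M}}_j\|_*$.

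Your care about complex $\boldsymbol{Q}$ goes beyond the paper, which ignores the issue. Note, though, that the conjugate-symmetry realness argument works only for DFT-type transforms. For $\boldsymbol{Q}$ a real rotation right-multiplied by $\operatorname{diag}(1,e^{i\varphi})$, the output $\boldsymbol{\mathcal{P}}$ for a real $\boldsymbol{\mathcal{M}}$ can be genuinely complex. So the general fix is your other option: pose the problem over $\mathbb{C}$ with $\operatorname{Re}\langle\cdot,\cdot\rangle_{\mathrm F}$.
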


The update also admits a tensor-algebraic interpretation. Let
$
\boldsymbol{\mathcal{M}}
= \boldsymbol{\mathcal{U}} *_{\boldsymbol{Q}} \boldsymbol{\mathcal{S}} *_{\boldsymbol{Q}} \boldsymbol{\mathcal{V}}^{\top}
$
be the T-SVD  of \(\boldsymbol{\mathcal{M}}\) (see Appendix~\ref{app:qproduct}), where \(\boldsymbol{\mathcal{S}}\) is \(f\)-diagonal with the nonzero singular values of the transformed slices on its diagonal. Let \(\boldsymbol{\mathcal{J}}_{\mathrm{supp}}\) be the \(f\)-diagonal tensor obtained from \(\boldsymbol{\mathcal{S}}\) by replacing every nonzero diagonal entry with \(1\). Analogously to the matrix polar factor \(\boldsymbol{U}\boldsymbol{V}^{\top}\) of $\boldsymbol{A} = \boldsymbol{U} \boldsymbol{\Sigma} \boldsymbol{V}^{\top}$, the tensor polar factor is
$
\operatorname{tpolar}_{\boldsymbol{Q}}(\boldsymbol{\mathcal{M}})
:= \boldsymbol{\mathcal{U}} *_{\boldsymbol{Q}} \boldsymbol{\mathcal{J}}_{\mathrm{supp}} *_{\boldsymbol{Q}} \boldsymbol{\mathcal{V}}^{\top}.
$
The following proposition shows that this tensor-algebraic object is exactly the optimizer map from Theorem~\ref{thm:q-lmo}.

\begin{proposition}[tensor polar equivalence]
	\label{prop:q-tpolar}
	The optimizer map $\boldsymbol{\mathcal{P}}$ equals the partial-isometry factor of the generalized T-SVD:
	$
	\boldsymbol{\mathcal{P}}
	= \operatorname{tpolar}_{\boldsymbol{Q}}(\boldsymbol{\mathcal{M}})
	= \boldsymbol{\mathcal{U}} *_{\boldsymbol{Q}} \boldsymbol{\mathcal{J}}_{\mathrm{supp}} *_{\boldsymbol{Q}} \boldsymbol{\mathcal{V}}^{\top}.
	$
	For $K = 1$, this reduces to matrix Muon.  For $\boldsymbol{Q} = \boldsymbol{I}$, it reduces to independent layer-wise polar maps.
\end{proposition}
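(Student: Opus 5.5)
The plan is to work entirely in the transform domain, where the $\boldsymbol{Q}$-product becomes slice-wise matrix multiplication. We recall the defining property of the $\boldsymbol{Q}$-product from Appendix~\ref{app:qproduct}: for compatible tensors, $\widehat{(\boldsymbol{\mathcal{A}} *_{\boldsymbol{Q}} \boldsymbol{\mathcal{B}})}_j = \widehat{\boldsymbol{A}}_j \widehat{\boldsymbol{B}}_j$ for every $j$, where $\widehat{\cdot}$ denotes $\times_3\boldsymbol{Q}$. We also use that the $\boldsymbol{Q}$-transpose satisfies $\widehat{(\boldsymbol{\mathcal{V}}^{\top})}_j = \widehat{\boldsymbol{V}}_j^{\mathsf H}$. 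The T-SVD $\boldsymbol{\mathcal{M}}=\boldsymbol{\mathcal{U}} *_{\boldsymbol{Q}} \boldsymbol{\mathcal{S}} *_{\boldsymbol{Q}} \boldsymbol{\mathcal{V}}^{\top}$ is constructed slice-wise: $\widehat{\boldsymbol{U}}_j\widehat{\boldsymbol{S}}_j\widehat{\boldsymbol{V}}_j^{\mathsf H}$ is an SVD of $\widehat{\boldsymbol{M}}_j$. From this we get $\widehat{\boldsymbol{M}}_j = \widehat{\boldsymbol{U}}_j \widehat{\boldsymbol{S}}_j \widehat{\boldsymbol{V}}_j^{\mathsf H}$ for each $j$.

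\textbf{Step 1.} Compute the transform of $\operatorname{tpolar}_{\boldsymbol{Q}}(\boldsymbol{\mathcal{M}})$. The tensor $\boldsymbol{\mathcal{J}}_{\mathrm{supp}}$ is defined by its transformed slices $\widehat{\boldsymbol{J}}_j$, namely the 0/1 diagonal indicator of the nonzero entries of $\widehat{\boldsymbol{S}}_j$; this is the natural reading, since $\boldsymbol{\mathcal{S}}$ is $f$-diagonal in the transform domain. The slice-wise product rule then gives
\[
\widehat{\operatorname{tpolar}_{\boldsymbol{Q}}(\boldsymbol{\mathcal{M}})}_j=\widehat{\boldsymbol{U}}_j\widehat{\boldsymbol{J}}_j\widehat{\boldsymbol{V}}_j^{\mathsf H}=\boldsymbol{U}_{r,j}\boldsymbol{V}_{r,j}^{\mathsf H},
\]
where $\boldsymbol{U}_{r,j},\boldsymbol{V}_{r,j}$ are the singular vectors attached to the nonzero singular values. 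This is exactly $\operatorname{polar}(\widehat{\boldsymbol{M}}_j)$, the partial-isometry polar factor. It is independent of the choice of SVD, because the compact polar factor of a matrix is unique even when singular values repeat.

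\textbf{Step 2.} The transformed slices of $\boldsymbol{\mathcal{P}}$ are by definition $\operatorname{polar}(\widehat{\boldsymbol{M}}_j)$. Two tensors with equal $\times_3\boldsymbol{Q}$ images coincide, since $\times_3\boldsymbol{Q}^{\mathsf H}$ inverts $\times_3\boldsymbol{Q}$ for unitary $\boldsymbol{Q}$. Hence $\boldsymbol{\mathcal{P}}=\operatorname{tpolar}_{\boldsymbol{Q}}(\boldsymbol{\mathcal{M}})$. By Theorem~\ref{thm:q-lmo}, this object is also the LMO direction.

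\textbf{Step 3 (special cases).} For $K=1$, $\boldsymbol{Q}$ is a unimodular scalar $q$. The transformed slice is $q\boldsymbol{M}$, and $\operatorname{polar}(q\boldsymbol{M})=q\,\operatorname{polar}(\boldsymbol{M})$. Inverting multiplies by $\bar q$, so the result is $\operatorname{polar}(\boldsymbol{M})$, which is the Muon update. For $\boldsymbol{Q}=\boldsymbol{I}$ the transform is the identity, so $\boldsymbol{\mathcal{P}}(:,:,k)=\operatorname{polar}(\boldsymbol{M}^{(k)})$, i.e.\ independent layer-wise polar maps.

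The main obstacle is bookkeeping rather than substance. For complex $\boldsymbol{Q}$, such as the DFT, the transformed slices are complex, so transposes must be read as conjugate transposes in the transform domain. The factors $\boldsymbol{\mathcal{U}},\boldsymbol{\mathcal{V}}$ must also be interpreted as possibly complex tensors. I would state explicitly that $\boldsymbol{\mathcal{V}}^{\top}$ denotes the $\boldsymbol{Q}$-transpose, whose transformed slices are $\widehat{\boldsymbol{V}}_j^{\mathsf H}$, and verify this single identity from the definition in Appendix~\ref{app:qproduct}. After that, every remaining step is a slice-wise matrix identity.
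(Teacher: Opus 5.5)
Your proof is correct and takes essentially the paper's route: pass to the transform domain, where the T-SVD becomes $K$ slice-wise SVDs; observe that replacing $\widehat{\boldsymbol{S}}_j$ by its support indicator gives $\operatorname{polar}(\widehat{\boldsymbol{M}}_j)$; then invert the transform. Your additions, which the paper leaves implicit, are sound refinements: the uniqueness of the compact polar factor (which makes $\operatorname{tpolar}_{\boldsymbol{Q}}$ well defined), reading $\boldsymbol{\mathcal{J}}_{\mathrm{supp}}$ in the transform domain, using conjugate transposes for complex $\boldsymbol{Q}$, and checking the $K=1$ case explicitly.
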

The proposed direction is thus both the exact LMO for a tensor-spectral trust region and the generalized polar factor of a third-order tensor. Our novelty lies in specializing the \(t\)-product and tensor polar decomposition to cross-layer optimization, which enables the comparative, convergence, and oracle analyses in \S~\ref{subsec:muon-comparison}. These proofs can be found in Appendices~\ref{app:proofs2} and \ref{prop:q-tpolar-proof-appendix}.


\paragraph{Non-redundancy of the Layer Transform.}\label{subsec:nonredundancy-energy} A natural concern is that the layer transform $\boldsymbol{Q}$ might be algebraically redundant, since the update applies $\boldsymbol{Q}$ and later $\boldsymbol{Q}^{\mathsf H}$, the two might simply cancel, leaving no real effect. 
For a TEON-style unfolding polar, a unitary change of layer basis cancels exactly, so it cannot adapt the update to the transform. In TUON, however, the transform is applied \emph{before} the nonlinear slice-wise polar maps, so it changes the tensor partition. This establishes an algebraic distinction between the two approaches. 
The following theorem clarifies when this cancellation occurs.

\begin{theorem}[Unfolding cancellation versus slice-wise non-cancellation]
	\label{thm:nonredundancy}
	Let \(\boldsymbol{M}_{(1)} = \mathscr U(\boldsymbol{\mathcal{M}})\) and \(\boldsymbol{R}_{\boldsymbol{Q}} = \boldsymbol{Q}^{\mathsf T} \otimes \boldsymbol{I}_n\). For the matrix polar map, 
	$
	\operatorname{polar}(\boldsymbol{M}_{(1)} \boldsymbol{R}_{\boldsymbol{Q}}) \boldsymbol{R}_{\boldsymbol{Q}}^{\mathsf H}
	= \operatorname{polar}(\boldsymbol{M}_{(1)}).
	$ 
	Hence a unitary layer transform is redundant under a global unfolding polar. In contrast, under the algebra used by TUON,  
	$
	\bigl( \operatorname{polar}_{\mathrm{slice}}(\boldsymbol{\mathcal{M}} \times_3 \boldsymbol{Q}) \bigr) \times_3 \boldsymbol{Q}^{\mathsf H}
	\neq
	\operatorname{polar}_{\mathrm{slice}}(\boldsymbol{\mathcal{M}}),
	$ cancellation does not occur in general.
\end{theorem}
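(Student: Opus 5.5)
The statement has two parts, and I will treat them separately: an exact identity for the unfolding polar, and a non-identity for the slice-wise polar, which only needs a single counterexample.

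\textbf{Unfolding cancellation.} I will use the standard fact that the polar map is equivariant under right multiplication by a unitary. Write the compact SVD $\boldsymbol{M}_{(1)}=\boldsymbol{U}\boldsymbol{\Sigma}\boldsymbol{V}^{\mathsf H}$. Since $\boldsymbol{R}_{\boldsymbol{Q}}=\boldsymbol{Q}^{\mathsf T}\otimes \boldsymbol{I}_n$ is unitary (as noted before Definition~\ref{def:q-norms}), the matrix $\boldsymbol{V}' := \boldsymbol{R}_{\boldsymbol{Q}}^{\mathsf H}\boldsymbol{V}$ again has orthonormal columns. Hence
\[
\boldsymbol{M}_{(1)}\boldsymbol{R}_{\boldsymbol{Q}}=\boldsymbol{U}\boldsymbol{\Sigma}\boldsymbol{V}'^{\mathsf H}
\]
is a valid compact SVD of $\boldsymbol{M}_{(1)}\boldsymbol{R}_{\boldsymbol{Q}}$. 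It follows that
\[
\operatorname{polar}(\boldsymbol{M}_{(1)}\boldsymbol{R}_{\boldsymbol{Q}})=\boldsymbol{U}\boldsymbol{V}^{\mathsf H}\boldsymbol{R}_{\boldsymbol{Q}}.
\]
Right-multiplying by $\boldsymbol{R}_{\boldsymbol{Q}}^{\mathsf H}$ and using $\boldsymbol{R}_{\boldsymbol{Q}}\boldsymbol{R}_{\boldsymbol{Q}}^{\mathsf H}=\boldsymbol{I}$ gives exactly $\operatorname{polar}(\boldsymbol{M}_{(1)})$. The only subtlety is that the polar factor of a rank-deficient matrix is the partial isometry $\boldsymbol{U}\boldsymbol{V}^{\mathsf H}$ built from the compact SVD. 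I will point out that this object is independent of the choice of SVD, for instance by writing it as $\boldsymbol{A}(\boldsymbol{A}^{\mathsf H}\boldsymbol{A})^{\dagger/2}$. Then the equivariance step is legitimate even when singular values repeat or vanish.

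\textbf{Non-cancellation.} Because the claim is only that equality fails in general, I will exhibit one explicit instance, chosen as small as possible: $m=n=1$ and $K=2$. In this case each slice is a real scalar, and $\operatorname{polar}$ of a scalar $x$ is $\operatorname{sign}(x)$ (with $0\mapsto 0$). Take $\boldsymbol{\mathcal{M}}$ with tube $(2,1)$ and the Haar/DFT-type unitary
\[
\boldsymbol{Q}=\tfrac{1}{\sqrt2}\begin{pmatrix}1&1\\1&-1\end{pmatrix}.
\]
\begin{itemize}
\item Without the transform, $\operatorname{polar}_{\mathrm{slice}}(\boldsymbol{\mathcal{M}})$ has tube $(1,1)$.
\item With the transform, the transformed tube is $(3,1)/\sqrt2$. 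Its signs are $(1,1)$, and applying $\boldsymbol{Q}^{\mathsf H}=\boldsymbol{Q}$ gives $(\sqrt2,0)$.
\end{itemize}
Since $(\sqrt2,0)\neq(1,1)$, the identity fails. This counterexample embeds into any $m,n$: place the scalars as multiples of a fixed rank-one matrix $e_1e_1^{\mathsf T}$, and the same computation goes through slice by slice.

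\textbf{Main obstacle.} I expect the only care point to be the well-definedness of $\operatorname{polar}$ under rank deficiency in the first part; the counterexample itself is routine. It is also worth noting \emph{why} the two situations differ. $\boldsymbol{R}_{\boldsymbol{Q}}$ acts as a single unitary on the whole unfolding, so it commutes with the polar map. By contrast, $\times_3\boldsymbol{Q}$ mixes slices before a slice-wise nonlinearity that is not equivariant under such mixing.
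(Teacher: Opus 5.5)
Your proof is correct and follows the paper's argument essentially step for step. For the unfolding part, both use the observation that $\boldsymbol{M}_{(1)}\boldsymbol{R}_{\boldsymbol{Q}}=\boldsymbol{U}\boldsymbol{\Sigma}(\boldsymbol{R}_{\boldsymbol{Q}}^{\mathsf H}\boldsymbol{V})^{\mathsf H}$ is again a compact SVD; for non-cancellation, both use the identical scalar counterexample (tube $(2,1)$, the $2\times 2$ Hadamard-type $\boldsymbol{Q}$, giving $(\sqrt2,0)\neq(1,1)$). Your extra remarks, that the polar factor does not depend on the choice of SVD and that the example embeds into general $m,n$, are sound refinements that the paper leaves implicit.
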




\begin{proposition}
	\label{prop:rank-energy}
	Let $r_j := \operatorname{rank}(\widehat{\boldsymbol{M}}_j)$.  Then $
	\| \boldsymbol{\mathcal{P}} \|_{\mathrm F}^2
	= \sum_{j=1}^K r_j
	$ and $
	\operatorname{RMS}(\boldsymbol{\mathcal{P}})
	= \sqrt{ \sum_j r_j/(mnK)  }.
	$
	If slices are full rank, then
	$
	\operatorname{RMS}(\boldsymbol{\mathcal{P}}_{\boldsymbol{Q}})
	= 1/(\sqrt{\max(m,n)}),
	$
	independently of $K$.
\end{proposition}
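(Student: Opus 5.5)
The plan is to work slice-by-slice in the transformed domain and then move back to the original domain using that $\times_3 \boldsymbol{Q}^{\mathsf H}$ preserves the Frobenius norm. The key observation is that the mode-3 product with a unitary matrix acts on each tube $\boldsymbol{\mathcal{X}}(i,j,:)$ as multiplication by a unitary $K\times K$ matrix. Concretely, we have
\[
\|\boldsymbol{\mathcal{X}}\times_3\boldsymbol{Q}\|_{\mathrm F}^2=\sum_{i,j}\|\boldsymbol{Q}\,\boldsymbol{\mathcal{X}}(i,j,:)\|_2^2=\|\boldsymbol{\mathcal{X}}\|_{\mathrm F}^2 ,
\]
and the same holds with $\boldsymbol{Q}^{\mathsf H}$. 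Equivalently, since $\mathscr U(\boldsymbol{\mathcal{X}}\times_3\boldsymbol{Q})=\mathscr U(\boldsymbol{\mathcal{X}})R_{\boldsymbol{Q}}$ with $R_{\boldsymbol{Q}}$ unitary, the Frobenius norm of the unfolding is unchanged. Hence $\|\boldsymbol{\mathcal{P}}\|_{\mathrm F}^2=\|\widehat{\boldsymbol{\mathcal{P}}}\|_{\mathrm F}^2=\sum_{j=1}^K\|\widehat{\boldsymbol{P}}_j\|_{\mathrm F}^2$. Here $\widehat{\boldsymbol{\mathcal{P}}}=\boldsymbol{\mathcal{P}}\times_3\boldsymbol{Q}$, because $\boldsymbol{Q}\boldsymbol{Q}^{\mathsf H}=I$.

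Next, compute each slice norm. With the compact SVD $\widehat{\boldsymbol{M}}_j=\boldsymbol{U}_j\boldsymbol{\Sigma}_j\boldsymbol{V}_j^{\mathsf H}$ of rank $r_j$, the polar factor is $\widehat{\boldsymbol{P}}_j=\boldsymbol{U}_j\boldsymbol{V}_j^{\mathsf H}$, whose columns of $\boldsymbol{U}_j,\boldsymbol{V}_j$ are orthonormal. Then
\[
\|\widehat{\boldsymbol{P}}_j\|_{\mathrm F}^2=\operatorname{tr}(\boldsymbol{V}_j\boldsymbol{U}_j^{\mathsf H}\boldsymbol{U}_j\boldsymbol{V}_j^{\mathsf H})=\operatorname{tr}(\boldsymbol{V}_j^{\mathsf H}\boldsymbol{V}_j)=r_j .
\]
In other words, the polar factor has all nonzero singular values equal to $1$, and there are exactly $r_j$ of them. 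The one subtlety is that the paper writes $\boldsymbol{U}\boldsymbol{V}^\top$, which is ambiguous for rank-deficient slices. I will fix the convention that the polar factor uses the compact SVD, so that zero singular directions are dropped. This matches $\boldsymbol{\mathcal{J}}_{\mathrm{supp}}$ in Proposition~\ref{prop:q-tpolar}, and with it the identity $\|\boldsymbol{\mathcal{P}}\|_{\mathrm F}^2=\sum_j r_j$ follows.

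The RMS formula is then immediate from the definition: $\operatorname{RMS}(\boldsymbol{\mathcal{P}})^2=\|\boldsymbol{\mathcal{P}}\|_{\mathrm F}^2/(mnK)$. For the full-rank case, $r_j=\min(m,n)$ for every $j$, so
\[
\operatorname{RMS}^2=\frac{K\min(m,n)}{mnK}=\frac{1}{\max(m,n)} .
\]
This uses $mn=\min(m,n)\max(m,n)$ and is independent of $K$. I expect no step to be a real obstacle. The only points needing care are the unitary-invariance step when $\boldsymbol{Q}$ is complex, which requires the Hermitian inner product on tubes, and stating the compact-SVD convention for $\operatorname{polar}$ on rank-deficient slices.
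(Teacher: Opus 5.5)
Your proposal is correct and follows essentially the same route as the paper. Both arguments compute $\|\operatorname{polar}(\widehat{\boldsymbol{M}}_j)\|_{\mathrm F}^2=r_j$ slice by slice, sum over $j$, pass back to the original domain by unitary invariance of $\times_3\boldsymbol{Q}^{\mathsf H}$ (the paper's tube-wise Parseval lemma), and then read off the RMS and the full-rank value $1/\sqrt{\max(m,n)}$. The compact-SVD convention you adopt for rank-deficient slices, with $\operatorname{polar}(\boldsymbol{0})=\boldsymbol{0}$, is exactly the one the paper fixes in its preliminaries, so it introduces no extra assumption.
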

Since the polar map normalizes all nonzero singular values to one, the update energy depends only on the number of active singular directions, not on the momentum magnitude. Proposition~\ref{prop:rank-energy} gives this energy exactly and shows that stacking more layers does not inflate the RMS by a factor of \(\sqrt{K}\), which justifies the stack-scale factor for normalization in \S\ref{Polarization and inverse transform}. Proofs are in Appendix~\ref{app:proofs4} and~\ref{app:proofs5}.


\subsection{Comparison with Muon and TEON}
\label{subsec:muon-comparison}


In this part,  we show that any unitary layer transform distorts the Muon geometry by at most a factor of \(\sqrt K\), where \(K\) is the number of stacked layers.  
The precise Muon norm pair and its polar LMO are reviewed in Appendix~\ref{app:muon-baseline}.  Here we only use the notation \(\|\cdot\|_{\mathrm{Muon}}\) and \(\|\cdot\|_{\mathrm{Muon},*}\).


\begin{theorem}
	\label{thm:q-muon-comparison}
	Define $
	\alpha_{\boldsymbol{Q}}:=\max_j\sum_{k=1}^K|\boldsymbol{Q}_{jk}|$ and $
	\beta_{\boldsymbol{Q}}:=\max_k\sum_{j=1}^K|\boldsymbol{Q}_{jk}|.
	$ Since \(\boldsymbol{Q}\) is unitary, \(\alpha_{\boldsymbol{Q}},\beta_{\boldsymbol{Q}}\le\sqrt K\).
	Then, for every tensor \(\boldsymbol{\mathcal{X}}\) and for their duals \(\boldsymbol{\mathcal{G}}\),
	\begin{equation}
		\begin{aligned}
			&\frac{1}{\beta_{\boldsymbol{Q}}}\|\boldsymbol{\mathcal{X}}\|_{\mathrm{Muon}}
			\le\|\boldsymbol{\mathcal{X}}\|_{\boldsymbol{Q},\mathrm{Ts}}
			\le\alpha_{\boldsymbol{Q}}\|\boldsymbol{\mathcal{X}}\|_{\mathrm{Muon}}, \quad
			\frac{1}{\alpha_{\boldsymbol{Q}}}\|\boldsymbol{\mathcal{G}}\|_{\mathrm{Muon},*}
			\le\|\boldsymbol{\mathcal{G}}\|_{\boldsymbol{Q},\mathrm{Tn}}
			\le\beta_{\boldsymbol{Q}}\|\boldsymbol{\mathcal{G}}\|_{\mathrm{Muon},*}.
		\end{aligned}
		\label{eq:q-muon-primal}
	\end{equation}
\end{theorem}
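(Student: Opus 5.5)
We take \(\|\boldsymbol{\mathcal{X}}\|_{\mathrm{Muon}}=\max_k\|\boldsymbol{X}_k\|_{\mathrm{op}}\) and its dual \(\|\boldsymbol{\mathcal{G}}\|_{\mathrm{Muon},*}=\sum_k\|\boldsymbol{G}_k\|_*\), i.e.\ the \(\boldsymbol{Q}=\boldsymbol{I}\) instances of Definition~\ref{def:q-norms} (the layer-wise geometry reviewed in Appendix~\ref{app:muon-baseline}). The plan is to prove the two primal inequalities directly with the triangle inequality for \(\|\cdot\|_{\mathrm{op}}\), and then obtain the dual inequalities by duality (Proposition~\ref{prop:q-dual}) rather than by a second computation.

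\textbf{Primal upper bound.} Since \(\widehat{\boldsymbol{X}}_j=\sum_k\boldsymbol{Q}_{jk}\boldsymbol{X}_k\), the triangle inequality and homogeneity of the operator norm give
\[
\|\widehat{\boldsymbol{X}}_j\|_{\mathrm{op}}\le\sum_k|\boldsymbol{Q}_{jk}|\,\|\boldsymbol{X}_k\|_{\mathrm{op}}\le\Bigl(\sum_k|\boldsymbol{Q}_{jk}|\Bigr)\max_k\|\boldsymbol{X}_k\|_{\mathrm{op}}.
\]
Taking the maximum over \(j\) yields \(\|\boldsymbol{\mathcal{X}}\|_{\boldsymbol{Q},\mathrm{Ts}}\le\alpha_{\boldsymbol{Q}}\|\boldsymbol{\mathcal{X}}\|_{\mathrm{Muon}}\).

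\textbf{Primal lower bound.} We invert the transform. Unitarity gives \(\boldsymbol{X}_k=\sum_j\overline{\boldsymbol{Q}_{jk}}\,\widehat{\boldsymbol{X}}_j\), because \(\boldsymbol{Q}^{\mathsf H}\) has \((k,j)\) entry \(\overline{\boldsymbol{Q}_{jk}}\). The same estimate then gives \(\|\boldsymbol{X}_k\|_{\mathrm{op}}\le\bigl(\sum_j|\boldsymbol{Q}_{jk}|\bigr)\max_j\|\widehat{\boldsymbol{X}}_j\|_{\mathrm{op}}\). Taking the maximum over \(k\) gives \(\|\boldsymbol{\mathcal{X}}\|_{\mathrm{Muon}}\le\beta_{\boldsymbol{Q}}\|\boldsymbol{\mathcal{X}}\|_{\boldsymbol{Q},\mathrm{Ts}}\).

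\textbf{The bounds \(\alpha_{\boldsymbol{Q}},\beta_{\boldsymbol{Q}}\le\sqrt K\).} Each row and each column of a unitary matrix has unit \(\ell_2\) norm, so Cauchy--Schwarz gives \(\sum_k|\boldsymbol{Q}_{jk}|\le\sqrt K\) and likewise for column sums.

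\textbf{Dual inequalities.} We use the general fact that if \(a\|x\|_A\le\|x\|_B\le b\|x\|_A\) for all \(x\), then \(b^{-1}\|g\|_{A^*}\le\|g\|_{B^*}\le a^{-1}\|g\|_{A^*}\). This holds because the unit ball of \(\|\cdot\|_B\) is sandwiched between the \(\|\cdot\|_A\)-balls of radius \(1/b\) and \(1/a\), and the dual norm is the supremum of \(\langle g,\cdot\rangle\) over the unit ball. Apply it with \(A=\mathrm{Muon}\), \(B=(\boldsymbol{Q},\mathrm{Ts})\), \(a=1/\beta_{\boldsymbol{Q}}\) and \(b=\alpha_{\boldsymbol{Q}}\). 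Since Proposition~\ref{prop:q-dual} identifies \(B^*\) as \(\|\cdot\|_{\boldsymbol{Q},\mathrm{Tn}}\), this yields exactly
\[
\alpha_{\boldsymbol{Q}}^{-1}\|\boldsymbol{\mathcal{G}}\|_{\mathrm{Muon},*}\le\|\boldsymbol{\mathcal{G}}\|_{\boldsymbol{Q},\mathrm{Tn}}\le\beta_{\boldsymbol{Q}}\|\boldsymbol{\mathcal{G}}\|_{\mathrm{Muon},*}.
\]

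\textbf{Main obstacle.} The only delicate step is the dual step when \(\boldsymbol{Q}\) is complex. The pairing \(\langle\cdot,\cdot\rangle_{\mathrm F}\) must be the real part of the Hermitian inner product, and both dual norms must be taken with respect to this same pairing, as in Proposition~\ref{prop:q-dual}. If the Muon dual in the appendix is defined with a different pairing, we would fall back on a direct check. In that case we would bound \(\sum_j\|\widehat{\boldsymbol{G}}_j\|_*\le\sum_j\sum_k|\boldsymbol{Q}_{jk}|\|\boldsymbol{G}_k\|_*\le\beta_{\boldsymbol{Q}}\sum_k\|\boldsymbol{G}_k\|_*\) by swapping the sums, and the lower bound would follow symmetrically through \(\boldsymbol{Q}^{\mathsf H}\).
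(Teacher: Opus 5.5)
Your proposal is correct and follows the paper's proof essentially step for step. The primal bounds come from the triangle inequality applied to $\widehat{\boldsymbol{X}}_j=\sum_k \boldsymbol{Q}_{jk}\boldsymbol{X}_k$ and to the inverse $\boldsymbol{X}_k=\sum_j\overline{\boldsymbol{Q}_{jk}}\,\widehat{\boldsymbol{X}}_j$. The dual bounds come from the dual order-reversal lemma, with the Muon dual taken as the $\boldsymbol{Q}=\boldsymbol{I}$ case of Proposition~\ref{prop:q-dual}. Finally, $\alpha_{\boldsymbol{Q}},\beta_{\boldsymbol{Q}}\le\sqrt K$ follows from Cauchy--Schwarz on the unit rows and columns.

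Your fallback direct check, which uses the nuclear-norm triangle inequality and swaps the two sums, is also valid. It has the small advantage of not depending on how the dual pairing is set up when $\boldsymbol{Q}$ is complex.
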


The bounds, 
$
\frac{1}{\beta_{\boldsymbol{Q}}}\|\boldsymbol{\mathcal{X}}\|_{\mathrm{Muon}}
\le \|\boldsymbol{\mathcal{X}}\|_{\boldsymbol{Q},\mathrm{Ts}}
\le \alpha_{\boldsymbol{Q}}\|\boldsymbol{\mathcal{X}}\|_{\mathrm{Muon}}
$, 
ensure that the transform neither collapses nor explodes the trust-region scale. Their dual counterparts control the gradient's nuclear norm: the best first-order decrease under the \(\boldsymbol{Q}\) geometry is at least \(1/\sqrt{K}\) and at most \(\sqrt{K}\) times that under the layer-wise Muon geometry. This is the key stability statement: \emph{no unitary layer transform can distort the Muon geometry by more than a factor of \(\sqrt{K}\), regardless of the data or the choice of \(\boldsymbol{Q}\).}

\begin{corollary}[Smoothness comparison]
	\label{cor:q-muon-smoothness}
	Let $L_{\boldsymbol{Q}}$ and $L_{\mathrm{Muon}}$ be the smallest constants satisfying
	$
	\|\nabla F(\boldsymbol{\mathcal{X}})-\nabla F(\boldsymbol{\mathcal{Y}})\|_{{\boldsymbol{Q}},\mathrm{Tn}}
	\le L_{\boldsymbol{Q}}\|\boldsymbol{\mathcal{X}}-\boldsymbol{\mathcal{Y}}\|_{{\boldsymbol{Q}},\mathrm{Ts}}
	$
	and the analogous inequality under the Muon norm pair.  Then
	$
	\frac{1}{\alpha_{\boldsymbol{Q}}^2}L_{\mathrm{Muon}}
	\le L_{\boldsymbol{Q}}
	\le\beta_{\boldsymbol{Q}}^2L_{\mathrm{Muon}}.
	$
\end{corollary}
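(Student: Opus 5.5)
The plan is to derive the corollary directly from the two-sided norm equivalences of Theorem~\ref{thm:q-muon-comparison}, treating $L_{\boldsymbol{Q}}$ and $L_{\mathrm{Muon}}$ as operator-type quantities. Write each as a supremum of a ratio:
\[
L_{\boldsymbol{Q}}=\sup_{\boldsymbol{\mathcal{X}}\neq\boldsymbol{\mathcal{Y}}}\frac{\|\nabla F(\boldsymbol{\mathcal{X}})-\nabla F(\boldsymbol{\mathcal{Y}})\|_{\boldsymbol{Q},\mathrm{Tn}}}{\|\boldsymbol{\mathcal{X}}-\boldsymbol{\mathcal{Y}}\|_{\boldsymbol{Q},\mathrm{Ts}}},
\]
and analogously for $L_{\mathrm{Muon}}$ with the Muon pair. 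Minimality of both constants means they are exactly these suprema, assuming both are finite; all norms on the finite-dimensional space are equivalent, so one is finite if and only if the other is. It then suffices to bound the ratio pointwise for each fixed pair $(\boldsymbol{\mathcal{X}},\boldsymbol{\mathcal{Y}})$.

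\textbf{Upper bound.} Set $\boldsymbol{\mathcal{G}}=\nabla F(\boldsymbol{\mathcal{X}})-\nabla F(\boldsymbol{\mathcal{Y}})$ and $\boldsymbol{\mathcal{D}}=\boldsymbol{\mathcal{X}}-\boldsymbol{\mathcal{Y}}$. The right dual inequality gives $\|\boldsymbol{\mathcal{G}}\|_{\boldsymbol{Q},\mathrm{Tn}}\le\beta_{\boldsymbol{Q}}\|\boldsymbol{\mathcal{G}}\|_{\mathrm{Muon},*}$. Muon smoothness bounds the latter by $L_{\mathrm{Muon}}\|\boldsymbol{\mathcal{D}}\|_{\mathrm{Muon}}$. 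The left primal inequality gives $\|\boldsymbol{\mathcal{D}}\|_{\mathrm{Muon}}\le\beta_{\boldsymbol{Q}}\|\boldsymbol{\mathcal{D}}\|_{\boldsymbol{Q},\mathrm{Ts}}$. Chaining these yields $\|\boldsymbol{\mathcal{G}}\|_{\boldsymbol{Q},\mathrm{Tn}}\le\beta_{\boldsymbol{Q}}^2L_{\mathrm{Muon}}\|\boldsymbol{\mathcal{D}}\|_{\boldsymbol{Q},\mathrm{Ts}}$. Since $L_{\boldsymbol{Q}}$ is the smallest such constant, $L_{\boldsymbol{Q}}\le\beta_{\boldsymbol{Q}}^2L_{\mathrm{Muon}}$.

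\textbf{Lower bound.} Reverse the roles. Start from $\|\boldsymbol{\mathcal{G}}\|_{\mathrm{Muon},*}\le\alpha_{\boldsymbol{Q}}\|\boldsymbol{\mathcal{G}}\|_{\boldsymbol{Q},\mathrm{Tn}}$, which is the left dual inequality. Apply $\boldsymbol{Q}$-smoothness to get $\le\alpha_{\boldsymbol{Q}}L_{\boldsymbol{Q}}\|\boldsymbol{\mathcal{D}}\|_{\boldsymbol{Q},\mathrm{Ts}}$. Then the right primal inequality $\|\boldsymbol{\mathcal{D}}\|_{\boldsymbol{Q},\mathrm{Ts}}\le\alpha_{\boldsymbol{Q}}\|\boldsymbol{\mathcal{D}}\|_{\mathrm{Muon}}$ gives $\le\alpha_{\boldsymbol{Q}}^2L_{\boldsymbol{Q}}\|\boldsymbol{\mathcal{D}}\|_{\mathrm{Muon}}$. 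Minimality of $L_{\mathrm{Muon}}$ then gives $L_{\mathrm{Muon}}\le\alpha_{\boldsymbol{Q}}^2L_{\boldsymbol{Q}}$, which is the claimed lower bound.

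The only delicate point is bookkeeping. One must check that each step uses the correct side of the inequalities in \eqref{eq:q-muon-primal}: $\beta$ pairs with the upper bound and $\alpha$ with the lower bound, for both the primal and dual norms. The argument also needs the corollary's definitions of $L_{\boldsymbol{Q}}$ and $L_{\mathrm{Muon}}$ to be the smallest constants, so that they coincide with the suprema. No further structure of $\boldsymbol{Q}$ is needed beyond Theorem~\ref{thm:q-muon-comparison}. Combined with $\alpha_{\boldsymbol{Q}},\beta_{\boldsymbol{Q}}\le\sqrt K$, this immediately gives $L_{\mathrm{Muon}}/K\le L_{\boldsymbol{Q}}\le K\,L_{\mathrm{Muon}}$.
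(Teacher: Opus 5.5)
Your proposal is correct and follows essentially the same route as the paper. Both arguments get the upper estimate by combining $\|\cdot\|_{\boldsymbol{Q},\mathrm{Tn}}\le\beta_{\boldsymbol{Q}}\|\cdot\|_{\mathrm{Muon},*}$ with $\|\cdot\|_{\mathrm{Muon}}\le\beta_{\boldsymbol{Q}}\|\cdot\|_{\boldsymbol{Q},\mathrm{Ts}}$, get the lower estimate from the matching $\alpha_{\boldsymbol{Q}}$ inequalities, and then use that the smallest constant equals the supremum. The differences are cosmetic: you chain the inequalities through the smoothness bound directly, while the paper bounds the ratio pointwise and then takes suprema; your finiteness remark via norm equivalence is a small detail the paper leaves implicit.
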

In Corollary~\ref{cor:q-muon-smoothness}, the induced smoothness constants differ by at most a factor $K$.  This establishes the maximum scale at which a convergence improvement can arise, while also preventing an unsupported claim of arbitrary acceleration from a mere coordinate change.




\begin{theorem}[LMO gain with TEON]
	\label{thm:teon-q-sandwich}
	For every unitary $\boldsymbol{Q}$ and tensor $\boldsymbol{\mathcal{X}}$, $
	\|\boldsymbol{\mathcal{X}}\|_{\boldsymbol{Q},\mathrm{Ts}}
	\le\|\boldsymbol{\mathcal{X}}\|_{\mathrm{TEON}}
	\le\sqrt K\,\|\boldsymbol{\mathcal{X}}\|_{\boldsymbol{Q},\mathrm{Ts}}.
	$
	Consequently, $
	\|\boldsymbol{\mathcal{G}}\|_{\mathrm{TEON},*}
	\le\|\boldsymbol{\mathcal{G}}\|_{\boldsymbol{Q},\mathrm{Tn}}
	\le\sqrt K\,\|\boldsymbol{\mathcal{G}}\|_{\mathrm{TEON},*}.
	$
	Thus, at a common radius $\rho$, $
	\rho\|\boldsymbol{\mathcal{G}}\|_{\mathrm{TEON},*}
	\le\rho\|\boldsymbol{\mathcal{G}}\|_{\boldsymbol{Q},\mathrm{Tn}}
	\le\sqrt K\,\rho\|\boldsymbol{\mathcal{G}}\|_{\mathrm{TEON},*}.
	$
\end{theorem}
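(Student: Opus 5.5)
The plan is to reduce everything to a comparison between the operator norm of a block row matrix and the operator norms of its blocks. Throughout, I take $\|\boldsymbol{\mathcal{X}}\|_{\mathrm{TEON}}=\|\mathscr U(\boldsymbol{\mathcal{X}})\|_{\mathrm{op}}$, the spectral norm of the mode-1 unfolding, as in Appendix~\ref{app:baselines}. If TEON instead uses the mode-2 unfolding, the same argument goes through with block columns in place of block rows.

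\textbf{Step 1: move to the transformed domain.} By the identity $\mathscr U(\boldsymbol{\mathcal{X}}\times_3\boldsymbol{Q})=\mathscr U(\boldsymbol{\mathcal{X}})R_{\boldsymbol{Q}}$, and since $R_{\boldsymbol{Q}}$ is unitary, right multiplication by it preserves singular values. Hence
\[
\|\boldsymbol{\mathcal{X}}\|_{\mathrm{TEON}}=\|\mathscr U(\widehat{\boldsymbol{\mathcal{X}}})\|_{\mathrm{op}}=\bigl\|[\widehat{\boldsymbol{X}}_1,\ldots,\widehat{\boldsymbol{X}}_K]\bigr\|_{\mathrm{op}}.
\]
This is the same unitary-invariance fact that underlies the cancellation in Theorem~\ref{thm:nonredundancy}. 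Up to a column permutation, which is also unitary, the unfolding is the horizontal concatenation of the transformed slices.

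\textbf{Step 2: the primal sandwich.} Write $A=[\widehat{\boldsymbol{X}}_1,\ldots,\widehat{\boldsymbol{X}}_K]$.

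For the lower bound, each $\widehat{\boldsymbol{X}}_j=A\,E_j$, where $E_j$ is the column-selection matrix with $\|E_j\|_{\mathrm{op}}=1$. Therefore $\|\widehat{\boldsymbol{X}}_j\|_{\mathrm{op}}\le\|A\|_{\mathrm{op}}$ for every $j$. Taking the maximum over $j$ gives $\|\boldsymbol{\mathcal{X}}\|_{\boldsymbol{Q},\mathrm{Ts}}\le\|\boldsymbol{\mathcal{X}}\|_{\mathrm{TEON}}$.

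For the upper bound, use $AA^{\mathsf H}=\sum_j\widehat{\boldsymbol{X}}_j\widehat{\boldsymbol{X}}_j^{\mathsf H}$. The triangle inequality gives
\[
\|A\|_{\mathrm{op}}^2\le\sum_j\|\widehat{\boldsymbol{X}}_j\|_{\mathrm{op}}^2\le K\max_j\|\widehat{\boldsymbol{X}}_j\|_{\mathrm{op}}^2,
\]
which yields the factor $\sqrt K$.

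\textbf{Step 3: dualize.} For two norms with $a\|\cdot\|_1\le\|\cdot\|_2\le b\|\cdot\|_1$, the dual norms satisfy $\tfrac1b\|\cdot\|_{1,*}\le\|\cdot\|_{2,*}\le\tfrac1a\|\cdot\|_{1,*}$. This follows because the unit ball of the larger norm is contained in the unit ball of the smaller one, so the supremum defining the dual norm can only shrink.

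Apply this with $\|\cdot\|_1=\|\cdot\|_{\boldsymbol{Q},\mathrm{Ts}}$, $\|\cdot\|_2=\|\cdot\|_{\mathrm{TEON}}$, $a=1$ and $b=\sqrt K$. This gives $\tfrac1{\sqrt K}\|\boldsymbol{\mathcal{G}}\|_{\boldsymbol{Q},\mathrm{Tn}}\le\|\boldsymbol{\mathcal{G}}\|_{\mathrm{TEON},*}\le\|\boldsymbol{\mathcal{G}}\|_{\boldsymbol{Q},\mathrm{Tn}}$, which is the claimed dual sandwich after rearranging. Here I use Proposition~\ref{prop:q-dual} to identify the dual of $\|\cdot\|_{\boldsymbol{Q},\mathrm{Ts}}$ as $\|\cdot\|_{\boldsymbol{Q},\mathrm{Tn}}$. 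I take the TEON dual to be the nuclear norm of the unfolding, as stated in Appendix~\ref{app:baselines}; only its definition as the dual of $\|\cdot\|_{\mathrm{TEON}}$ is actually needed.

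\textbf{Step 4: the LMO statement.} By Theorem~\ref{thm:q-lmo}, the optimal first-order decrease over the $\rho$-ball equals $\rho$ times the dual norm. The same holds for TEON's trust region, by definition of the dual norm. Multiplying the dual sandwich by $\rho$ gives the final claim.

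\textbf{Main obstacle.} The only delicate point is bookkeeping in Step 1. I need to confirm that $\mathscr U(\widehat{\boldsymbol{\mathcal{X}}})$ is exactly a column permutation of $[\widehat{\boldsymbol{X}}_1,\ldots,\widehat{\boldsymbol{X}}_K]$ under the paper's unfolding convention. I also need to handle complex $\boldsymbol{Q}$: the Frobenius pairing must be the real part of the Hermitian inner product, so that duality still holds. The inequalities themselves are elementary.
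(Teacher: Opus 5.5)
Your proposal is correct and follows the paper's proof essentially step for step. You pass to $A=[\widehat{\boldsymbol{X}}_1,\ldots,\widehat{\boldsymbol{X}}_K]$ using unitarity of $R_{\boldsymbol{Q}}$, bound $\|A\|_{\mathrm{op}}$ below by each block and above via $AA^{\mathsf H}=\sum_j\widehat{\boldsymbol{X}}_j\widehat{\boldsymbol{X}}_j^{\mathsf H}$, then dualize with the order-reversal lemma and multiply by $\rho$; your selection matrices $E_j$ are the paper's restriction to vectors supported on block $j$. Under the paper's convention $\mathscr U(\boldsymbol{\mathcal{X}}\times_3\boldsymbol{Q})=\mathscr U(\boldsymbol{\mathcal{X}})(\boldsymbol{Q}^{\mathsf T}\otimes\boldsymbol{I}_n)$, the transformed unfolding is exactly this concatenation, so no column permutation is needed.
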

Then, we compare the linearized decrease with TEON. The TEON norm/dual norm, and TEON polar update are stated in Appendix~\ref{app:teon-baseline}.  
Thus, at equal operator radius, the exact TUON first-order oracle is never weaker than the TEON oracle and may be up to $\sqrt K$ stronger.  These proofs can be found in Appendix~\ref{thm:q-muon-comparison-proof-appendix}, \ref{cor:q-muon-smoothness-proof-appendix} and \ref{thm:teon-q-sandwich-proof-appendix}.

\subsection{Convergence}
\label{subsec:convergence-theory}

\begin{theorem}
	\label{thm:correlation-gain}
	Let \(F\) be the training objective, lower bounded by \(F_\star\), \(\eta\) is the step size and \(T\) is the number of iterations. 
	Suppose $F$ is lower bounded by $F_\star$ and is $L_{\boldsymbol{Q}}$-smooth under $\|\cdot\|_{\boldsymbol{Q},\mathrm{Ts}}$.  For
	$
	\boldsymbol{\mathcal{W}}_{t+1}
	=\boldsymbol{\mathcal{W}}_t-\eta\boldsymbol{\mathcal{P}}_{\boldsymbol{Q}}(\nabla F(\boldsymbol{\mathcal{W}}_t)),
	$
	we have
	$
	\min_{0\le t<T}\|\nabla F(\boldsymbol{\mathcal{W}}_t)\|_{\boldsymbol{Q},\mathrm{Tn}}
	\le\frac{F(\boldsymbol{\mathcal{W}}_0)-F_\star}{\eta T}+\frac{L_{\boldsymbol{Q}}\eta}{2}.
	$
	Choosing
	$
	\eta=\sqrt{\frac{2(F(\boldsymbol{\mathcal{W}}_0)-F_\star)}{L_{\boldsymbol{Q}}T}}
	$
	gives
	\begin{equation}
		\min_{0\le t<T}\|\nabla F(\boldsymbol{\mathcal{W}}_t)\|_{\mathrm F}
		\le\min_{0\le t<T}\|\nabla F(\boldsymbol{\mathcal{W}}_t)\|_{\boldsymbol{Q},\mathrm{Tn}}
		\le\sqrt{\frac{2L_{\boldsymbol{Q}}(F(\boldsymbol{\mathcal{W}}_0)-F_\star)}{T}}.
		\label{eq:q-rate-optimized}
	\end{equation}
\end{theorem}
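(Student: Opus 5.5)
The plan is the standard descent-lemma argument for normalized steepest descent under a non-Euclidean norm, with Theorem~\ref{thm:q-lmo} supplying the linear decrease.

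\textbf{Step 1: descent inequality.} By $L_{\boldsymbol{Q}}$-smoothness with respect to the pair $(\|\cdot\|_{\boldsymbol{Q},\mathrm{Ts}},\|\cdot\|_{\boldsymbol{Q},\mathrm{Tn}})$, where Proposition~\ref{prop:q-dual} makes these a dual pair, the usual integral argument gives
\[
F(\boldsymbol{\mathcal{Y}})\le F(\boldsymbol{\mathcal{X}})+\langle\nabla F(\boldsymbol{\mathcal{X}}),\boldsymbol{\mathcal{Y}}-\boldsymbol{\mathcal{X}}\rangle_{\mathrm F}+\tfrac{L_{\boldsymbol{Q}}}{2}\|\boldsymbol{\mathcal{Y}}-\boldsymbol{\mathcal{X}}\|_{\boldsymbol{Q},\mathrm{Ts}}^2 .
\]
To see this, write $F(\boldsymbol{\mathcal{Y}})-F(\boldsymbol{\mathcal{X}})$ as an integral of $\langle\nabla F(\boldsymbol{\mathcal{X}}+s\boldsymbol{\mathcal{D}}),\boldsymbol{\mathcal{D}}\rangle$. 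Subtract the linear term and bound the remainder by the dual-norm H\"older inequality, which is itself a consequence of Proposition~\ref{prop:q-dual}.

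\textbf{Step 2: apply Theorem~\ref{thm:q-lmo}.} Plug in $\boldsymbol{\mathcal{D}}=-\eta\boldsymbol{\mathcal{P}}_{\boldsymbol{Q}}(\nabla F(\boldsymbol{\mathcal{W}}_t))$. With $\rho=\eta$, Theorem~\ref{thm:q-lmo} gives
\[
\langle\nabla F,\boldsymbol{\mathcal{D}}\rangle_{\mathrm F}=-\eta\|\nabla F(\boldsymbol{\mathcal{W}}_t)\|_{\boldsymbol{Q},\mathrm{Tn}}
\]
and $\|\boldsymbol{\mathcal{D}}\|_{\boldsymbol{Q},\mathrm{Ts}}\le\eta$. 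The latter holds because every transformed slice of $\boldsymbol{\mathcal{P}}$ is a partial isometry, so it has operator norm at most $1$. Hence
\[
F(\boldsymbol{\mathcal{W}}_{t+1})\le F(\boldsymbol{\mathcal{W}}_t)-\eta\|\nabla F(\boldsymbol{\mathcal{W}}_t)\|_{\boldsymbol{Q},\mathrm{Tn}}+L_{\boldsymbol{Q}}\eta^2/2 .
\]

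\textbf{Step 3: telescope and optimize.} Sum over $t=0,\dots,T-1$ and use $F\ge F_\star$. Bounding the minimum by the average and dividing by $\eta T$ gives the first bound. Substituting the stated $\eta$ balances the two terms, and each becomes $\sqrt{L_{\boldsymbol{Q}}(F(\boldsymbol{\mathcal{W}}_0)-F_\star)/(2T)}$. Their sum is $\sqrt{2L_{\boldsymbol{Q}}(F(\boldsymbol{\mathcal{W}}_0)-F_\star)/T}$.

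\textbf{Step 4: Frobenius comparison.} It remains to show $\|\boldsymbol{\mathcal{G}}\|_{\mathrm F}\le\|\boldsymbol{\mathcal{G}}\|_{\boldsymbol{Q},\mathrm{Tn}}$. Since $\boldsymbol{Q}$ is unitary, the mode-3 product preserves the Frobenius norm, so $\|\boldsymbol{\mathcal{G}}\|_{\mathrm F}^2=\sum_j\|\widehat{\boldsymbol{G}}_j\|_{\mathrm F}^2$. For each slice, $\|\widehat{\boldsymbol{G}}_j\|_{\mathrm F}\le\|\widehat{\boldsymbol{G}}_j\|_*$, because the $\ell_2$ norm of the singular values is at most their $\ell_1$ norm. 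Then
\[
\sqrt{\textstyle\sum_j a_j^2}\le\textstyle\sum_j a_j \quad\text{for } a_j\ge0
\]
finishes the inequality. Taking minima over $t$ on both sides preserves it, because the inequality holds pointwise in $t$.

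\textbf{Main obstacle.} The only delicate point is Step 1 in the complex-$\boldsymbol{Q}$ setting. There I must check two things: that $\langle\cdot,\cdot\rangle_{\mathrm F}$ on real tensors is consistent with the transformed-domain inner product, which follows from unitarity as $\mathrm{Re}\langle\widehat{\cdot},\widehat{\cdot}\rangle$; and that $\boldsymbol{\mathcal{P}}$ is real when $\boldsymbol{Q}$ is real, as DecorQ is. For general complex $\boldsymbol{Q}$ I would assume conjugate-symmetric structure as in the FFT case, or simply work over the real part. Everything else is routine.
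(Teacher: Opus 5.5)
Your proposal is correct and matches the paper's proof step for step. Both use the descent lemma under the $(\|\cdot\|_{\boldsymbol{Q},\mathrm{Ts}},\|\cdot\|_{\boldsymbol{Q},\mathrm{Tn}})$ pair, take the linear decrease and $\|\boldsymbol{\mathcal{D}}\|_{\boldsymbol{Q},\mathrm{Ts}}\le\eta$ from Theorem~\ref{thm:q-lmo}, and finish by telescoping and balancing the step size. The only difference is cosmetic: for the final Frobenius comparison you argue directly (Parseval plus slice-wise $\|\cdot\|_{\mathrm F}\le\|\cdot\|_*$ and $\ell_2\le\ell_1$), whereas the paper proves $\|\boldsymbol{\mathcal{X}}\|_{\boldsymbol{Q},\mathrm{Ts}}\le\|\boldsymbol{\mathcal{X}}\|_{\mathrm F}$ and then applies dual order reversal; both are valid.
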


Corollary~\ref{cor:q-muon-smoothness} bounds the smoothness gap between TUON and Muon by at most a factor \(K\). Together with Theorem~\ref{thm:correlation-gain}, this yields a convergence improvement of at most \(\sqrt{K}\) and a tighter \(O(T^{-1/2})\) stationarity certificate. The proof is in Appendix~\ref{covergence-proof-appendix}.

\section{Experiments}
\label{sec:experiments}

We evaluate TUON across three dimensions to thoroughly assess its effectiveness and generality.
(i) \textbf{Architecture and Scalability.} GPT-124M/350M and Qwen-0.6B/4B verify broad applicability and scalability~(\S \ref{Results on LLM pretraining}). 
(ii) \textbf{Ablation.} Ablations isolate the transformation mechanism, cross-layer stacking beyond two layers, module adaptability beyond QKV, and robustness to the polarization method (\S \ref{sec:ablations}).
(iii) \textbf{Cost.} 
Step-time comparison shows that TUON adds only marginal overhead and scales gracefully (\S \ref{sec:stability-cost}).

\paragraph{Baselines.}
We compare TUON against representative optimizers from three levels of structural granularity. 
(i) \textbf{AdamW} \citep{loshchilov2017decoupled} is the de facto vector-aware standard for large-scale pretraining. 
(ii) \textbf{Muon} \citep{jordan2024muon} is a matrix-aware optimizer that has recently demonstrated remarkable breakthroughs in LLM pretraining. 
(iii) \textbf{TEON} \citep{zhang2026teon} is a recent tensor-aware optimizer that has consistently outperformed Muon across model scales and architectures. Tensorion is also a tensor-aware optimizer, but it has been validated mainly on vision tasks, so we use TEON rather than Tensorion as our tensor-aware baseline. 
Together, these baselines span vector-, matrix-, and tensor-level optimization. 
More details can be found in Appendix~\ref{baseline-appendix}. 

\paragraph{Experimental details.} \label{Experimental details}
All baselines follow \textbf{the same configurations} , ensuring a fair comparison. Comprehensive experimental details, such as computational resources, configurations, implementation specifics, are elaborated in  Appendix~\ref{sec:exp-setup}. 
Our implementation strictly follows the PyTorch official API. 
\textit{Fully functional, production-ready code} is publicly released (see Appendix~\ref{Code and Reproducibility} for link and implement details).

\begin{figure}[!htbp]
	\centering
	\begin{subfigure}[b]{0.45\textwidth}
		\centering
		\includegraphics[width=\textwidth]{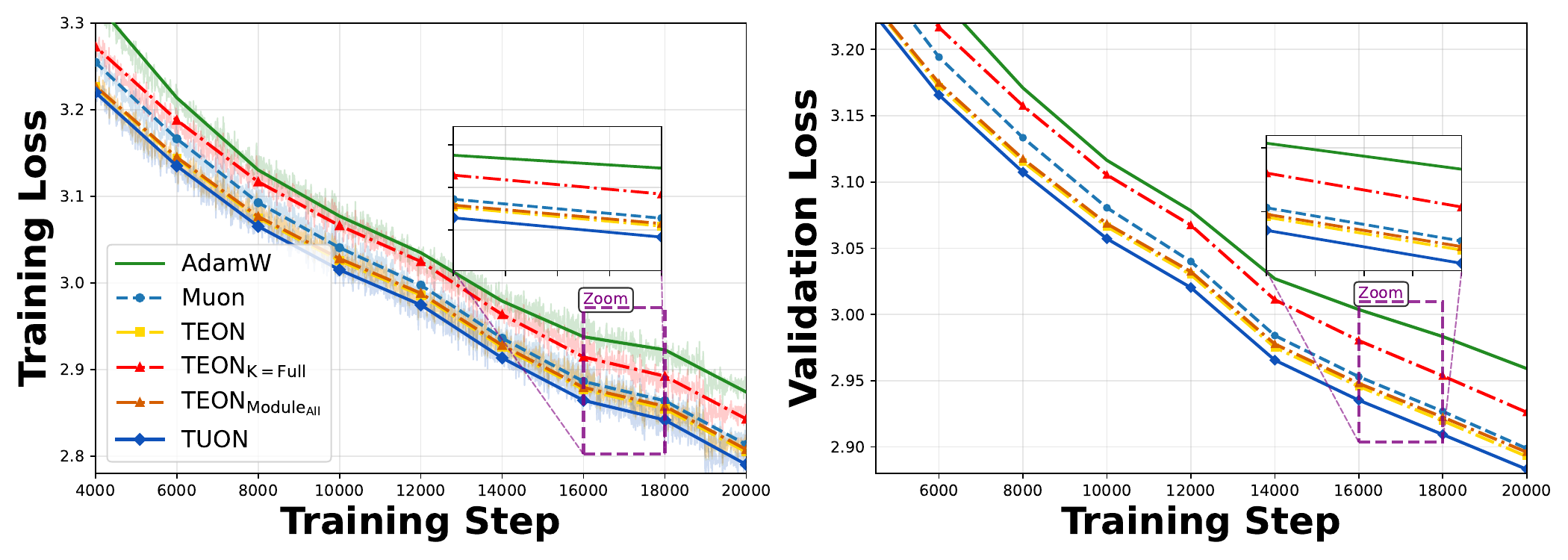}
		\caption{Qwen 0.6B .}
		\label{Performance on Qwen 06B}
	\end{subfigure}
	\begin{subfigure}[b]{0.45\textwidth}
		\centering
		\includegraphics[width=\textwidth]{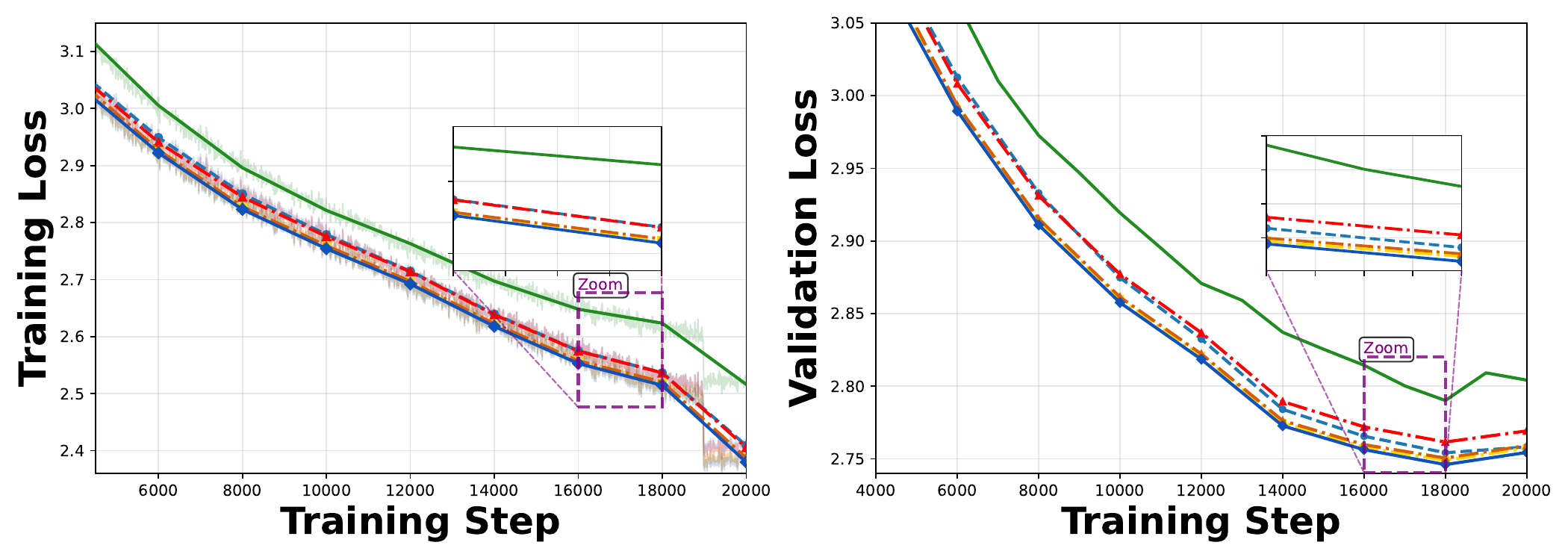}
		\caption{Qwen 4B.}
		\label{Performance on Qwen 4B}
	\end{subfigure}
	\caption{Training curves of Qwen models. }
	\label{fig:training-curves}
\end{figure}

\subsection{Results on LLM pretraining}
\label{Results on LLM pretraining}

\begin{table*}[!th]
	\centering
	\caption{Validation loss on GPT-2 (124M, 350M; 50k iterations) and Qwen3 (0.6B, 4B; 20k iterations). \(\Delta = \text{Val} - \text{Val}_{\text{AdamW}}\); negative \(\Delta\) indicates improvement. Best per column in bold. Lower is better.}
	\label{tab:main-val}
	\small
	\setlength{\tabcolsep}{5pt}
	\begin{tabular}{@{}lcccccccc@{}}
		\toprule
		\multirow{2}{*}{Optimizer}
		& \multicolumn{2}{c}{\textbf{124M\(_{\text{GPT-2}}\)}} & \multicolumn{2}{c}{\textbf{350M\(_{\text{GPT-2}}\)}} & \multicolumn{2}{c}{\textbf{0.6B\(_{\text{Qwen3}}\)}} & \multicolumn{2}{c}{\textbf{4B\(_{\text{Qwen3}}\)}} \\
		\cmidrule(lr){2-3} \cmidrule(lr){4-5} \cmidrule(lr){6-7} \cmidrule(lr){8-9}
		& Val & \(\Delta\) & Val & \(\Delta\) & Val & \(\Delta\) & Val & \(\Delta\) \\
		\midrule
		AdamW & \textbf{2.9230} & -- & 2.7307 & -- & 2.9652 & -- & 2.8108 & -- \\
		Muon & 3.0488 & \(+0.1258\) & 2.7404 & \(+0.0097\) & 2.9056 & \(-0.0596\) & 2.7636 & \(-0.0472\) \\
		TEON & 2.9442 & \(+0.0212\) & 2.6935 & \(-0.0372\) & 2.8978 & \(-0.0674\) & 2.7604 & \(-0.0504\) \\
		\rowcolor{cyan!8}TUON & 2.9357 & \(\mathbf{+0.0127}\) & \textbf{2.6925} & \(\mathbf{-0.0382}\) & \textbf{2.8887} & \(\mathbf{-0.0765}\) & \textbf{2.7598} & \(\mathbf{-0.0510}\) \\
		\bottomrule
	\end{tabular}
\end{table*}

Table~\ref{tab:main-val} reports validation loss on GPT-2 (124M, 350M) and Qwen3 (0.6B, 4B). TUON achieves the lowest loss on three of the four settings and consistently outperforms Muon and TEON. 
Specifically, TUON reduces validation loss over AdamW by \(0.0382\), \(0.0765\), and \(0.0510\) on GPT-2 350M and the two Qwen3 scales, respectively. 
Moreover, TUON outperforms Muon on all four settings, with the largest gap at 124M (\(0.1131\)). This suggests that cross-layer structure is particularly beneficial. 
Furthermore, TUON improves over TEON on every model, consistent with our motivation and theory in \S\ref{subsec:muon-comparison}. 
Fig.~\ref{fig:training-curves} presents the training and validation curves for Qwen3 0.6B and 4B. TUON outperforms AdamW, Muon, TEON, and the TEON variants that stack all layers (\(\text{TEON}_{K=\text{Full}}\)) or extend to QKVO and MLP (\(\text{TEON}_{\text{Module}_{\text{All}}}\)). 
Overall, TUON delivers consistent improvements across two architectures and four scales without per-scale tuning, supporting the claim that a transform-based tensor algebra offers a more effective geometry for cross-layer optimization.

\subsection{Ablation studies}
\label{sec:ablations} 
We conduct ablation studies to isolate the effect of each design choice in our method. Unless otherwise noted, we train all models for 20k iterations for computational reasons.
\paragraph{Choices of \(\boldsymbol{Q}\).} \label{Q-ablation}
The T-SVD-based algebra only requires \(\boldsymbol{Q}\) to be invertible and does not specify which transform suits cross-layer optimization. Our analysis in \S~\ref{sec:method} suggests that \(\boldsymbol{Q}\) should be guided by layer statistics, which we test here by comparing fixed transforms (DCT, Haar, DFT) against data-dependent DecorQ variants (static and residual); details are in Appendix~\ref{Selection-of-Q} and~\ref{app:q-choices}.

In Table~\ref{tab:q-fullL-appendix}, DecorQ achieves the best results, improving over DCT by \(0.0502\), while the other four choices perform comparably. It reflects that layer statistics are empirically superior. Moreover, static DecorQ reaches a competitive \(2.7658\) but remains slightly worse than the adaptive version, indicating that refreshing \(\boldsymbol{Q}\) during training provides a consistent benefit. 
Overall, the ablation confirms that the layer mode should be treated as a structured dimension rather than an arbitrary algebraic index.


\begin{table}[!t]
	\centering
	\caption{Effect of stacking size \(K\). \(\uparrow\)/\(\downarrow\) denote \(K=\text{full}\) is worse/better than \(K=2\). 20k iterations.}
	\label{tab:stacking_k}
	\small
	\setlength{\tabcolsep}{4pt}
	\begin{tabular}{lccccccccc}
		\toprule
		\multirow{2}{*}{Model} &\multirow{2}{*}{Muon}
		& \multicolumn{2}{c}{TEON\(_{\text{mode-1}}\)}
		& \multicolumn{2}{c}{TEON\(_{\text{mode-2}}\)}
		& \multicolumn{2}{c}{TUON\(_{\text{DCT}}\)}
		& \multicolumn{2}{c}{TUON\(_{\text{DecorQ}}\)} \\
		\cmidrule(lr){3-4}\cmidrule(lr){5-6}\cmidrule(lr){7-8}\cmidrule(lr){9-10}
		&  & K=2 & K=full & K=2 & K=full & K=2 & K=full & K=2 & K=full \\
		\midrule
		\textbf{124M\(_{\text{GPT-2}}\)} & 3.0583
		& 3.0255 & 3.0513\(\left( \uparrow\right)\)
		& 3.0332 & 3.0789\(\left( \uparrow\right)\)
		& 3.0309 &    3.0295\textbf{\(\left( \downarrow\right)\)}
		&  3.0299  &     \multicolumn{1}{>{\columncolor{cyan!8}}c}{\textbf{3.0233}\textbf{\(\left( \downarrow\right)\)} }\\
		\textbf{350M\(_{\text{GPT-2}}\)} &2.8427
		& 2.8427 & 2.8617\(\left( \uparrow\right)\)
		& 2.8643 & 2.8762\(\left( \uparrow\right)\)
		& 2.8150 & 2.8218\(\left( \uparrow\right)\)
		& 2.8183 & \multicolumn{1}{>{\columncolor{cyan!8}}c}{\textbf{2.8143}\textbf{\(\left( \downarrow\right)\)} }\\
		\textbf{0.6B\(_{\text{Qwen3}}\)}& 2.9056
		& 2.8994 & 2.9323\(\left( \uparrow\right)\)
		& 2.8978 & 2.9583\(\left( \uparrow\right)\)
		& 2.9203 & 2.9210\(\left( \uparrow\right)\)
		& 2.8938 & \multicolumn{1}{>{\columncolor{cyan!8}}c}{ \textbf{2.8887}\textbf{\(\left( \downarrow\right)\)}} \\
		\textbf{4B\(_{\text{Qwen3}}\)} &2.7636
		& 2.7627 & 2.7649\(\left( \uparrow\right)\)
		& 2.7604 & 2.7640\(\left( \uparrow\right)\)
		& 2.7606 & 2.8100\(\left( \uparrow\right)\)
		& 2.7629 & \multicolumn{1}{>{\columncolor{cyan!8}}c}{\textbf{2.7598} \textbf{\(\left( \downarrow\right)\)} }\\
		\bottomrule
	\end{tabular}
\end{table}

\paragraph{Stacking size \(K\).}
A larger stacking size \(K\) exposes more cross-layer structure, and the theoretical analysis in  \S~\ref{subsec:muon-comparison}
and in~\citep{zhang2026teon} predicts a larger potential gain as \(K\) grows. 
Table~\ref{tab:stacking_k} tests whether this potential is realized in practice. TEON performs best at \(K=2\) and consistently degrades at \(K=\text{full}\) for both mode-1 and mode-2 unfolding, exposing a gap between theory and practice. Although larger \(K\) promises more gain, unfolding methods lose performance as \(K\) grows. Two causes are likely. Unfolding conflates structural information, and aligning top singular vectors across many layers becomes increasingly difficult.

TUON with DecorQ is the only method that improves from \(K=2\) to \(K=\text{full}\) on all four models: \(3.0299 \) to \( 3.0233\) (GPT-2 124M), \(2.8183 \) to \( 2.8143\) (GPT-2 350M), \(2.8938 \) to \( 2.8887\) (Qwen3 0.6B), and \(2.7629 \) to \( 2.7598\) (Qwen3 4B), confirming its ability to exploit more layers. 
We also notice that TUON with DCT  degrades on three of the four models, isolating the effect of \(\boldsymbol{Q}\), i.e., it is not larger \(K\) alone that helps, but the ability to adapt the shared basis to the grouped layers. 
Overall, although the gain remains modest and falls short of the \(\sqrt{K}\) upper bound by theory, TUON with DecorQ is the first that makes stacking more layers beneficial rather than harmful, a regime that prior methods cannot reach.  

\paragraph{Unfolding mode.}
TEON requires an additional hyperparameter, i.e., unfold along mode-1 or mode-2. It reports that mode-1 works better in practice ~\citep{zhang2026teon}. Our experiments reveal that the better mode of TEON is \emph{model-dependent}~(Table \ref{tab:stacking_k}). Specifically, GPT-2 favors mode-1, while Qwen3 favors mode-2. Moreover, choosing the wrong mode costs \(0.0216\) on GPT-2 350M. The gap is non-negligible, meaning TEON requires a per-model mode choice, and neither mode provides a universally correct alignment. \textbf{TUON avoids this hyperparameter by transform-based tensor algebra.} In all main experiments, we report the stronger of the two TEON modes, ensuring that the comparison is against the most favorable TEON configuration. Even under this conservative protocol, TUON outperforms the best TEON on every model in Table~\ref{tab:main-val}.

\paragraph{Layer type.}
Our method applies uniformly to all layer types. We verify this by comparing \(\text{Module}_{\text{QKV}}\) (attention projections only) with \(\text{Module}_{\text{All}}\) (QKVO and MLP). 
Table~\ref{tab:layer-type-appendix} reports the results.  Under mode-1, TEON degrades on all four models, and under mode-2 the degradation is even larger, e.g., from \(3.0332\) to \(3.1027\) on GPT-2 124M. This confirms that the matricization-based tensor algebra struggles to maintain alignment across heterogeneous layer types under either unfolding mode. 
TUON behaves differently, extending from \(\text{Module}_{\text{QKV}}\) to \(\text{Module}_{\text{All}}\) yields comparable or better losses on GPT-2 124M (\(3.0225 \) to \( 3.0233\)), GPT-2 350M (\(2.8291 \) to \( 2.8143\)), Qwen3 0.6B (\(2.8897 \) to \( 2.8887\)), and Qwen3 4B (\(2.7606 \) to \( 2.7598\)). 
Moreover, in the \(\text{Module}_{\text{All}}\) setting, it outperforms the best TEON by \(0.0298\), \(0.0474\), \(0.0131\), and \(0.0041\), respectively. These results show that TUON's benefit extends beyond attention to MLP and output projections, where prior tensor-aware optimizers fail to transfer.

\paragraph{Polarization methods.}
Our optimizer relies on a polar map, approximated in practice by Newton--Schulz~(NS) iterations. We test the robustness of TUON to the polarization method by comparing NS (\(J=5\)) with PolarExpress~\citep{amsel2026polar} on Qwen3-4B. 
In Table~\ref{tab:polarization}, the two methods perform comparably at both stacking sizes. Specifically, at \(K=2\), NS reaches \(2.7629\) and PolarExpress \(2.7638\); at full stacking, PolarExpress is slightly better (\(2.7569\) vs. \(2.7598\)). Both differences are an order of magnitude smaller than the gain from the \(\boldsymbol{Q}\) mechanism (\(0.0502\)), confirming the robustness of TUON.

\subsection{Training step time}
\label{sec:stability-cost}
We compare the per-iteration step time with identical hardware, batch size, sequence length, and parallelism configuration. Table~\ref{tab:step_time} reports the average time per iteration (ms), excluding the first few iterations to remove compilation and initialization overhead. 
Relative to vector-aware AdamW, TUON is slower by \(6.1\%\), \(8.3\%\), \(3.5\%\), and \(12.5\%\) on the four models per iteration, respectively. 
Compared with matrix-aware and tensor-aware optimizer, TUON is on par with Muon and TEON across all four models, within \(0.2\%\)--\(1.1\%\) of TEON and \(1.7\%\)--\(6.1\%\) of Muon. 
These results show that TUON introduces only marginal overhead and scales gracefully to larger models, making the per-iteration cost a reasonable price for the optimization benefits reported above.


\begin{table}[t]
	\centering
	\caption{Time per iteration (ms) on four models. Lower values indicate faster training.}
	\label{tab:step_time}
	\small
	\begin{tabular}{lcccc}
		\toprule
		\multirow{2}{*}{Model} & \multicolumn{4}{c}{Time per Iteration (ms)} \\
		\cmidrule(lr){2-5}
		& \textbf{124M\(_{\text{GPT-2}}\)} & \textbf{350M\(_{\text{GPT-2}}\)} & \textbf{0.6B\(_{\text{Qwen3}}\)} & \textbf{4B\(_{\text{Qwen3}}\)} \\
		\midrule
		AdamW & \textbf{152.74} & \textbf{379.95} & \textbf{3265.5} & \textbf{5466.9} \\
		Muon  & 160.88 & 404.81 & 3324.3 & 5794.7 \\
		TEON  & 161.81 & 408.30 & 3385.7 & 6081.5 \\
		TUON  & 162.12 & 411.64 & 3378.3 & 6150.7 \\
		\bottomrule
	\end{tabular}
\end{table}

\section{Conclusion}
\label{conclusion}

We introduce \textbf{TUON}, a tensor-aware optimizer built on a transform-based tensor algebra grounded in the Tensor-SVD. For third-order tensors in Transformers, TUON applies a unitary transform \(\boldsymbol{Q}\) along the layer mode and polarizes each transformed slice independently before inverting the transform, a mix-then-separate design that exploits cross-layer structure while preserving per-layer diversity and avoiding the information loss of global unfolding. Empirically, TUON improves consistently over Muon and TEON on GPT-124M/350M and Qwen-0.6B/4B, with ablations confirming its transformation mechanism, cross-layer stacking beyond two layers, adaptability beyond QKV, and robustness to polarization, at only marginal overhead. 
For future work, although DecorQ realizes the benefits of larger stacking sizes, these gains remain bounded and below the theoretical maximum. Further extending TUON's stacking capability is therefore a promising direction. Two additional directions are extending TUON to other parameter groups and heterogeneous architectures, and co-designing tensor-efficient kernels that exploit the layer-mode transform and slice-wise polarization to reduce memory traffic and improve hardware efficiency.

\newpage

\subsection*{AI use statement}


In this work, we used generative AI tools solely for suggested-disclosure tasks to aid writing. These tools are used to improve readability, suggesting alternative phrasing, and refining the overall presentation of our work. 
We have not used generative AI tools for any required-disclosure tasks as defined by the ICLR policy, such as generating synthetic data, developing theoretical models or conceptual frameworks or mathematical claims, writing proofs, designing experiments or methodology, implementing methods, or interpreting results. 
We have reviewed all AI-assisted work. All AI-suggested text edits were critically reviewed, revised as needed, and checked for accuracy and consistency by the authors. 
We take responsibility for the final content of this work, including text, claims or artifacts produced with the aid of generative AI.

See the ICLR 2027 AI Policy for Authors for more details. This statement should not be more than 1 page.

\subsection*{Reproducibility statement}


We have taken the following steps to ensure the reproducibility of our work.

Our implementation strictly follows the PyTorch official API, 
\begin{verbatim}
	torch.optim
\end{verbatim}
. The complete production-ready code is publicly available.

\textbf{For all results.}
The complete source code is provided as anonymous material and is also available at \url{https://anonymous.4open.science/r/tuon-802A/README.md}. 

All computational resources, training details, configurations, baselines and TUON implementation specifics, model architectures and hyper-parameters are provided in Appendices \ref{Code and Reproducibility}, \ref{sec:exp-setup}  and \ref{app:hyperparameters}. 

\textbf{For theoretical results.}
Complete proofs of all claims are provided in Appendix~\ref{app:proofs1}. 


We believe the above materials are sufficient for reproducing all main results reported in this paper. Any remaining implementation details not covered here are available upon reasonable request to the authors.

\newpage

\bibliography{iclr2027_conference}
\bibliographystyle{iclr2027_conference}

\newpage

\appendix
\addtocontents{toc}{\protect\setcounter{tocdepth}{-1}}  
\tableofcontents
\addtocontents{toc}{\protect\setcounter{tocdepth}{2}}   

\section{Unified notation}\label{unified-notation}
For one architecture-compatible group of $K$ matrices, we use
\begin{center}
	\begin{tabular}{ll}
		\toprule
		Symbol & Meaning \\
		\midrule
		\(\boldsymbol{W}_t^{(k)},\boldsymbol{G}_t^{(k)},\boldsymbol{M}_t^{(k)}\in\R^{m\times n}\) & parameter, stochastic gradient, and momentum of layer slice \(k\) \\
		\(\boldsymbol{\mathcal{W}}_t,\boldsymbol{\mathcal{G}}_t,\boldsymbol{\mathcal{M}}_t\in\R^{m\times n\times K}\) & their stacked tensors \\
		\(\boldsymbol{Q}\) or \(\boldsymbol{Q}_t\) & unitary layer-mode transform \\
		\(\widehat{\boldsymbol{\mathcal{M}}}_t=\boldsymbol{\mathcal{M}}_t\times_3\boldsymbol{Q}\) & momentum in transformed layer coordinates \\
		\(\mathsf{M}_t\) & mode-3 unfolding of \(\boldsymbol{\mathcal{M}}_t\) \\
		\(\mathsf{C}_t=\mathsf{M}_t\mathsf{M}_t^{\mathsf T}\) & layer Gram matrix \\
		\(\boldsymbol{\mathcal{P}}_{\boldsymbol{Q}}(\boldsymbol{\mathcal{M}})\) or \(\boldsymbol{\mathcal{P}}\) & exact transform--slice-polar--inverse-transform map \\
		\(\operatorname{tpolar}_{\boldsymbol{Q}}(\cdot)\) & generalized tensor polar factor \\
		\(\boldsymbol{\mathcal{U}},\boldsymbol{\mathcal{S}},\boldsymbol{\mathcal{V}}\) & T-SVD factors of \(\boldsymbol{\mathcal{M}}\) \\
		\(\boldsymbol{\mathcal{J}}_{\mathrm{supp}}\) & support indicator tensor \\
		\(\gamma_t\) & shared stack-scale factor \\
		\(\epsilon\) & small constant for numerical stability \\
		\(J\) & number of Newton--Schulz iterations \\
		\(I\) & refresh interval for \(\boldsymbol{Q}_t\) \\
		\(\eta_t,\beta,\lambda\) & learning rate, momentum coefficient, and decoupled weight decay \\
		\(\eta\) & fixed step size used in the theory \\
		\(T\) & number of iterations in the convergence analysis \\
		\(F_\star\) & lower bound of the objective \\
		\(\rho_t\) & trust-region radius in the theory \\
		\(\|\cdot\|_{\boldsymbol{Q},\mathrm{Ts}}\) & \(Q\)-block spectral norm \\
		\(\|\cdot\|_{\boldsymbol{Q},\mathrm{Tn}}\) & \(Q\)-block nuclear norm (dual of \(\|\cdot\|_{\boldsymbol{Q},\mathrm{Ts}}\)) \\
		\(\|\cdot\|_{\mathrm{Muon}},\|\cdot\|_{\mathrm{TEON}}\) & Muon and TEON spectral norms \\
		\(L_Q\) & smoothness constant under the \(Q\)-block norm pair \\
		\(\alpha_Q,\beta_Q\) & transform spread constants \\
		\(\mathscr U(\cdot)\) & mode-1 unfolding operator \\
		\(R_Q=Q^{\mathsf T}\otimes I_n\) & unfolding transformation matrix \\
		\bottomrule
	\end{tabular}
\end{center}
This convention is used in the method, theory, experiments, and appendices.

\section{Preliminaries}
\label{app:preliminaries}

This appendix collects the elementary facts used in Section~\ref{sec:theory}.  They are stated explicitly so that the proofs can be followed using only basic linear algebra and first-order optimization.

\subsection{Tensor notation and unitary mode-$3$ transforms}
\label{app:tensor-transform-basics}

For $\boldsymbol{\mathcal{X}}\in\mathbb{C}^{m\times n\times K}$, its $k$th frontal slice is $\boldsymbol{X}_k=\boldsymbol{\mathcal{X}}(:,:,k)$.  A mode-$3$ transform by $\boldsymbol{Q}\in\mathbb{C}^{K\times K}$ is
\begin{equation}
	\widehat{\boldsymbol{\mathcal{X}}}=\boldsymbol{\mathcal{X}}\times_3\boldsymbol{Q},
	\qquad
	\widehat{\boldsymbol{X}}_j=\sum_{k=1}^K\boldsymbol{Q}_{jk}\boldsymbol{X}_k.
\end{equation}
The Frobenius inner product and norm are
\begin{equation}
	\langle\boldsymbol{\mathcal{X}},\boldsymbol{\mathcal{Y}}\rangle_{\mathrm F}
	:=\sum_{k=1}^K\operatorname{tr}(\boldsymbol{X}_k^{\mathsf H}\boldsymbol{Y}_k),
	\qquad
	\|\boldsymbol{\mathcal{X}}\|_{\mathrm F}^2
	:=\langle\boldsymbol{\mathcal{X}},\boldsymbol{\mathcal{X}}\rangle_{\mathrm F}.
\end{equation}

\begin{lemma}[Parseval identities for a unitary layer transform]
	\label{lem:unitary-parseval}
	If $\boldsymbol{Q}^{\mathsf H}\boldsymbol{Q}=\boldsymbol{I}$, then
	\begin{equation}
		\langle\boldsymbol{\mathcal{X}}\times_3\boldsymbol{Q},\boldsymbol{\mathcal{Y}}\times_3\boldsymbol{Q}\rangle_{\mathrm F}
		=\langle\boldsymbol{\mathcal{X}},\boldsymbol{\mathcal{Y}}\rangle_{\mathrm F},
		\qquad
		\|\boldsymbol{\mathcal{X}}\times_3\boldsymbol{Q}\|_{\mathrm F}=\|\boldsymbol{\mathcal{X}}\|_{\mathrm F}.
		\label{eq:appendix-parseval}
	\end{equation}
\end{lemma}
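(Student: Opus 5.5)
The plan is to expand both sides slice by slice and reduce everything to the scalar identity $\boldsymbol{Q}^{\mathsf H}\boldsymbol{Q}=\boldsymbol{I}$. Write $\widehat{\boldsymbol{X}}_j=\sum_k\boldsymbol{Q}_{jk}\boldsymbol{X}_k$ and $\widehat{\boldsymbol{Y}}_j=\sum_\ell\boldsymbol{Q}_{j\ell}\boldsymbol{Y}_\ell$. Using the definition of the Frobenius inner product as a sum of slice traces, and the conjugate-linearity of $\operatorname{tr}(\boldsymbol{A}^{\mathsf H}\boldsymbol{B})$ in its first argument, I will obtain
\begin{equation*}
\langle\boldsymbol{\mathcal{X}}\times_3\boldsymbol{Q},\boldsymbol{\mathcal{Y}}\times_3\boldsymbol{Q}\rangle_{\mathrm F}
=\sum_{j=1}^K\sum_{k,\ell=1}^K\overline{\boldsymbol{Q}_{jk}}\,\boldsymbol{Q}_{j\ell}\operatorname{tr}(\boldsymbol{X}_k^{\mathsf H}\boldsymbol{Y}_\ell).
\end{equation*}

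Next I will exchange the finite sums so that the sum over $j$ is innermost. It then equals $\sum_j\overline{\boldsymbol{Q}_{jk}}\boldsymbol{Q}_{j\ell}=(\boldsymbol{Q}^{\mathsf H}\boldsymbol{Q})_{k\ell}=\delta_{k\ell}$ by unitarity. Only the diagonal terms $k=\ell$ survive, which leaves $\sum_k\operatorname{tr}(\boldsymbol{X}_k^{\mathsf H}\boldsymbol{Y}_k)=\langle\boldsymbol{\mathcal{X}},\boldsymbol{\mathcal{Y}}\rangle_{\mathrm F}$. The norm identity then follows by taking $\boldsymbol{\mathcal{Y}}=\boldsymbol{\mathcal{X}}$ and a square root.

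As a cross-check, I can use the mode-3 unfolding. Since $(\boldsymbol{\mathcal{X}}\times_3\boldsymbol{Q})$ unfolds to $\boldsymbol{Q}\mathsf{X}$, the inner product becomes $\operatorname{tr}(\mathsf{X}^{\mathsf H}\boldsymbol{Q}^{\mathsf H}\boldsymbol{Q}\mathsf{Y})$, which gives the same result in one line.

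There is no real obstacle here. The one point needing care is the conjugation convention: the conjugate must fall on $\boldsymbol{Q}_{jk}$, coming from the first argument, so that the contraction produces $\boldsymbol{Q}^{\mathsf H}\boldsymbol{Q}$ rather than $\boldsymbol{Q}^{\mathsf T}\boldsymbol{Q}$. This matters because $\boldsymbol{Q}$ may be complex, as with the DFT.
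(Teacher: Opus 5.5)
Your proposal is correct and essentially matches the paper's argument. The paper groups entries into tubes $\boldsymbol{x}_{ab}\in\mathbb{C}^K$, notes that the transformed tube is $\boldsymbol{Q}\boldsymbol{x}_{ab}$, and uses $\boldsymbol{x}_{ab}^{\mathsf H}\boldsymbol{Q}^{\mathsf H}\boldsymbol{Q}\boldsymbol{y}_{ab}=\boldsymbol{x}_{ab}^{\mathsf H}\boldsymbol{y}_{ab}$ before summing over $(a,b)$; that is your mode-3 unfolding cross-check read column by column, while your main slice-wise expansion does the same contraction $\sum_j\overline{\boldsymbol{Q}_{jk}}\boldsymbol{Q}_{j\ell}=\delta_{k\ell}$ with the sums taken in the other order.
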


\begin{proof}
	We prove the inner-product preservation by expanding the Frobenius inner product entry-wise. The norm preservation then follows as a special case.
	
	First, we express the tensor in terms of its tubes. 
	For each position $(a,b)$ in the $m \times n$ matrix, define the tube vector
	\begin{equation}
		\boldsymbol{x}_{ab} := 
		\begin{bmatrix}
			\boldsymbol{X}_1(a,b) \\
			\boldsymbol{X}_2(a,b) \\
			\vdots \\
			\boldsymbol{X}_K(a,b)
		\end{bmatrix}
		\in \mathbb{C}^K,
		\qquad
		\boldsymbol{y}_{ab} := 
		\begin{bmatrix}
			\boldsymbol{Y}_1(a,b) \\
			\boldsymbol{Y}_2(a,b) \\
			\vdots \\
			\boldsymbol{Y}_K(a,b)
		\end{bmatrix}
		\in \mathbb{C}^K.
	\end{equation}
	These vectors collect the values along the third mode (the layer dimension) for a fixed matrix entry $(a,b)$. The Frobenius inner product of $\boldsymbol{\mathcal{X}}$ and $\boldsymbol{\mathcal{Y}}$ can then be written as
	\begin{equation}
		\langle\boldsymbol{\mathcal{X}},\boldsymbol{\mathcal{Y}}\rangle_{\mathrm F}
		=
		\sum_{a=1}^m \sum_{b=1}^n
		\boldsymbol{x}_{ab}^{\mathsf H} \boldsymbol{y}_{ab}.
		\label{eq:proof-parseval-tube-sum}
	\end{equation}
	Indeed, $\boldsymbol{x}_{ab}^{\mathsf H} \boldsymbol{y}_{ab} = \sum_{k=1}^K \overline{\boldsymbol{X}_k(a,b)} \boldsymbol{Y}_k(a,b)$, and summing over all $(a,b)$ recovers the standard Frobenius inner product.
	
	Second, we express the transformed tensor in terms of the same tubes. 
	Applying $\boldsymbol{Q}$ along the third mode transforms each tube vector independently:
	\begin{equation}
		\widehat{\boldsymbol{X}}_j(a,b) = \sum_{k=1}^K \boldsymbol{Q}_{jk} \boldsymbol{X}_k(a,b),
	\end{equation}
	which means that the transformed tube at position $(a,b)$ is
	\begin{equation}
		\widehat{\boldsymbol{x}}_{ab}
		:=
		\begin{bmatrix}
			\widehat{\boldsymbol{X}}_1(a,b) \\
			\widehat{\boldsymbol{X}}_2(a,b) \\
			\vdots \\
			\widehat{\boldsymbol{X}}_K(a,b)
		\end{bmatrix}
		=
		\boldsymbol{Q} \boldsymbol{x}_{ab}.
	\end{equation}
	Similarly, $\widehat{\boldsymbol{y}}_{ab} = \boldsymbol{Q} \boldsymbol{y}_{ab}$.
	
	Then we compute the inner product of the transformed tensors. 
	Using the same tube-sum identity as in Step 1, the inner product of the transformed tensors is
	\begin{equation}
		\langle\boldsymbol{\mathcal{X}} \times_3 \boldsymbol{Q},
		\boldsymbol{\mathcal{Y}} \times_3 \boldsymbol{Q}\rangle_{\mathrm F}
		=
		\sum_{a=1}^m \sum_{b=1}^n
		\widehat{\boldsymbol{x}}_{ab}^{\mathsf H} \widehat{\boldsymbol{y}}_{ab}.
	\end{equation}
	Substituting $\widehat{\boldsymbol{x}}_{ab} = \boldsymbol{Q} \boldsymbol{x}_{ab}$ and $\widehat{\boldsymbol{y}}_{ab} = \boldsymbol{Q} \boldsymbol{y}_{ab}$ gives
	\begin{equation}
		\widehat{\boldsymbol{x}}_{ab}^{\mathsf H} \widehat{\boldsymbol{y}}_{ab}
		=
		(\boldsymbol{Q} \boldsymbol{x}_{ab})^{\mathsf H} (\boldsymbol{Q} \boldsymbol{y}_{ab})
		=
		\boldsymbol{x}_{ab}^{\mathsf H} \boldsymbol{Q}^{\mathsf H} \boldsymbol{Q} \boldsymbol{y}_{ab}.
	\end{equation}
	Since $\boldsymbol{Q}$ is unitary, $\boldsymbol{Q}^{\mathsf H} \boldsymbol{Q} = \boldsymbol{I}$, so
	\begin{equation}
		\widehat{\boldsymbol{x}}_{ab}^{\mathsf H} \widehat{\boldsymbol{y}}_{ab}
		=
		\boldsymbol{x}_{ab}^{\mathsf H} \boldsymbol{y}_{ab}.
	\end{equation}
	
	At last, we sum over all entries. 
	Summing the identity $\widehat{\boldsymbol{x}}_{ab}^{\mathsf H} \widehat{\boldsymbol{y}}_{ab} = \boldsymbol{x}_{ab}^{\mathsf H} \boldsymbol{y}_{ab}$ over all $m \times n$ entries $(a,b)$ yields
	\begin{equation}
		\langle\boldsymbol{\mathcal{X}} \times_3 \boldsymbol{Q},
		\boldsymbol{\mathcal{Y}} \times_3 \boldsymbol{Q}\rangle_{\mathrm F}
		=
		\langle\boldsymbol{\mathcal{X}},\boldsymbol{\mathcal{Y}}\rangle_{\mathrm F}.
	\end{equation}
	This proves the inner-product preservation.
	
	Moreover, we can have the norm preservation as a corollary. 
	Setting $\boldsymbol{\mathcal{Y}} = \boldsymbol{\mathcal{X}}$ in the inner-product preservation gives
	\begin{equation}
		\|\boldsymbol{\mathcal{X}} \times_3 \boldsymbol{Q}\|_{\mathrm F}^2
		=
		\|\boldsymbol{\mathcal{X}}\|_{\mathrm F}^2.
	\end{equation}
	Taking square roots (both sides are nonnegative) yields the norm-preservation identity.
\end{proof}


\subsection{SVD, matrix norms, and the polar factor}
\label{app:matrix-tools}

Let $\boldsymbol{A}\in\mathbb{C}^{m\times n}$ have compact singular value decomposition
\begin{equation}
	\boldsymbol{A}=\boldsymbol{U}\boldsymbol{\Sigma}\boldsymbol{V}^{\mathsf H},
	\qquad
	\boldsymbol{\Sigma}=\operatorname{diag}(\sigma_1,\ldots,\sigma_r),
	\qquad r=\operatorname{rank}(\boldsymbol{A}).
\end{equation}
The spectral, nuclear, and Frobenius norms are
\begin{equation}
	\|\boldsymbol{A}\|_{\mathrm{op}}=\sigma_1,
	\qquad
	\|\boldsymbol{A}\|_*=\sum_{i=1}^r\sigma_i,
	\qquad
	\|\boldsymbol{A}\|_{\mathrm F}^2=\sum_{i=1}^r\sigma_i^2.
\end{equation}
The canonical polar factor of a nonzero matrix $\boldsymbol{A}$ is defined using its compact singular value decomposition $\boldsymbol{A} = \boldsymbol{U} \boldsymbol{\Sigma} \boldsymbol{V}^{\mathsf H}$. Specifically, we set
\begin{equation}
	\operatorname{polar}(\boldsymbol{A}) := \boldsymbol{U} \boldsymbol{V}^{\mathsf H}.
\end{equation}
This factor acts as the unitary part of the polar decomposition, i.e.  it retains the singular vectors of $\boldsymbol{A}$ while replacing all nonzero singular values by one. For a zero matrix, we define $\operatorname{polar}(\boldsymbol{0}) := \boldsymbol{0}$.

\begin{lemma}[Elementary polar identities]
	\label{lem:polar-identities}
	For $\boldsymbol{A}\neq0$ with rank $r$,
	\begin{equation}
		\|\operatorname{polar}(\boldsymbol{A})\|_{\mathrm{op}}=1,
		\qquad
		\|\operatorname{polar}(\boldsymbol{A})\|_{\mathrm F}^2=r,
		\qquad
		\langle \boldsymbol{A},\operatorname{polar}(\boldsymbol{A})\rangle_{\mathrm F}=\|\boldsymbol{A}\|_*.
		\label{eq:polar-identities}
	\end{equation}
\end{lemma}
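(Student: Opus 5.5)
The plan is to prove all three identities directly from the compact SVD $\boldsymbol{A}=\boldsymbol{U}\boldsymbol{\Sigma}\boldsymbol{V}^{\mathsf H}$, where $\boldsymbol{U}\in\mathbb{C}^{m\times r}$ and $\boldsymbol{V}\in\mathbb{C}^{n\times r}$ have orthonormal columns, i.e.\ $\boldsymbol{U}^{\mathsf H}\boldsymbol{U}=\boldsymbol{V}^{\mathsf H}\boldsymbol{V}=\boldsymbol{I}_r$, and $\boldsymbol{\Sigma}$ is diagonal with $\sigma_1\ge\cdots\ge\sigma_r>0$. Set $\boldsymbol{P}:=\operatorname{polar}(\boldsymbol{A})=\boldsymbol{U}\boldsymbol{V}^{\mathsf H}$. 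The key observation is that $\boldsymbol{P}=\boldsymbol{U}\boldsymbol{I}_r\boldsymbol{V}^{\mathsf H}$ is itself a compact SVD of $\boldsymbol{P}$ with all $r$ singular values equal to one. The first two identities then follow from the norm formulas recalled in Appendix~\ref{app:matrix-tools}.

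For the operator norm, I would compute $\boldsymbol{P}^{\mathsf H}\boldsymbol{P}=\boldsymbol{V}\boldsymbol{U}^{\mathsf H}\boldsymbol{U}\boldsymbol{V}^{\mathsf H}=\boldsymbol{V}\boldsymbol{V}^{\mathsf H}$. This is an orthogonal projector of rank $r\ge1$ (the case $r\ge1$ uses $\boldsymbol{A}\neq0$), so its eigenvalues are $1$ with multiplicity $r$ and $0$ otherwise. Hence the largest singular value of $\boldsymbol{P}$ is $1$, giving $\|\boldsymbol{P}\|_{\mathrm{op}}=1$. For the Frobenius norm, I would use $\|\boldsymbol{P}\|_{\mathrm F}^2=\operatorname{tr}(\boldsymbol{P}^{\mathsf H}\boldsymbol{P})=\operatorname{tr}(\boldsymbol{V}\boldsymbol{V}^{\mathsf H})=\operatorname{tr}(\boldsymbol{V}^{\mathsf H}\boldsymbol{V})=\operatorname{tr}(\boldsymbol{I}_r)=r$, which only requires cyclicity of the trace.

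For the inner product, I would expand using the convention $\langle\boldsymbol{X},\boldsymbol{Y}\rangle_{\mathrm F}=\operatorname{tr}(\boldsymbol{X}^{\mathsf H}\boldsymbol{Y})$:
\begin{equation*}
\langle\boldsymbol{A},\boldsymbol{P}\rangle_{\mathrm F}=\operatorname{tr}\bigl(\boldsymbol{V}\boldsymbol{\Sigma}\boldsymbol{U}^{\mathsf H}\boldsymbol{U}\boldsymbol{V}^{\mathsf H}\bigr)=\operatorname{tr}\bigl(\boldsymbol{\Sigma}\boldsymbol{V}^{\mathsf H}\boldsymbol{V}\bigr)=\operatorname{tr}(\boldsymbol{\Sigma})=\sum_{i=1}^r\sigma_i=\|\boldsymbol{A}\|_*.
\end{equation*}
Here I use $\boldsymbol{\Sigma}^{\mathsf H}=\boldsymbol{\Sigma}$ because it is real diagonal, together with cyclicity of the trace.

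There is no real obstacle in this argument. The only point needing care is that $\operatorname{polar}$ is well defined despite the non-uniqueness of the SVD. A short remark handles this: $\boldsymbol{U}\boldsymbol{V}^{\mathsf H}=\boldsymbol{A}(\boldsymbol{A}^{\mathsf H}\boldsymbol{A})^{+1/2}$ is independent of the chosen SVD. In any case, the computations above hold for every compact SVD.
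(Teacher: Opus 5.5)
Your proposal is correct and follows essentially the same route as the paper: both read the first two identities off the compact SVD $\boldsymbol{U}\boldsymbol{I}_r\boldsymbol{V}^{\mathsf H}$ of the polar factor and compute the inner product as $\operatorname{tr}(\boldsymbol{V}\boldsymbol{\Sigma}\boldsymbol{U}^{\mathsf H}\boldsymbol{U}\boldsymbol{V}^{\mathsf H})=\operatorname{tr}(\boldsymbol{\Sigma})$. Your explicit projector computation $\boldsymbol{P}^{\mathsf H}\boldsymbol{P}=\boldsymbol{V}\boldsymbol{V}^{\mathsf H}$ and your well-definedness remark via $\boldsymbol{A}(\boldsymbol{A}^{\mathsf H}\boldsymbol{A})^{+1/2}$ are correct details that the paper leaves implicit.
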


\begin{proof}
	The nonzero singular values of $\boldsymbol{U}\boldsymbol{V}^{\mathsf H}$ are all one.  Therefore its largest singular value is one and its squared Frobenius norm is the sum of $r$ ones.  For the inner product,
	\begin{align}
		\langle \boldsymbol{A},\boldsymbol{U}\boldsymbol{V}^{\mathsf H}\rangle_{\mathrm F}
		&=\operatorname{tr}\!\left(\boldsymbol{A}^{\mathsf H}\boldsymbol{U}\boldsymbol{V}^{\mathsf H}\right)\\
		&=\operatorname{tr}\!\left(\boldsymbol{V}\boldsymbol{\Sigma}\boldsymbol{U}^{\mathsf H}\boldsymbol{U}\boldsymbol{V}^{\mathsf H}\right)\\
		&=\operatorname{tr}(\boldsymbol{V}\boldsymbol{\Sigma}\boldsymbol{V}^{\mathsf H})
		=\operatorname{tr}(\boldsymbol{\Sigma})
		=\sum_{i=1}^r\sigma_i
		=\|\boldsymbol{A}\|_*.
	\end{align}
	The quantity is real and nonnegative, so taking its real part does not change it.
\end{proof}

\begin{lemma}[Operator--nuclear H\"older inequality]
	\label{lem:matrix-holder}
	For matrices of the same shape,
	\begin{equation}
		|\langle \boldsymbol{A},\boldsymbol{B}\rangle_{\mathrm F}|
		\le\|\boldsymbol{A}\|_*\|\boldsymbol{B}\|_{\mathrm{op}}.
		\label{eq:matrix-holder}
	\end{equation}
	Equality in the maximization over $\|\boldsymbol{B}\|_{\mathrm{op}}\le1$ is attained by $\boldsymbol{B}=\operatorname{polar}(\boldsymbol{A})$.
\end{lemma}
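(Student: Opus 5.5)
I will prove the inequality from the compact SVD $\boldsymbol{A}=\boldsymbol{U}\boldsymbol{\Sigma}\boldsymbol{V}^{\mathsf H}$, with $\boldsymbol{U}=[\boldsymbol{u}_1,\ldots,\boldsymbol{u}_r]$ and $\boldsymbol{V}=[\boldsymbol{v}_1,\ldots,\boldsymbol{v}_r]$, by writing $\boldsymbol{A}$ as a sum of rank-one pieces. Since $\boldsymbol{A}=\sum_{i=1}^r\sigma_i\boldsymbol{u}_i\boldsymbol{v}_i^{\mathsf H}$, linearity of the inner product gives
\begin{equation*}
\langle\boldsymbol{A},\boldsymbol{B}\rangle_{\mathrm F}=\operatorname{tr}(\boldsymbol{A}^{\mathsf H}\boldsymbol{B})=\sum_{i=1}^r\sigma_i\,\operatorname{tr}(\boldsymbol{v}_i\boldsymbol{u}_i^{\mathsf H}\boldsymbol{B})=\sum_{i=1}^r\sigma_i\,\boldsymbol{u}_i^{\mathsf H}\boldsymbol{B}\boldsymbol{v}_i .
\end{equation*}
This reduces the claim to a bound on the bilinear form of $\boldsymbol{B}$ evaluated at unit vectors. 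If $\boldsymbol{A}=\boldsymbol{0}$, both sides vanish, so I assume $r\ge1$.

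Next I bound each term. By Cauchy--Schwarz and the definition of the operator norm, $|\boldsymbol{u}_i^{\mathsf H}\boldsymbol{B}\boldsymbol{v}_i|\le\|\boldsymbol{u}_i\|_2\|\boldsymbol{B}\boldsymbol{v}_i\|_2\le\|\boldsymbol{B}\|_{\mathrm{op}}$, because $\|\boldsymbol{u}_i\|_2=\|\boldsymbol{v}_i\|_2=1$. The triangle inequality together with $\sigma_i\ge0$ then yields
\begin{equation*}
|\langle\boldsymbol{A},\boldsymbol{B}\rangle_{\mathrm F}|\le\sum_i\sigma_i\|\boldsymbol{B}\|_{\mathrm{op}}=\|\boldsymbol{A}\|_*\|\boldsymbol{B}\|_{\mathrm{op}},
\end{equation*}
which is \eqref{eq:matrix-holder}.

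For the equality claim, I take $\boldsymbol{B}=\operatorname{polar}(\boldsymbol{A})$. By Lemma~\ref{lem:polar-identities}, this choice satisfies $\|\boldsymbol{B}\|_{\mathrm{op}}=1$ and $\langle\boldsymbol{A},\boldsymbol{B}\rangle_{\mathrm F}=\|\boldsymbol{A}\|_*$. So $\boldsymbol{B}$ is feasible for the constraint $\|\boldsymbol{B}\|_{\mathrm{op}}\le1$, and it attains the upper bound $\|\boldsymbol{A}\|_*\cdot1$ just proved. Hence $\sup_{\|\boldsymbol{B}\|_{\mathrm{op}}\le1}\operatorname{Re}\langle\boldsymbol{A},\boldsymbol{B}\rangle_{\mathrm F}=\|\boldsymbol{A}\|_*$, and the supremum is attained at the polar factor; when $\boldsymbol{A}=\boldsymbol{0}$, $\operatorname{polar}(\boldsymbol{0})=\boldsymbol{0}$ attains it trivially.

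There is no real obstacle in this argument. The only points needing care are the conjugation convention in the complex case, namely that $\operatorname{tr}(\boldsymbol{A}^{\mathsf H}\boldsymbol{B})$ produces $\boldsymbol{u}_i^{\mathsf H}\boldsymbol{B}\boldsymbol{v}_i$ rather than its conjugate, which does not affect the modulus, and the use of the compact SVD so that the sum runs only over the nonzero singular values.
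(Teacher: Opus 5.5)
Your proof is correct and follows essentially the same route as the paper. The paper writes $\langle \boldsymbol{A},\boldsymbol{B}\rangle_{\mathrm F}=\operatorname{tr}(\boldsymbol{\Sigma}\boldsymbol{C})$ with $\boldsymbol{C}=\boldsymbol{U}^{\mathsf H}\boldsymbol{B}\boldsymbol{V}$, bounds the diagonal entries $\boldsymbol{C}_{ii}=\boldsymbol{u}_i^{\mathsf H}\boldsymbol{B}\boldsymbol{v}_i$ by $\|\boldsymbol{C}\|_{\mathrm{op}}\le\|\boldsymbol{B}\|_{\mathrm{op}}$, and then uses the polar factor for attainment. Your termwise Cauchy--Schwarz bound is the same estimate without introducing the intermediate matrix $\boldsymbol{C}$.
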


\begin{proof}
	Write $\boldsymbol{A}=\boldsymbol{U}\boldsymbol{\Sigma}\boldsymbol{V}^{\mathsf H}$.  Cyclicity of trace gives
	\begin{equation}
		\langle \boldsymbol{A},\boldsymbol{B}\rangle_{\mathrm F}
		=\operatorname{tr}(\boldsymbol{A}^{\mathsf H}\boldsymbol{B})
		=\operatorname{tr}(\boldsymbol{\Sigma}\boldsymbol{U}^{\mathsf H}\boldsymbol{B}\boldsymbol{V}).
	\end{equation}
	Let $\boldsymbol{C}=\boldsymbol{U}^{\mathsf H}\boldsymbol{B}\boldsymbol{V}$.  Multiplication by matrices with orthonormal columns cannot increase the spectral norm, so $\|\boldsymbol{C}\|_{\mathrm{op}}\le\|\boldsymbol{B}\|_{\mathrm{op}}$.  Since every diagonal entry satisfies $|\boldsymbol{C}_{ii}|\le\|\boldsymbol{C}\|_{\mathrm{op}}$,
	\begin{align}
		|\operatorname{tr}(\boldsymbol{\Sigma}\boldsymbol{C})|
		&=\left|\sum_i\sigma_i\boldsymbol{C}_{ii}\right|
		\le\sum_i\sigma_i|\boldsymbol{C}_{ii}|\\
		&\le\left(\sum_i\sigma_i\right)\|\boldsymbol{C}\|_{\mathrm{op}}
		\le\|\boldsymbol{A}\|_*\|\boldsymbol{B}\|_{\mathrm{op}}.
	\end{align}
	If $\|\boldsymbol{B}\|_{\mathrm{op}}\le1$, Lemma~\ref{lem:polar-identities} shows that $\boldsymbol{B}=\boldsymbol{U}\boldsymbol{V}^{\mathsf H}$ is feasible and gives inner product $\|\boldsymbol{A}\|_*$.  Hence
	\begin{equation}
		\sup_{\|\boldsymbol{B}\|_{\mathrm{op}}\le1}\langle \boldsymbol{A},\boldsymbol{B}\rangle_{\mathrm F}=\|\boldsymbol{A}\|_*.
	\end{equation}
\end{proof}

\begin{lemma}[Rank-based norm bounds]
	\label{lem:rank-norm-bounds}
	If $\boldsymbol{A}\in\mathbb{C}^{m\times n}$ and $r_0:=\min(m,n)$, then
	\begin{equation}
		\|\boldsymbol{A}\|_*\le r_0\|\boldsymbol{A}\|_{\mathrm{op}},
		\qquad
		\|\boldsymbol{A}\|_{\mathrm F}^2\le r_0\|\boldsymbol{A}\|_{\mathrm{op}}^2.
		\label{eq:rank-norm-bounds}
	\end{equation}
\end{lemma}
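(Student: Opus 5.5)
Both bounds follow from the compact SVD $\boldsymbol{A}=\boldsymbol{U}\boldsymbol{\Sigma}\boldsymbol{V}^{\mathsf H}$ with $\boldsymbol{\Sigma}=\operatorname{diag}(\sigma_1,\ldots,\sigma_r)$, where $r=\operatorname{rank}(\boldsymbol{A})$. The only structural fact needed is that $r\le r_0=\min(m,n)$. This holds because $\boldsymbol{U}\in\mathbb{C}^{m\times r}$ and $\boldsymbol{V}\in\mathbb{C}^{n\times r}$ have orthonormal columns, so $r\le m$ and $r\le n$. The norms are then scalar functions of the singular values, as recorded in Appendix~\ref{app:matrix-tools}: $\|\boldsymbol{A}\|_{\mathrm{op}}=\sigma_1$, $\|\boldsymbol{A}\|_*=\sum_{i\le r}\sigma_i$, and $\|\boldsymbol{A}\|_{\mathrm F}^2=\sum_{i\le r}\sigma_i^2$.

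For the nuclear bound, we use $\sigma_i\le\sigma_1$ for every $i$:
\begin{equation*}
\|\boldsymbol{A}\|_*=\sum_{i=1}^r\sigma_i\le r\,\sigma_1\le r_0\|\boldsymbol{A}\|_{\mathrm{op}}.
\end{equation*}
For the Frobenius bound, we use $\sigma_i^2\le\sigma_1^2$ in the same way:
\begin{equation*}
\|\boldsymbol{A}\|_{\mathrm F}^2=\sum_{i=1}^r\sigma_i^2\le r\,\sigma_1^2\le r_0\|\boldsymbol{A}\|_{\mathrm{op}}^2.
\end{equation*}
If $\boldsymbol{A}=\boldsymbol{0}$, every quantity is zero and both inequalities hold trivially.

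The argument has no real obstacle. The one point to state explicitly is that the number of nonzero singular values is bounded by $\min(m,n)$, and this is immediate from the shapes of the SVD factors. As an optional remark, the Frobenius bound is tight exactly when all nonzero singular values are equal and $r=r_0$. For example, a full-rank partial isometry attains it, consistent with the identity $\|\operatorname{polar}(\boldsymbol{A})\|_{\mathrm F}^2=r$ from Lemma~\ref{lem:polar-identities}.
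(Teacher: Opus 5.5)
Your proof is correct and follows essentially the same route as the paper's: there are at most $r_0$ nonzero singular values, each bounded by $\sigma_1=\|\boldsymbol{A}\|_{\mathrm{op}}$, and you sum them directly. One small point: your optional tightness remark should also allow $\boldsymbol{A}=\boldsymbol{0}$, where equality holds trivially, though this does not affect the lemma.
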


\begin{proof}
	There are at most $r_0$ nonzero singular values and each is at most $\sigma_1=\|\boldsymbol{A}\|_{\mathrm{op}}$.  Therefore
	\begin{equation}
		\sum_i\sigma_i\le r_0\sigma_1,
		\qquad
		\sum_i\sigma_i^2\le r_0\sigma_1^2.
	\end{equation}
\end{proof}

\subsection{Dual norms, norm ordering, and linear minimization oracles (LMO)}
\label{app:dual-lmo-basics}

We begin by recalling the fundamental duality between measuring the size of a direction and measuring the size of a gradient. Given a norm $\|\cdot\|$ that defines the geometry of our trust region, its dual norm is defined as
\begin{equation}
	\|\boldsymbol{g}\|_* := \sup_{\|\boldsymbol{d}\| \le 1} \langle \boldsymbol{g}, \boldsymbol{d} \rangle.
	\label{eq:dual-norm-def}
\end{equation}
This quantity measures the maximum possible alignment between a fixed vector $\boldsymbol{g}$ (typically a gradient) and any direction $\boldsymbol{d}$ lying within the unit ball of the primal norm. Intuitively, the dual norm tells us how ``large'' the gradient appears when measured against the geometry induced by $\|\cdot\|$.

Given a trust-region radius $\rho > 0$, the linear minimization oracle (LMO) seeks the direction within the scaled ball $\|\boldsymbol{d}\| \le \rho$ that minimizes the first-order approximation of the objective:
\begin{equation}
	\argmin_{\|\boldsymbol{d}\| \le \rho} \langle \boldsymbol{g}, \boldsymbol{d} \rangle.
	\label{eq:lmo-def}
\end{equation}
Because the objective $\langle \boldsymbol{g}, \boldsymbol{d} \rangle$ is linear in $\boldsymbol{d}$, the minimum over a symmetric convex ball is always attained on the boundary, pointing in the direction opposite to $\boldsymbol{g}$. More precisely, by the definition of the dual norm in ~\eqref{eq:dual-norm-def}, the optimal value is
\begin{equation}
	\min_{\|\boldsymbol{d}\| \le \rho} \langle \boldsymbol{g}, \boldsymbol{d} \rangle = -\rho \|\boldsymbol{g}\|_*.
	\label{eq:lmo-optimal-value}
\end{equation}
The direction that achieves this minimum is called the \emph{normalized steepest-descent direction} with respect to the norm $\|\cdot\|$. It points in the direction of the negative dual certificate and has norm exactly $\rho$.

\begin{lemma}[Order reversal for dual norms]
	\label{lem:dual-order-reversal}
	Suppose two norms satisfy
	\begin{equation}
		a\|\boldsymbol{x}\|_A\le\|\boldsymbol{x}\|_B\le b\|\boldsymbol{x}\|_A
		\qquad\text{for all }\boldsymbol{x},
		\label{eq:norm-order}
	\end{equation}
	where $a,b>0$.  Then their duals satisfy
	\begin{equation}
		\frac1b\|\boldsymbol{g}\|_{A,*}\le\|\boldsymbol{g}\|_{B,*}\le\frac1a\|\boldsymbol{g}\|_{A,*}.
		\label{eq:dual-order}
	\end{equation}
\end{lemma}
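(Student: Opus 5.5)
We prove Lemma~\ref{lem:dual-order-reversal} directly from the definition of the dual norm in \eqref{eq:dual-norm-def}, by comparing the unit balls over which the two suprema are taken. The hypothesis \eqref{eq:norm-order} is a pair of inclusions between scaled unit balls, and the dual norm is monotone under enlarging the feasible set. Positive homogeneity of the supremum in the radius then rescales everything back to unit balls.

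First we translate the hypothesis into ball inclusions. Write $\mathbb{B}_A(r):=\{\boldsymbol{d}:\|\boldsymbol{d}\|_A\le r\}$, and similarly $\mathbb{B}_B(r)$. If $\|\boldsymbol{d}\|_B\le 1$, the left inequality gives $\|\boldsymbol{d}\|_A\le 1/a$, so $\mathbb{B}_B(1)\subseteq\mathbb{B}_A(1/a)$. If $\|\boldsymbol{d}\|_A\le 1/b$, the right inequality gives $\|\boldsymbol{d}\|_B\le 1$, so $\mathbb{B}_A(1/b)\subseteq\mathbb{B}_B(1)$.

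Next we record the homogeneity step. For any $r>0$, substituting $\boldsymbol{d}=r\boldsymbol{e}$ and using linearity of $\langle\boldsymbol{g},\cdot\rangle$ gives
\begin{equation*}
\sup_{\boldsymbol{d}\in\mathbb{B}_A(r)}\langle\boldsymbol{g},\boldsymbol{d}\rangle=r\,\|\boldsymbol{g}\|_{A,*}.
\end{equation*}

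We then take suprema of $\langle\boldsymbol{g},\boldsymbol{d}\rangle$ over the two inclusions. Since a supremum over a larger set is at least the supremum over a smaller one, the first inclusion yields $\|\boldsymbol{g}\|_{B,*}\le \tfrac1a\|\boldsymbol{g}\|_{A,*}$. The second inclusion yields $\tfrac1b\|\boldsymbol{g}\|_{A,*}\le\|\boldsymbol{g}\|_{B,*}$. Together these are exactly \eqref{eq:dual-order}.

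The only care needed is the complex case. There we use the real part $\operatorname{Re}\langle\boldsymbol{g},\boldsymbol{d}\rangle$ in the supremum, which is the convention implicit in Lemma~\ref{lem:polar-identities}. Since the real part is still real-linear in $\boldsymbol{d}$, the homogeneity step and the monotonicity argument go through unchanged. No step is a genuine obstacle; the substance is simply keeping the direction of the inequalities straight, which the ball inclusions make transparent.
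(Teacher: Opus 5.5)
Your proposal is correct and follows essentially the same argument as the paper: turn the two-sided bound into the inclusions $\{\|\boldsymbol{x}\|_A\le 1/b\}\subseteq\{\|\boldsymbol{x}\|_B\le 1\}\subseteq\{\|\boldsymbol{x}\|_A\le 1/a\}$, use monotonicity of the supremum in the feasible set, and rescale to unit balls by homogeneity. Your remark on taking the real part of the pairing in the complex case is a sensible refinement that the paper leaves implicit.
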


\begin{proof}
	We prove the two inequalities in ~\eqref{eq:dual-order} separately.
	
	First, for the lower bound: \( \|\boldsymbol{g}\|_{B,*} \ge \frac{1}{b} \|\boldsymbol{g}\|_{A,*} \).
	
	The condition \( \|\boldsymbol{x}\|_B \le b \|\boldsymbol{x}\|_A \) tells us that the \(B\)-ball is larger than the \(A\)-ball scaled by \(b\). More precisely, any vector with \( \|\boldsymbol{x}\|_A \le 1/b \) must satisfy \( \|\boldsymbol{x}\|_B \le 1 \). Hence
	\begin{equation}
		\{\boldsymbol{x} : \|\boldsymbol{x}\|_A \le 1/b \} \;\subseteq\; \{\boldsymbol{x} : \|\boldsymbol{x}\|_B \le 1 \}.
	\end{equation}
	Taking the supremum of the linear functional \( \langle \boldsymbol{g}, \boldsymbol{x} \rangle \) over a larger set can only increase the result. Therefore
	\begin{equation}
		\|\boldsymbol{g}\|_{B,*}
		= \sup_{\|\boldsymbol{x}\|_B \le 1} \langle \boldsymbol{g}, \boldsymbol{x} \rangle
		\ge \sup_{\|\boldsymbol{x}\|_A \le 1/b} \langle \boldsymbol{g}, \boldsymbol{x} \rangle
		= \frac{1}{b} \sup_{\|\boldsymbol{y}\|_A \le 1} \langle \boldsymbol{g}, \boldsymbol{y} \rangle
		= \frac{1}{b} \|\boldsymbol{g}\|_{A,*}.
	\end{equation}
	The third equality uses the change of variables \( \boldsymbol{x} = \boldsymbol{y}/b \).
	
	Second, for the upper bound: \( \|\boldsymbol{g}\|_{B,*} \le \frac{1}{a} \|\boldsymbol{g}\|_{A,*} \).
	
	Similarly, the condition \( a \|\boldsymbol{x}\|_A \le \|\boldsymbol{x}\|_B \) implies
	\begin{equation}
		\{\boldsymbol{x} : \|\boldsymbol{x}\|_B \le 1 \} \;\subseteq\; \{\boldsymbol{x} : \|\boldsymbol{x}\|_A \le 1/a \}.
	\end{equation}
	Taking the supremum of \( \langle \boldsymbol{g}, \boldsymbol{x} \rangle \) over the smaller set (the \(B\)-ball) gives a value no larger than taking it over the larger set (the \(A\)-ball):
	\begin{equation}
		\|\boldsymbol{g}\|_{B,*}
		= \sup_{\|\boldsymbol{x}\|_B \le 1} \langle \boldsymbol{g}, \boldsymbol{x} \rangle
		\le \sup_{\|\boldsymbol{x}\|_A \le 1/a} \langle \boldsymbol{g}, \boldsymbol{x} \rangle
		= \frac{1}{a} \|\boldsymbol{g}\|_{A,*}.
	\end{equation}
	Again, the last equality follows from \( \boldsymbol{x} = \boldsymbol{y}/a \).
\end{proof}

\subsection{The $Q$-transform spectral quantity is a norm}
\label{app:q-block-is-norm}

\begin{lemma}[tensor spectral norm under $\boldsymbol{Q}$-transform axioms]
	\label{lem:q-block-norm}
	If $\boldsymbol{Q}$ is invertible, then
	\begin{equation}
		\|\boldsymbol{\mathcal{X}}\|_{\boldsymbol{Q},\mathrm{Ts}}
		:=\max_j\|(\boldsymbol{\mathcal{X}}\times_3\boldsymbol{Q})_j\|_{\mathrm{op}}
	\end{equation}
	is a norm.
\end{lemma}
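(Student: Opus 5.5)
I will verify the three norm axioms directly, using that $\boldsymbol{\mathcal{X}}\mapsto\boldsymbol{\mathcal{X}}\times_3\boldsymbol{Q}$ is a linear map on tensors and that $\|\cdot\|_{\mathrm{op}}$ is already a norm on $m\times n$ matrices. The first step is to record linearity slice by slice. From $\widehat{\boldsymbol{X}}_j=\sum_k\boldsymbol{Q}_{jk}\boldsymbol{X}_k$ we get, for scalars $c$ and tensors $\boldsymbol{\mathcal{X}},\boldsymbol{\mathcal{Y}}$,
\begin{equation}
((c\boldsymbol{\mathcal{X}}+\boldsymbol{\mathcal{Y}})\times_3\boldsymbol{Q})_j=c\,\widehat{\boldsymbol{X}}_j+\widehat{\boldsymbol{Y}}_j .
\end{equation}
Nonnegativity is immediate, since the quantity is a maximum of operator norms.

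\textbf{Homogeneity.} Linearity gives $\|(c\boldsymbol{\mathcal{X}}\times_3\boldsymbol{Q})_j\|_{\mathrm{op}}=|c|\,\|\widehat{\boldsymbol{X}}_j\|_{\mathrm{op}}$ for every $j$. Since $|c|\ge0$, the factor $|c|$ passes through the maximum over $j$.

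\textbf{Triangle inequality.} For each $j$, the matrix triangle inequality gives
\begin{equation}
\|\widehat{\boldsymbol{X}}_j+\widehat{\boldsymbol{Y}}_j\|_{\mathrm{op}}\le\|\widehat{\boldsymbol{X}}_j\|_{\mathrm{op}}+\|\widehat{\boldsymbol{Y}}_j\|_{\mathrm{op}}\le\|\boldsymbol{\mathcal{X}}\|_{\boldsymbol{Q},\mathrm{Ts}}+\|\boldsymbol{\mathcal{Y}}\|_{\boldsymbol{Q},\mathrm{Ts}} .
\end{equation}
Taking the maximum over $j$ on the left yields the claim.

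\textbf{Definiteness.} This is the only step that uses invertibility of $\boldsymbol{Q}$. Suppose $\|\boldsymbol{\mathcal{X}}\|_{\boldsymbol{Q},\mathrm{Ts}}=0$. Then every $\widehat{\boldsymbol{X}}_j$ has zero operator norm, so every $\widehat{\boldsymbol{X}}_j=\boldsymbol{0}$, i.e.\ $\widehat{\boldsymbol{\mathcal{X}}}=\boldsymbol{0}$. Applying the inverse transform gives $\boldsymbol{\mathcal{X}}=\widehat{\boldsymbol{\mathcal{X}}}\times_3\boldsymbol{Q}^{-1}=\boldsymbol{0}$. The identity $(\boldsymbol{\mathcal{X}}\times_3\boldsymbol{Q})\times_3\boldsymbol{Q}^{-1}=\boldsymbol{\mathcal{X}}$ holds tube by tube: as in the proof of Lemma~\ref{lem:unitary-parseval}, $\widehat{\boldsymbol{x}}_{ab}=\boldsymbol{Q}\boldsymbol{x}_{ab}$, so if all transformed tubes vanish then $\boldsymbol{x}_{ab}=\boldsymbol{Q}^{-1}\boldsymbol{0}=\boldsymbol{0}$.

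The argument has no real obstacle. The only points needing care are stating definiteness through the tube-wise inverse, and noting that for complex $\boldsymbol{Q}$ the norm is taken on the complex space, or restricted to real tensors; the axioms hold either way. For unitary $\boldsymbol{Q}$, the inverse is simply $\boldsymbol{Q}^{\mathsf H}$.
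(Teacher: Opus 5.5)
Your proposal is correct and follows essentially the same route as the paper: a direct check of the three axioms. Definiteness uses invertibility of $\boldsymbol{Q}$ to pull $\boldsymbol{\mathcal{X}}\times_3\boldsymbol{Q}=\boldsymbol{0}$ back to $\boldsymbol{\mathcal{X}}=\boldsymbol{0}$, and homogeneity and the triangle inequality come from linearity of the mode-3 transform plus the slice-wise matrix operator-norm properties; your tube-wise justification of the inverse is a small detail the paper leaves implicit.
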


\begin{proof}
	We verify the three norm axioms.
	
	\medskip\noindent\textbf{Positive definiteness.}\quad
	Every spectral norm is nonnegative, hence the maximum is nonnegative.  If $\|\boldsymbol{\mathcal{X}}\|_{\boldsymbol{Q},\mathrm{Ts}}=0$, then every transformed slice is zero, so $\boldsymbol{\mathcal{X}}\times_3\boldsymbol{Q}=0$.  Since $\boldsymbol{Q}$ is invertible, applying $\boldsymbol{Q}^{-1}$ gives $\boldsymbol{\mathcal{X}}=0$.  Conversely, $\boldsymbol{\mathcal{X}}=0$ clearly gives norm zero.
	
	\medskip\noindent\textbf{Absolute homogeneity.}\quad
	For any scalar $c$,
	\begin{align}
		\|c\boldsymbol{\mathcal{X}}\|_{\boldsymbol{Q},\mathrm{Ts}}
		&=\max_j\|c\widehat{\boldsymbol{X}}_j\|_{\mathrm{op}}
		=|c|\max_j\|\widehat{\boldsymbol{X}}_j\|_{\mathrm{op}}
		=|c|\|\boldsymbol{\mathcal{X}}\|_{\boldsymbol{Q},\mathrm{Ts}}.
	\end{align}
	
	\medskip\noindent\textbf{Triangle inequality.}\quad
	Linearity of the transform and the matrix operator-norm triangle inequality give
	\begin{align}
		\|\boldsymbol{\mathcal{X}}+\boldsymbol{\mathcal{Y}}\|_{\boldsymbol{Q},\mathrm{Ts}}
		&=\max_j\|\widehat{\boldsymbol{X}}_j+\widehat{\boldsymbol{Y}}_j\|_{\mathrm{op}}\\
		&\le\max_j\bigl(\|\widehat{\boldsymbol{X}}_j\|_{\mathrm{op}}+\|\widehat{\boldsymbol{Y}}_j\|_{\mathrm{op}}\bigr)\\
		&\le\max_j\|\widehat{\boldsymbol{X}}_j\|_{\mathrm{op}}+
		\max_j\|\widehat{\boldsymbol{Y}}_j\|_{\mathrm{op}}.
	\end{align}
\end{proof}

%
%
%

\subsection{T-product and T-SVD tensor algebra}\label{app:qproduct}

We begin by introducing the foundational algebraic framework for third-order tensors under invertible linear transforms. Throughout this section, we let $\boldsymbol{\mathcal{A}} \in \mathbb{R}^{I_1 \times I_2 \times I_3}$ denote a third-order tensor, and we fix an orthogonal linear transform $\boldsymbol{Q} \in \mathbb{R}^{I_3 \times I_3}$ satisfying $\boldsymbol{Q}\boldsymbol{Q}^{\top} = \boldsymbol{Q}^{\top}\boldsymbol{Q} = \boldsymbol{I}$. The application of $\boldsymbol{Q}$ to a tensor is understood mode-wise along the third dimension, i.e., $\boldsymbol{Q}(\boldsymbol{\mathcal{A}}) \coloneqq \boldsymbol{\mathcal{A}} \times_3 \boldsymbol{Q}$.

\begin{definition}[$t$-product under orthogonal transform \citep{kernfeld2015tensor}]
	\label{def:t_product}
	For tensors $\boldsymbol{\mathcal{A}} \in \mathbb{R}^{I_1 \times r \times I_3}$ and $\boldsymbol{\mathcal{B}} \in \mathbb{R}^{r \times I_2 \times I_3}$, their $t$-product, denoted by $\boldsymbol{\mathcal{A}} \ast_{\boldsymbol{Q}} \boldsymbol{\mathcal{B}}$, is the tensor in $\mathbb{R}^{I_1 \times I_2 \times I_3}$ defined as
	\begin{equation}
		\boldsymbol{\mathcal{A}} \ast_{\boldsymbol{Q}} \boldsymbol{\mathcal{B}}
		\;\coloneqq\;
		\boldsymbol{Q}^{-1}\bigl( \boldsymbol{Q}(\boldsymbol{\mathcal{A}}) \odot \boldsymbol{Q}(\boldsymbol{\mathcal{B}}) \bigr),
	\end{equation}
	where $\odot$ denotes the slice-wise matrix product: for each $i_3 = 1,\dots, I_3$,
	\[
	\bigl[\boldsymbol{Q}(\boldsymbol{\mathcal{A}}) \odot \boldsymbol{Q}(\boldsymbol{\mathcal{B}})\bigr]_{:,:,i_3}
	\;=\;
	\bigl[\boldsymbol{Q}(\boldsymbol{\mathcal{A}})\bigr]_{:,:,i_3}
	\cdot
	\bigl[\boldsymbol{Q}(\boldsymbol{\mathcal{B}})\bigr]_{:,:,i_3}.
	\]
\end{definition}

\begin{definition}[$f$-diagonal tensor \citep{kilmer2011factorization}]
	\label{def:fdiag}
	A tensor $\boldsymbol{\mathcal{S}} \in \mathbb{R}^{I_1 \times I_2 \times I_3}$ is said to be \emph{$f$-diagonal} if every frontal slice $\boldsymbol{\mathcal{S}}_{:,:,i_3}$ is a diagonal matrix for all $i_3 \in \{1,\dots, I_3\}$.
\end{definition}

\begin{definition}[Identity tensor under $\boldsymbol{Q}$ \citep{kernfeld2015tensor}]
	\label{def:identity}
	The tensor $\boldsymbol{\mathcal{I}} \in \mathbb{R}^{I_1 \times I_1 \times I_3}$ is called the \emph{identity tensor} (with respect to $\boldsymbol{Q}$) if its transformed version $\boldsymbol{Q}(\boldsymbol{\mathcal{I}})$ has each frontal slice equal to the identity matrix $\boldsymbol{I}_{I_1} \in \mathbb{R}^{I_1 \times I_1}$.
\end{definition}

\begin{definition}[Tensor conjugate transpose \citep{kernfeld2015tensor}]
	\label{def:transpose}
	For a tensor $\boldsymbol{\mathcal{O}} \in \mathbb{R}^{I_1 \times I_2 \times I_3}$, its \emph{conjugate transpose} $\boldsymbol{\mathcal{O}}^{\top} \in \mathbb{R}^{I_2 \times I_1 \times I_3}$ is defined slice-wise in the transformed domain:
	\[
	\bigl[\boldsymbol{Q}(\boldsymbol{\mathcal{O}}^{\top})\bigr]_{:,:,i_3}
	\;=\;
	\bigl( \bigl[\boldsymbol{Q}(\boldsymbol{\mathcal{O}})\bigr]_{:,:,i_3} \bigr)^{\top},
	\qquad i_3 = 1,\dots, I_3.
	\]
\end{definition}

\begin{definition}[Orthogonal tensor \citep{kernfeld2015tensor}]
	\label{def:orthogonal}
	A tensor $\boldsymbol{\mathcal{O}} \in \mathbb{R}^{I_1 \times I_1 \times I_3}$ is called \emph{orthogonal} (under the $t$-product $\ast_{\boldsymbol{Q}}$) if it satisfies
	\[
	\boldsymbol{\mathcal{O}}^{\top} \ast_{\boldsymbol{Q}} \boldsymbol{\mathcal{O}}
	\;=\;
	\boldsymbol{\mathcal{O}} \ast_{\boldsymbol{Q}} \boldsymbol{\mathcal{O}}^{\top}
	\;=\;
	\boldsymbol{\mathcal{I}},
	\]
	where $\boldsymbol{\mathcal{I}}$ is the identity tensor from Definition \ref{def:identity}.
\end{definition}

\begin{theorem}\label{T-SVD}
	(T-SVD) \citep{kernfeld2015tensor} Let  $\boldsymbol{\mathcal{X}}\in \mathbb{R}^{I_1\times I_2 \times I_3}$. Then it can be factorized as 
	$ 	\boldsymbol{\mathcal{X}}=\boldsymbol{\mathcal{U}} \ast_{\boldsymbol{Q}} \boldsymbol{\mathcal{S}} \ast_{\boldsymbol{Q}} \boldsymbol{\mathcal{V}}^{\top}$,
	where $\boldsymbol{\mathcal{U}}  \in \mathbb{R}^{I_1\times I_1 \times I_3 }$, $\boldsymbol{\mathcal{V}} \in \mathbb{R}^{I_2\times I_2 \times I_3 }$  are orthogonal, and $\boldsymbol{\mathcal{S}} \in\mathbb{R}^{I_1\times I_2 \times I_3}$ is a f-diagonal tensor.
\end{theorem}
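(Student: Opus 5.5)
The T-SVD statement (Theorem~\ref{T-SVD}) is proved entirely in the transformed domain. There it reduces to $I_3$ independent matrix SVDs, and the result is then pulled back through $\boldsymbol{Q}^{-1}=\boldsymbol{Q}^{\top}$.

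First, form $\widehat{\boldsymbol{\mathcal{X}}}=\boldsymbol{\mathcal{X}}\times_3\boldsymbol{Q}$. For each $i_3$, take a full (not compact) real SVD of the frontal slice,
\[
\widehat{\boldsymbol{X}}_{i_3}=\widehat{\boldsymbol{U}}_{i_3}\widehat{\boldsymbol{S}}_{i_3}\widehat{\boldsymbol{V}}_{i_3}^{\top},
\]
with $\widehat{\boldsymbol{U}}_{i_3}\in\mathbb{R}^{I_1\times I_1}$ and $\widehat{\boldsymbol{V}}_{i_3}\in\mathbb{R}^{I_2\times I_2}$ orthogonal, and $\widehat{\boldsymbol{S}}_{i_3}\in\mathbb{R}^{I_1\times I_2}$ rectangular diagonal. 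These exist because $\boldsymbol{Q}$ is real orthogonal, so each slice is a real matrix. Stack the factors into tensors $\widehat{\boldsymbol{\mathcal{U}}}$, $\widehat{\boldsymbol{\mathcal{S}}}$, $\widehat{\boldsymbol{\mathcal{V}}}$. Then define
\[
\boldsymbol{\mathcal{U}}:=\widehat{\boldsymbol{\mathcal{U}}}\times_3\boldsymbol{Q}^{\top},\qquad
\boldsymbol{\mathcal{S}}:=\widehat{\boldsymbol{\mathcal{S}}}\times_3\boldsymbol{Q}^{\top},\qquad
\boldsymbol{\mathcal{V}}:=\widehat{\boldsymbol{\mathcal{V}}}\times_3\boldsymbol{Q}^{\top}.
\]
Because $(\boldsymbol{\mathcal{A}}\times_3\boldsymbol{Q}^{\top})\times_3\boldsymbol{Q}=\boldsymbol{\mathcal{A}}\times_3(\boldsymbol{Q}\boldsymbol{Q}^{\top})=\boldsymbol{\mathcal{A}}$, we have $\boldsymbol{Q}(\boldsymbol{\mathcal{U}})=\widehat{\boldsymbol{\mathcal{U}}}$, and likewise for $\boldsymbol{\mathcal{S}}$ and $\boldsymbol{\mathcal{V}}$.

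Second, verify the factorization using Definition~\ref{def:t_product} and Definition~\ref{def:transpose}.
\begin{itemize}
\item By Definition~\ref{def:transpose}, $\boldsymbol{Q}(\boldsymbol{\mathcal{V}}^{\top})$ has slices $\widehat{\boldsymbol{V}}_{i_3}^{\top}$.
\item The $t$-product is associative, since each side is $\boldsymbol{Q}^{-1}$ of a slice-wise triple product. Hence $\boldsymbol{Q}(\boldsymbol{\mathcal{U}}\ast_{\boldsymbol{Q}}\boldsymbol{\mathcal{S}}\ast_{\boldsymbol{Q}}\boldsymbol{\mathcal{V}}^{\top})$ has slices $\widehat{\boldsymbol{U}}_{i_3}\widehat{\boldsymbol{S}}_{i_3}\widehat{\boldsymbol{V}}_{i_3}^{\top}=\widehat{\boldsymbol{X}}_{i_3}$.
\item Applying $\boldsymbol{Q}^{-1}$ then gives $\boldsymbol{\mathcal{X}}$.
\end{itemize}
Orthogonality follows in the same way. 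The slices of $\boldsymbol{Q}(\boldsymbol{\mathcal{U}}^{\top}\ast_{\boldsymbol{Q}}\boldsymbol{\mathcal{U}})$ are $\widehat{\boldsymbol{U}}_{i_3}^{\top}\widehat{\boldsymbol{U}}_{i_3}=\boldsymbol{I}$, so this product is the identity tensor of Definition~\ref{def:identity}. The same holds for $\boldsymbol{\mathcal{U}}\ast_{\boldsymbol{Q}}\boldsymbol{\mathcal{U}}^{\top}$ and for $\boldsymbol{\mathcal{V}}$, which gives Definition~\ref{def:orthogonal}.

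The main subtlety is $f$-diagonality. Definition~\ref{def:fdiag} asks for $\boldsymbol{\mathcal{S}}$ itself, not $\widehat{\boldsymbol{\mathcal{S}}}$, to have diagonal frontal slices. This holds because off-diagonal positions are preserved under the mode-3 transform. Each slice of $\boldsymbol{\mathcal{S}}$ is $\boldsymbol{S}_k=\sum_{j}(\boldsymbol{Q}^{\top})_{kj}\widehat{\boldsymbol{S}}_j$, a linear combination of rectangular-diagonal matrices, and is therefore rectangular diagonal. So $\boldsymbol{\mathcal{S}}$ is $f$-diagonal, and so is $\widehat{\boldsymbol{\mathcal{S}}}$, whose diagonal entries are the slice singular values. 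The later use of $\boldsymbol{\mathcal{J}}_{\mathrm{supp}}$ relies on this transformed-domain description.

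The remaining bookkeeping is the mode-product composition rule $(\boldsymbol{\mathcal{A}}\times_3\boldsymbol{A})\times_3\boldsymbol{B}=\boldsymbol{\mathcal{A}}\times_3(\boldsymbol{B}\boldsymbol{A})$. I would check it once, tube by tube, as in Lemma~\ref{lem:unitary-parseval}.
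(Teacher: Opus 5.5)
Your proof is correct and follows the standard construction behind this result. The paper gives no proof of its own and simply cites Kernfeld et al., but it describes the T-SVD exactly as you build it: slice-wise SVDs of $\boldsymbol{\mathcal{X}}\times_3\boldsymbol{Q}$ followed by the inverse transform, which is the description it relies on in the proof of Proposition~\ref{prop:q-tpolar}. Your check that $f$-diagonality transfers from $\widehat{\boldsymbol{\mathcal{S}}}$ to $\boldsymbol{\mathcal{S}}$, because the mode-3 transform acts tube by tube, is a detail the paper leaves implicit.
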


The T-SVD in Theorem \ref{T-SVD} is illustrated in figure \ref{Illustrations of T-SVD}.

\begin{figure}[!ht]
	\centering
	{
		\includegraphics[width=0.6\textwidth]{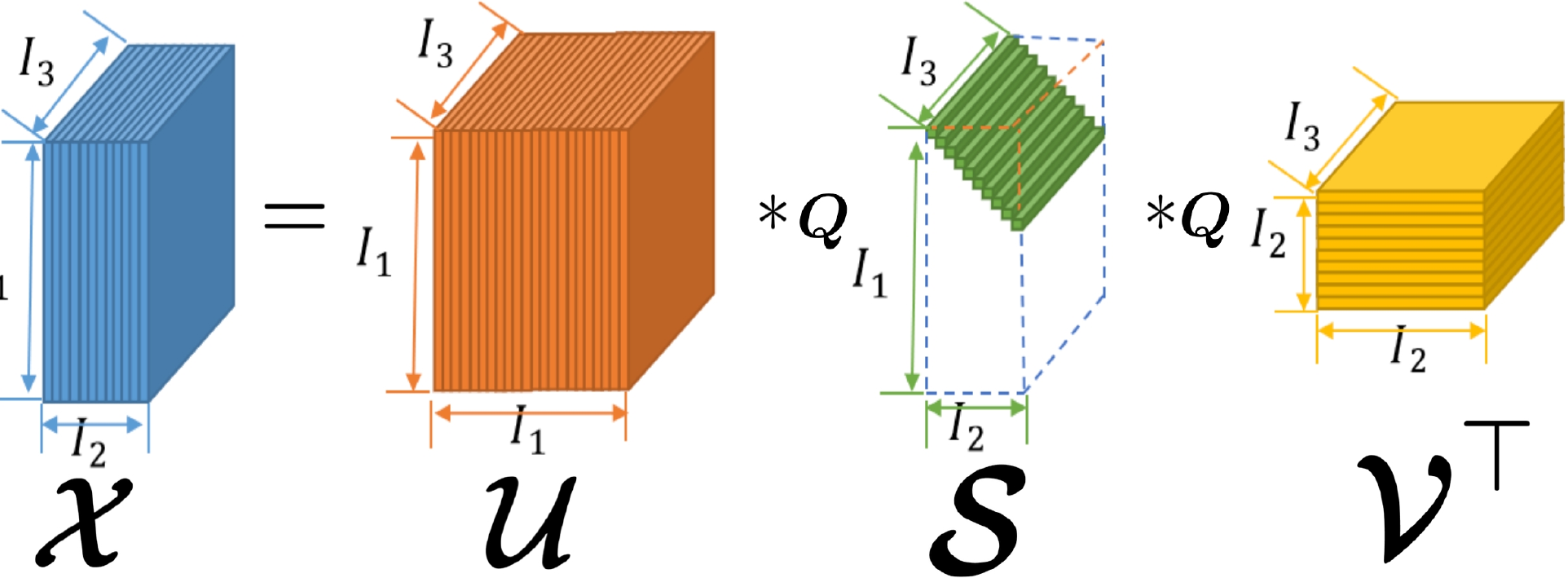}
		\label{}
	}
	\caption{Illustrations of T-SVD}
	\label{Illustrations of T-SVD} 
\end{figure}

The slice decoupling is all that is needed in the tensor-polar proof.

\subsection{Smoothness and the descent lemma}
\label{app:smoothness}

Let $\|\cdot\|$ be a norm and $\|\cdot\|_*$ its dual.  A differentiable function is $L$-smooth \cite{beck2017first} in this norm pair if
\begin{equation}
	\|\nabla F(\boldsymbol{X})-\nabla F(\boldsymbol{Y})\|_*
	\le L\|\boldsymbol{X}-\boldsymbol{Y}\|.
	\label{eq:norm-smoothness-appendix}
\end{equation}

\begin{lemma}[Descent lemma in a general norm]
	\label{lem:general-descent}
	Let $F:\mathbb{R}^{d}\to\mathbb{R}$ be a continuously differentiable function, and let $\|\cdot\|$ be a norm on $\mathbb{R}^{d}$ with dual norm $\|\cdot\|_{*}$. 
	Under ~\eqref{eq:norm-smoothness-appendix}, i.e., its gradient is $L$-Lipschitz under the dual norm,  then, for any $\boldsymbol{X},\boldsymbol{D}\in\mathbb{R}^{d}$,
	\begin{equation}
		F(\boldsymbol{X}+\boldsymbol{D})
		\le F(\boldsymbol{X})+\langle\nabla F(\boldsymbol{X}),\boldsymbol{D}\rangle
		+\frac{L}{2}\|\boldsymbol{D}\|^2.
		\label{eq:general-descent}
	\end{equation}
\end{lemma}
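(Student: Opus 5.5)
The plan is the standard integral (fundamental theorem of calculus) argument for the descent lemma, with Euclidean Cauchy--Schwarz replaced by the duality pairing between $\|\cdot\|$ and $\|\cdot\|_*$. Fix $\boldsymbol{X},\boldsymbol{D}$ and define the scalar path $\phi(s):=F(\boldsymbol{X}+s\boldsymbol{D})$ for $s\in[0,1]$. Since $F$ is continuously differentiable, $\phi$ is $C^1$ with $\phi'(s)=\langle\nabla F(\boldsymbol{X}+s\boldsymbol{D}),\boldsymbol{D}\rangle$. Hence
\begin{equation*}
F(\boldsymbol{X}+\boldsymbol{D})-F(\boldsymbol{X})-\langle\nabla F(\boldsymbol{X}),\boldsymbol{D}\rangle
=\int_0^1\langle\nabla F(\boldsymbol{X}+s\boldsymbol{D})-\nabla F(\boldsymbol{X}),\boldsymbol{D}\rangle\,ds .
\end{equation*}

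Next I bound the integrand. The definition of the dual norm in \eqref{eq:dual-norm-def}, applied after rescaling $\boldsymbol{D}/\|\boldsymbol{D}\|$ (the case $\boldsymbol{D}=0$ is trivial), gives the generalized Hölder inequality $\langle\boldsymbol{g},\boldsymbol{D}\rangle\le\|\boldsymbol{g}\|_*\|\boldsymbol{D}\|$. Taking $\boldsymbol{g}=\nabla F(\boldsymbol{X}+s\boldsymbol{D})-\nabla F(\boldsymbol{X})$ and using the smoothness assumption \eqref{eq:norm-smoothness-appendix}, the integrand is at most $L\|s\boldsymbol{D}\|\,\|\boldsymbol{D}\|=Ls\|\boldsymbol{D}\|^2$ by absolute homogeneity. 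Integrating $\int_0^1 Ls\,ds=L/2$ gives exactly \eqref{eq:general-descent}.

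There is no real obstacle; the one point needing care is the pairing convention. In this paper the tensor Frobenius inner product may be complex (it is defined with $\mathsf H$), so I would state the argument for the real inner product on $\mathbb{R}^d$, as in the lemma's hypotheses. If needed, I would replace $\langle\cdot,\cdot\rangle$ by its real part throughout, which is what the derivative of a real function along a path produces. With that convention, the dual-norm supremum and the Hölder step remain valid, because the supremum over the symmetric unit ball of the real part equals the dual norm.
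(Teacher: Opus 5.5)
Your proposal is correct and matches the paper's proof essentially step for step: the paper also sets $\phi(s)=F(\boldsymbol{X}+s\boldsymbol{D})$, writes the remainder as $\int_0^1\langle\nabla F(\boldsymbol{X}+s\boldsymbol{D})-\nabla F(\boldsymbol{X}),\boldsymbol{D}\rangle\,ds$, bounds the integrand by $Ls\|\boldsymbol{D}\|^2$ via the dual H\"older inequality and the smoothness assumption, and integrates to obtain $L/2$. Your remark about using the real part of the pairing in the complex case is a harmless refinement that the paper does not spell out.
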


\begin{proof}
	We prove the descent lemma by integrating the smoothness condition along the line segment from $\boldsymbol{X}$ to $\boldsymbol{X}+\boldsymbol{D}$.
	
	First, we give the exact integral representation. 
	Define the scalar function $\phi(s) := F(\boldsymbol{X} + s\boldsymbol{D})$ for $s \in [0,1]$. 
	By the fundamental theorem of calculus,
	\begin{equation}
		F(\boldsymbol{X}+\boldsymbol{D}) - F(\boldsymbol{X})
		= \phi(1) - \phi(0)
		= \int_0^1 \phi'(s) \, ds.
	\end{equation}
	Applying the chain rule gives $\phi'(s) = \langle \nabla F(\boldsymbol{X} + s\boldsymbol{D}), \boldsymbol{D} \rangle$, hence
	\begin{equation}
		F(\boldsymbol{X}+\boldsymbol{D}) - F(\boldsymbol{X})
		= \int_0^1 \langle \nabla F(\boldsymbol{X} + s\boldsymbol{D}), \boldsymbol{D} \rangle \, ds.
		\label{eq:proof-descent-integral}
	\end{equation}
	To isolate the first-order term, we add and subtract $\langle \nabla F(\boldsymbol{X}), \boldsymbol{D} \rangle$ inside the integral:
	\begin{align}
		F(\boldsymbol{X}+\boldsymbol{D}) - F(\boldsymbol{X})
		&= \int_0^1 \langle \nabla F(\boldsymbol{X}), \boldsymbol{D} \rangle \, ds
		+ \int_0^1 \langle \nabla F(\boldsymbol{X} + s\boldsymbol{D}) - \nabla F(\boldsymbol{X}), \boldsymbol{D} \rangle \, ds \\
		&= \langle \nabla F(\boldsymbol{X}), \boldsymbol{D} \rangle
		+ \int_0^1 \langle \nabla F(\boldsymbol{X} + s\boldsymbol{D}) - \nabla F(\boldsymbol{X}), \boldsymbol{D} \rangle \, ds.
		\label{eq:proof-descent-decomposed}
	\end{align}
	The first term is exactly the linear approximation at $\boldsymbol{X}$; the integral captures the accumulated deviation of the gradient along the path.
	
	Then we bound the integral using smoothness. 
	For each $s \in [0,1]$, the dual H\"older inequality gives
	\begin{equation}
		\left| \langle \nabla F(\boldsymbol{X} + s\boldsymbol{D}) - \nabla F(\boldsymbol{X}), \boldsymbol{D} \rangle \right|
		\le \| \nabla F(\boldsymbol{X} + s\boldsymbol{D}) - \nabla F(\boldsymbol{X}) \|_* \, \| \boldsymbol{D} \|.
	\end{equation}
	By the $L$-smoothness assumption in ~\eqref{eq:norm-smoothness-appendix}, the gradient deviation is bounded by
	\begin{equation}
		\| \nabla F(\boldsymbol{X} + s\boldsymbol{D}) - \nabla F(\boldsymbol{X}) \|_*
		\le L \| s\boldsymbol{D} \|
		= Ls \| \boldsymbol{D} \|.
	\end{equation}
	Combining these two bounds yields
	\begin{equation}
		\left| \langle \nabla F(\boldsymbol{X} + s\boldsymbol{D}) - \nabla F(\boldsymbol{X}), \boldsymbol{D} \rangle \right|
		\le Ls \| \boldsymbol{D} \|^2.
		\label{eq:proof-descent-integrand-bound}
	\end{equation}

	Applying the bound in \eqref{eq:proof-descent-integrand-bound} to the integral in ~\eqref{eq:proof-descent-decomposed}, we obtain
	\begin{align}
		F(\boldsymbol{X}+\boldsymbol{D}) - F(\boldsymbol{X})
		&\le \langle \nabla F(\boldsymbol{X}), \boldsymbol{D} \rangle
		+ \int_0^1 Ls \| \boldsymbol{D} \|^2 \, ds \\
		&= \langle \nabla F(\boldsymbol{X}), \boldsymbol{D} \rangle
		+ \frac{L}{2} \| \boldsymbol{D} \|^2.
	\end{align}
	This proves the claimed descent inequality.
\end{proof}

\section{Related work}\label{related-work-appendix}
Optimizers can be categorized by the structural granularity they exploit. 
\paragraph{Vector-aware optimizers.}
Vector-aware optimizers treat parameters as vectors, updating each coordinate independently based on per-coordinate gradient statistics.   Adagrad~\citep{duchi2011adaptive} introduced per-parameter scaling but suffers from aggressive learning-rate decay due to accumulating squared gradients. RMSprop~\citep{hinton2012neural} addressed this by employing exponential moving averages, providing more stable updates. Adam~\citep{kingma2015adam}, combining momentum with adaptive scaling, has become the default optimizer in most deep learning pipelines, with AdamW~\citep{loshchilov2017decoupled} further improving generalization by decoupling weight decay from gradient updates.

Recent variants enhance efficiency and convergence through various strategies. Adan~\citep{xie2024adan} and Win~\citep{zhou2024win,zhou2023win} strengthen Adam with Nesterov-style acceleration for faster convergence. Lion~\citep{chen2023symbolic} removes second-moment tracking entirely, trading some adaptability for significant memory savings. Sophia~\citep{liu2023sophia} leverages approximate second-order information (Hessian diagonal) to accelerate pre-training. Adam-mini~\citep{zhang2024adam} reduces memory footprint via block-wise learning rates, partitioning parameters into blocks that share adaptive scales. Despite their practical success, all these methods fundamentally treat parameters as flattened vectors, updating each coordinate independently and disregarding the structural correlations inherent in matrix-valued or tensor-valued parameters such as weight matrices and convolutional kernels.

\paragraph{Matrix-aware optimizers.}
Recognizing the limitations of vector-wise updates, a growing body of work exploits matrix structure to improve optimization. K-FAC~\citep{martens2015optimizing} and Shampoo~\citep{gupta2018shampoo} use Kronecker-factored approximations of the Fisher or curvature matrix, enabling efficient second-order updates that respect matrix geometry. Adafactor~\citep{shazeer2018adafactor} reduces memory through factored second-moment estimates, while LAMB~\citep{you2019large} introduces layer-wise normalization for stable large-batch training. More recently, GaLore~\citep{zhao2024galore} projects gradients into low-rank subspaces to substantially reduce optimizer memory usage during pre-training. SOAP~\citep{vyas2025soap} interprets Shampoo in the eigenbasis of its preconditioner and performs Adam-style updates in that basis, achieving improved performance over Adam at the cost of additional computational overhead. AdaDiag~\citep{nguyen2025improving} leverages SVD to transform gradients into a near-diagonal preconditioning space, accelerating convergence through better-conditioned updates.

Among matrix-aware methods, Muon~\citep{jordan2024muon} takes a distinctly different approach. Instead of constructing a preconditioner, Muon applies the polar factor—the unitary part of the polar decomposition—to each layer-wise momentum matrix via efficient Newton--Schulz iterations. This operation replaces all nonzero singular values with one, effectively regularizing the spectral distribution of update matrices and preventing rank collapse. The result is improved isotropy and training stability, particularly in large-scale settings where gradient rank collapse is a known issue. Subsequent work has scaled Muon to larger architectures~\citep{liu2025muon}. Other recent advances further explore structural efficiency and scalability~\citep{wang20244,chen2024fira,pethick2025training,liu2025cosmos,an2025asgo,xie2025structured,yu2024zeroth}. Collectively, these matrix-aware methods demonstrate that structural awareness—particularly spectral operations at the matrix level—can substantially enhance training efficiency, naturally motivating further exploration into higher-order tensor structures.

\paragraph{Tensor-aware optimizers.}
Recent advances in optimization for deep neural networks have sought to exploit the structured nature of parameters beyond the vectorized view. 
Tensorion~\cite{bogachev2026tensorion} extends Muon to higher-order parameter tensors, such as convolutional kernels, by replacing the matrix spectral norm with a designed tensor norm that balances tightness to the tensor spectral norm. Concretely, Tensorion defines a relaxed dual norm as the maximum of nuclear norms over a set of tensor unfoldings, and its primal norm is obtained via duality. The optimizer then selects a matricization mode that yields the most balanced unfolding, thereby maximizing a high-probability upper bound on the nuclear norm. Then it applies Newton--Schulz orthogonalization to the resulting matrix before folding it back. 

Tensorion is primarily designed for convolutional neural networks and other vision-oriented architectures where parameters naturally appear as 4D tensors. Empirically, it demonstrates improved convergence and test accuracy over both SGD-based and Adam-based baselines on CIFAR and Tiny ImageNet datasets, as well as consistent gains when applied to the convolutional patch-embedding module in Vision Transformers. However, its reliance on a fixed unfolding per layer, while computationally efficient, may not fully adapt to dynamically changing gradient structures during training, and its validation is largely confined to computer vision tasks, with limited exploration in large-scale language modeling or sequential domains.

In contrast, TEON~\cite{zhang2026teon} generalizes Muon across layers rather than within a single tensor. Motivated by the observation that layer-wise orthogonalization neglects cross-layer gradient correlations, TEON stacks gradient matrices of the Query, Key, and Value projections of Transformer blocks into a third-order tensor and applies mode-1 matricization followed by matrix orthogonalization. This choice is grounded in a theoretical analysis under the non-Euclidean trust-region framework, which shows that TEON can achieve up to a \(\sqrt{K}\)-fold improvement over Muon in convergence bounds when the stacked gradients exhibit aligned top right singular vectors. TEON is thus tailored for large language model pre-training, where parameter blocks are predominantly matrices but exhibit strong inter-layer and inter-component correlations. Experimental results on GPT-style and LLaMA-style models, ranging from 60M to 1B parameters, demonstrate consistent perplexity improvements over Muon and AdamW under various approximate SVD schemes. Nevertheless, \(\sqrt{K}\)-fold convergence improvement is theoretically appealing but practically limited, as its benefits are realized only at 
$K=2$. Larger stacks fail to maintain the required singular-vector alignment and yield diminishing performance.
More fundamentally, TEON applies a single global orthogonalization to the entire concatenated unfolding. The flat shape of a single matrix unfolded by all stacked layers may cause information loss. It inevitably suppresses weaker but potentially informative layer-specific signals when \(K\) grows or when layer-wise directions are not perfectly aligned.

The limitations of Tensorion and TEON reveal a common structural tension: Tensorion operates \emph{intra-tensor} by selecting a single unfolding of a native higher-order parameter, while TEON operates \emph{inter-tensor} by globally matricizing stacked layers, but both rely on a single, fixed matricization that collapses cross-mode or cross-layer information into a monolithic matrix before orthogonalization. This raises a natural question: can we design a tensor-aware optimizer that captures cross-layer structure without sacrificing per-layer expressivity, while avoiding the NP-hardness of the full tensor spectral norm?

We answer this affirmatively with \textbf{TUON}, a novel optimizer built upon transform-based tensor algebra. Instead of globally matricizing stacked layers, TUON applies a unitary transform \(\boldsymbol{Q}\) along the layer mode, effectively rotating cross-layer information into a transform domain where each frontal slice becomes an independent \(m \times n\) matrix representing a transformed cross-layer mode. Critically, TUON then applies the Muon polar map \emph{independently to each transformed slice} before inverting the transform. This "mix-then-separate" strategy captures cross-layer correlations through the unitary mixing while preserving per-layer diversity through slice-wise polar maps, preventing the dominance of a single global direction that plagues TEON's global unfolding.

TUON is algebraically distinct from both predecessors. Unlike TEON, whose global polar map is invariant to unitary layer transforms, TUON's slice-wise polar map does \emph{not} commute with the layer transform, making the choice of \(\boldsymbol{Q}\) genuinely consequential and enabling the optimizer to adapt to the data-dependent structure of layer-wise gradients. Unlike Tensorion, TUON operates in an exact LMO sense under a well-defined \(\boldsymbol{Q}\) spectral norm, with dual norm equal to the sum of transformed-slice nuclear norms, eliminating the need for unfolding-mode search or spectral-norm relaxation. Theoretically, TUON enjoys a geometry-matched convergence guarantee and achieves up to a \(\sqrt{K}\)-fold improvement over TEON in both radius-matched and energy-matched oracle alignments, with the gain governed by a measurable alignment factor. Practically, the method adds only a lightweight unitary transform and retains the same Newton--Schulz iterations as Muon, making it scalable and easy to integrate into existing training pipelines. By combining the theoretical rigor of exact LMO geometry with the practical efficiency of transform-based tensor algebra, TUON offers a principled and generalizable solution to tensor-aware optimization that bridges the gap between intra-tensor structure and inter-layer correlations.

\paragraph{Tensor algebra.}
In the past years, two classical frameworks have dominated tensor decomposition related tensor algebra. 
CP decomposition factorizes a tensor as a sum of rank-one tensors; while conceptually straightforward, it is computationally demanding and its rank is NP-hard to compute \citep{kolda2009tensor}.
Tucker decomposition, also known as Higher-Order Singular Value Decomposition (HOSVD), generalizes matrix SVD by decomposing a tensor into a core tensor multiplied by factor matrices along each mode \citep{tucker1966some, de2000multilinear}. 
These methods are grounded in a \textbf{matricization-based tensor algebra}, where tensors are first unfolded into matrices, and all subsequent algebraic operations are carried out at the matrix level. 
Although HOSVD successfully produces orthogonal bases per mode, the resulting core tensor is generally not f-diagonal (definitions can be found in Appendix~\ref{app:qproduct}), losing the neat spectral interpretation of diagonal singular values in the matrix case.
To remedy this, \citet{kilmer2011factorization} introduced the t-product, a novel multiplication based on circular convolution and the Fast Fourier Transform (FFT).
The t-product induces the tensor SVD (T-SVD):
\[
\boldsymbol{\mathcal{X}} = \boldsymbol{\mathcal{U}} \ast_{\boldsymbol{Q}} \boldsymbol{\mathcal{S}} \ast_{\boldsymbol{Q}} \boldsymbol{\mathcal{V}}^{\top},
\]
where \(\boldsymbol{\mathcal{U}}\) and \(\boldsymbol{\mathcal{V}}\) are orthogonal, and \(\boldsymbol{\mathcal{S}}\) is f-diagonal, thereby mirroring the matrix SVD structure.
Computation proceeds by applying FFT along the third dimension, performing matrix SVD on each frontal slice, and inverting the transform. 
A major generalization is made by \citet{kernfeld2015tensor}, who show that the t-product only depends on the chosen invertible linear transform and need not be limited to the Fourier transform.
This leads to tensor-tensor products with arbitrary invertible transforms, such as the Discrete Cosine Transform (DCT), wavelets, or even data-dependent transforms, greatly expanding flexibility and applicability \citep{kernfeld2015tensor, lu2019low}.
This yields a family of products and factorizations with algebraic structures \citep{kernfeld2015tensor}. 
These methods rely on the t-product, a tensor algebra defined natively on tensors rather than through unfolding. Matrix operations appear only as slice-wise operations in the transform domain. We refer to this paradigm as \textbf{transform-based tensor algebra}.

Subsequent works systematically investigate  relations between T-SVD and HOSVD and established algebraic properties such as orthogonal invariance.
Meanwhile, the tensor nuclear norm (TNN) and the tubal rank were defined based on T-SVD, providing powerful convex optimization tools for low-rank tensor recovery, including tensor completion and robust principal component analysis \citep{lu2018exact, lu2019low, zheng2020tensor}.
Algorithmically, the original T-SVD requires computing many matrix SVDs. \citet{lu2018exact} reduce the cost by leveraging conjugate symmetry in the Fourier domain, requiring only about half the SVD computations.
More recently, randomized T-SVD algorithms have been proposed for large-scale data via tensor random projections to obtain fast low-rank approximations.
These tools have been successfully applied to tensor completion, robust principal component analysis, and multilinear systems \citep{lu2019low, zheng2020tensor, lu2018exact}.
Recent work extends these ideas to higher-order tensors via separable transforms along multiple modes \citep{zheng2020tensor}, and to data-dependent transforms that adapt the basis to the data at hand \citep{kong2021tensor, zheng2026refta}. 
General tensor algebra has also been considered in low-rank optimization~\citep{yang2026space,yang2026optimization,yang2025variational}. 
We specialize these tools to a full-parameter optimizer and develop comparative, convergence, and inexact-oracle analyses. 

The definitions and theorems related to t-product and T-SVD can be found in the Preliminaries \ref{app:qproduct} in Appendix.

\section{Muon and TEON Geometries}
\label{app:baselines}

This appendix fixes the baseline notation used in the main comparisons.  The discussion concerns ideal exact polar maps; finite-iteration implementations may approximate them.

\subsection{Muon}
\label{app:muon-baseline}

For a stack $\boldsymbol{\mathcal{X}} \in \mathbb{C}^{m \times n \times K}$ with frontal slices $\boldsymbol{X}_k \in \mathbb{C}^{m \times n}$ for $k=1,\ldots,K$, we define the Muon norm pair as follows:
\begin{equation}
	\|\boldsymbol{\mathcal{X}}\|_{\mathrm{Muon}}
	:=\max_{1 \le k \le K} \|\boldsymbol{X}_k\|_{\mathrm{op}},
	\qquad
	\|\boldsymbol{\mathcal{G}}\|_{\mathrm{Muon},*}
	:=\sum_{k=1}^K \|\boldsymbol{G}_k\|_*.
	\label{eq:appendix-muon-norms}
\end{equation}
Here, $\boldsymbol{\mathcal{X}}$ denotes a generic tensor direction (used as the argument of the primal norm), while $\boldsymbol{\mathcal{G}}$ denotes a gradient tensor (used as the argument of the dual norm). The constraint $\|\boldsymbol{\mathcal{D}}\|_{\mathrm{Muon}} \le \rho$ means that each layer's update direction $\boldsymbol{D}_k$ has spectral norm at most $\rho$, independently of all other layers.

\begin{proposition}[Ideal Muon as an exact LMO]
	\label{prop:muon-lmo-appendix}
	Suppose each gradient slice has a compact singular value decomposition
	\begin{equation}
		\boldsymbol{G}_k = \boldsymbol{U}_k \boldsymbol{\Sigma}_k \boldsymbol{V}_k^{\mathsf H},
		\qquad k = 1,\ldots,K,
	\end{equation}
	where $\boldsymbol{U}_k \in \mathbb{C}^{m \times r_k}$ and $\boldsymbol{V}_k \in \mathbb{C}^{n \times r_k}$ have orthonormal columns, and $\boldsymbol{\Sigma}_k = \operatorname{diag}(\sigma_{k,1},\ldots,\sigma_{k,r_k}) \in \mathbb{R}^{r_k \times r_k}$ is diagonal with positive singular values. Then an optimal solution of the linear minimization problem
	\begin{equation}
		\min_{\|\boldsymbol{\mathcal{D}}\|_{\mathrm{Muon}} \le \rho}
		\langle \boldsymbol{\mathcal{G}}, \boldsymbol{\mathcal{D}} \rangle_{\mathrm F}
	\end{equation}
	is given by the slices
	\begin{equation}
		\boldsymbol{D}_k^* = -\rho \, \boldsymbol{U}_k \boldsymbol{V}_k^{\mathsf H},
		\qquad k = 1,\ldots,K,
	\end{equation}
	and the corresponding optimal value is
	\begin{equation}
		\min_{\|\boldsymbol{\mathcal{D}}\|_{\mathrm{Muon}} \le \rho}
		\langle \boldsymbol{\mathcal{G}}, \boldsymbol{\mathcal{D}} \rangle_{\mathrm F}
		= -\rho \sum_{k=1}^K \|\boldsymbol{G}_k\|_*.
	\end{equation}
\end{proposition}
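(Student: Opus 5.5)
The statement to prove is Proposition~\ref{prop:muon-lmo-appendix}. The plan has three steps: decouple the Muon-ball problem across slices, solve each slice with the matrix H\"older inequality of Lemma~\ref{lem:matrix-holder}, and then recombine.

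\textbf{Decoupling.} The constraint $\|\boldsymbol{\mathcal{D}}\|_{\mathrm{Muon}}\le\rho$ is, by definition (\ref{eq:appendix-muon-norms}), equivalent to the $K$ independent constraints $\|\boldsymbol{D}_k\|_{\mathrm{op}}\le\rho$ for $k=1,\ldots,K$. The objective splits as $\langle\boldsymbol{\mathcal{G}},\boldsymbol{\mathcal{D}}\rangle_{\mathrm F}=\sum_k\langle\boldsymbol{G}_k,\boldsymbol{D}_k\rangle_{\mathrm F}$. The feasible set is therefore a Cartesian product of the slice balls, and the objective is a sum of terms each depending on a single slice. Hence the minimum equals the sum of the slice-wise minima, and any tuple of slice-wise minimizers is a global minimizer. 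If we work over $\mathbb{C}$, we take the real part of the inner product throughout; this is harmless because the optimal values turn out to be real.

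\textbf{Slice bound.} Fix $k$. By Lemma~\ref{lem:matrix-holder},
\[
\langle\boldsymbol{G}_k,\boldsymbol{D}_k\rangle_{\mathrm F}\ge-\|\boldsymbol{G}_k\|_*\|\boldsymbol{D}_k\|_{\mathrm{op}}\ge-\rho\|\boldsymbol{G}_k\|_*.
\]
This lower bound is attained by $\boldsymbol{D}_k^*=-\rho\,\boldsymbol{U}_k\boldsymbol{V}_k^{\mathsf H}$. By Lemma~\ref{lem:polar-identities}, $\|\boldsymbol{D}_k^*\|_{\mathrm{op}}=\rho$ (or $0$ if $\boldsymbol{G}_k=0$), so $\boldsymbol{D}_k^*$ is feasible. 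The same lemma gives $\langle\boldsymbol{G}_k,\boldsymbol{D}_k^*\rangle_{\mathrm F}=-\rho\|\boldsymbol{G}_k\|_*$.

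\textbf{Recombination.} Summing the slice bounds over $k$ yields the optimal value $-\rho\sum_k\|\boldsymbol{G}_k\|_*$, attained by the stacked $\boldsymbol{D}_k^*$.

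\textbf{Main obstacle.} The only real point is the rigorous justification of the decoupling: the max-type norm constraint must be exactly a product constraint, with no coupling between slices. This follows immediately from the definition of the max, so the argument is otherwise a direct application of the two lemmas. The degenerate case $\boldsymbol{G}_k=0$ needs one sentence: there $\operatorname{polar}(\boldsymbol{0})=\boldsymbol{0}$ and any feasible $\boldsymbol{D}_k$ is optimal.
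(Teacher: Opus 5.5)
Your proposal is correct and matches the paper's proof essentially step for step. The paper also splits the max-type Muon constraint into the $K$ slice constraints $\|\boldsymbol{D}_k\|_{\mathrm{op}}\le\rho$, decomposes the objective slice-wise, solves each subproblem with Lemma~\ref{lem:matrix-holder}, and sums the optimal values; your remarks on taking real parts over $\mathbb{C}$ and on the degenerate case $\boldsymbol{G}_k=0$ are minor refinements the paper leaves implicit.
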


\begin{proof}
	The trust-region constraint $\|\boldsymbol{\mathcal{D}}\|_{\mathrm{Muon}} \le \rho$ is, by definition, equivalent to the $K$ independent constraints
	\begin{equation}
		\|\boldsymbol{D}_k\|_{\mathrm{op}} \le \rho
		\qquad \text{for all } k = 1,\ldots,K.
	\end{equation}
	The objective also decouples:
	\begin{equation}
		\langle \boldsymbol{\mathcal{G}}, \boldsymbol{\mathcal{D}} \rangle_{\mathrm F}
		= \sum_{k=1}^K \langle \boldsymbol{G}_k, \boldsymbol{D}_k \rangle_{\mathrm F}.
	\end{equation}
	Thus the overall optimization separates into $K$ independent matrix subproblems, each of the form
	\begin{equation}
		\min_{\|\boldsymbol{D}_k\|_{\mathrm{op}} \le \rho}
		\langle \boldsymbol{G}_k, \boldsymbol{D}_k \rangle_{\mathrm F}.
	\end{equation}
	By Lemma~\ref{lem:matrix-holder}, the optimal solution to each subproblem is
	\begin{equation}
		\boldsymbol{D}_k^* = -\rho \, \operatorname{polar}(\boldsymbol{G}_k)
		= -\rho \, \boldsymbol{U}_k \boldsymbol{V}_k^{\mathsf H},
	\end{equation}
	with optimal value $-\rho \|\boldsymbol{G}_k\|_*$. Summing the $K$ independent optimal values proves the claim.
\end{proof}

\subsection{TEON}
\label{app:teon-baseline}

Without loss of generality, for analysis, let $\mathscr U(\boldsymbol{\mathcal{X}})=[\boldsymbol{X}_1,\ldots,\boldsymbol{X}_K]$.  Define
\begin{equation}
	\|\boldsymbol{\mathcal{X}}\|_{\mathrm{TEON}}
	:=\|\mathscr U(\boldsymbol{\mathcal{X}})\|_{\mathrm{op}},
	\qquad
	\|\boldsymbol{\mathcal{G}}\|_{\mathrm{TEON},*}
	:=\|\mathscr U(\boldsymbol{\mathcal{G}})\|_*.
	\label{eq:appendix-teon-norms}
\end{equation}
Unlike Muon, TEON couples all slices through one concatenated matrix constraint.

\begin{proposition}[Ideal TEON as an exact LMO]
	\label{prop:teon-lmo-appendix}
	Let $\mathscr U(\boldsymbol{\mathcal{G}})=\boldsymbol{U}\boldsymbol{\Sigma}\boldsymbol{V}^{\mathsf H}$.  Then
	\begin{equation}
		\boldsymbol{\mathcal{D}}_{\mathrm{TEON}}^*
		=-\rho\mathscr U^{-1}(\boldsymbol{U}\boldsymbol{V}^{\mathsf H})
	\end{equation}
	solves
	\begin{equation}
		\min_{\|\boldsymbol{\mathcal{D}}\|_{\mathrm{TEON}}\le\rho}
		\langle\boldsymbol{\mathcal{G}},\boldsymbol{\mathcal{D}}\rangle_{\mathrm F},
	\end{equation}
	with optimal value $-\rho\|\mathscr U(\boldsymbol{\mathcal{G}})\|_*$.
\end{proposition}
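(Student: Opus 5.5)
The plan is to reduce the claim to the matrix operator/nuclear duality of Lemma~\ref{lem:matrix-holder}, applied to the single matrix $\mathscr U(\boldsymbol{\mathcal{G}})$. The unfolding $\mathscr U:\mathbb{C}^{m\times n\times K}\to\mathbb{C}^{m\times nK}$ is a linear bijection that only rearranges entries. Hence two facts hold:
\[
\langle\boldsymbol{\mathcal{G}},\boldsymbol{\mathcal{D}}\rangle_{\mathrm F}=\langle\mathscr U(\boldsymbol{\mathcal{G}}),\mathscr U(\boldsymbol{\mathcal{D}})\rangle_{\mathrm F},
\]
and the feasible set $\{\|\boldsymbol{\mathcal{D}}\|_{\mathrm{TEON}}\le\rho\}$ is exactly the preimage under $\mathscr U$ of the matrix ball $\{\boldsymbol{B}:\|\boldsymbol{B}\|_{\mathrm{op}}\le\rho\}$. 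The first identity holds because both sides are $\sum_k\operatorname{tr}(\boldsymbol{G}_k^{\mathsf H}\boldsymbol{D}_k)$: the trace of the block product $[\boldsymbol{G}_1,\ldots,\boldsymbol{G}_K]^{\mathsf H}[\boldsymbol{D}_1,\ldots,\boldsymbol{D}_K]$ equals the sum of its diagonal blocks' traces. The tensor problem is therefore equivalent to
\[
\min_{\|\boldsymbol{B}\|_{\mathrm{op}}\le\rho}\langle\boldsymbol{A},\boldsymbol{B}\rangle_{\mathrm F},\qquad \boldsymbol{A}:=\mathscr U(\boldsymbol{\mathcal{G}}),
\]
with $\boldsymbol{\mathcal{D}}=\mathscr U^{-1}(\boldsymbol{B})$.

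Next, I solve the matrix problem by substituting $\boldsymbol{B}=-\rho\boldsymbol{C}$, which turns it into $-\rho\sup_{\|\boldsymbol{C}\|_{\mathrm{op}}\le1}\langle\boldsymbol{A},\boldsymbol{C}\rangle_{\mathrm F}$. Lemma~\ref{lem:matrix-holder} gives the supremum as $\|\boldsymbol{A}\|_*$, attained at $\boldsymbol{C}=\operatorname{polar}(\boldsymbol{A})=\boldsymbol{U}\boldsymbol{V}^{\mathsf H}$. I then confirm the minimizer is feasible and attains the value. By Lemma~\ref{lem:polar-identities}, $\|{-\rho}\boldsymbol{U}\boldsymbol{V}^{\mathsf H}\|_{\mathrm{op}}=\rho$, so $\boldsymbol{\mathcal{D}}^*_{\mathrm{TEON}}$ lies in the TEON ball. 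The same lemma gives $\langle\boldsymbol{A},-\rho\boldsymbol{U}\boldsymbol{V}^{\mathsf H}\rangle_{\mathrm F}=-\rho\|\boldsymbol{A}\|_*$. The lower bound $\langle\boldsymbol{A},\boldsymbol{B}\rangle_{\mathrm F}\ge-\rho\|\boldsymbol{A}\|_*$ for every feasible $\boldsymbol{B}$ follows from the H\"older inequality $|\langle\boldsymbol{A},\boldsymbol{B}\rangle|\le\|\boldsymbol{A}\|_*\|\boldsymbol{B}\|_{\mathrm{op}}$. Pulling back by $\mathscr U^{-1}$ yields the stated minimizer and optimal value $-\rho\|\mathscr U(\boldsymbol{\mathcal{G}})\|_*$.

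Two edge cases need handling. If $\boldsymbol{\mathcal{G}}=0$, then $\operatorname{polar}(0)=0$ by convention and every feasible point is optimal with value $0$, so the claim holds trivially. In the real setting the inner product is real. In the complex setting I interpret $\langle\cdot,\cdot\rangle_{\mathrm F}$ through its real part, as Lemma~\ref{lem:polar-identities} already notes, and the same argument goes through.

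The only step needing care is the inner-product identity under unfolding: that the Frobenius pairing is invariant under $\mathscr U$. It is the point where the tensor geometry is transported to the matrix one, but it is routine once written blockwise. The rest is a direct invocation of earlier lemmas, mirroring the proof of Proposition~\ref{prop:muon-lmo-appendix} with one coupled block in place of $K$ decoupled slices.
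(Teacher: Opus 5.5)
Your proposal is correct and follows essentially the same route as the paper. The paper likewise uses the fact that the unfolding $\mathscr U$ preserves the Frobenius pairing to recast the problem as a single matrix LMO over the operator-norm ball, solves that with Lemma~\ref{lem:matrix-holder} via the polar factor $\boldsymbol{U}\boldsymbol{V}^{\mathsf H}$, and folds back. Your blockwise trace justification and your handling of $\boldsymbol{\mathcal{G}}=0$ and of the real part in the complex case only make explicit details that the paper leaves implicit.
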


\begin{proof}
	Matricization preserves the Frobenius inner product, so the tensor problem is exactly
	\begin{equation}
		\min_{\|\boldsymbol{D}_{(1)}\|_{\mathrm{op}}\le\rho}
		\langle\mathscr U(\boldsymbol{\mathcal{G}}),\boldsymbol{D}_{(1)}\rangle_{\mathrm F}.
	\end{equation}
	Lemma~\ref{lem:matrix-holder} gives $\boldsymbol{D}_{(1)}^*=-\rho\boldsymbol{U}\boldsymbol{V}^{\mathsf H}$.  Folding this matrix back to a tensor gives the stated direction.
\end{proof}

\subsection{How a layer transform acts on the TEON unfolding}
\label{app:unfold-transform-proof}

We establish a key algebraic identity that underpins the comparison between TEON and our proposed method. 

Let $\boldsymbol{\mathcal{X}} \in \mathbb{C}^{m \times n \times K}$ be a third-order tensor with frontal slices $\boldsymbol{X}_1, \ldots, \boldsymbol{X}_K$, and let $\mathscr U(\boldsymbol{\mathcal{X}}) := [\boldsymbol{X}_1, \ldots, \boldsymbol{X}_K] \in \mathbb{C}^{m \times nK}$ denote its horizontal unfolding. Under the mode-3 transform convention, applying a unitary matrix $\boldsymbol{Q} \in \mathbb{C}^{K \times K}$ along the layer dimension and then unfolding is equivalent to first unfolding and then multiplying on the right by a Kronecker product:
\begin{equation}
	\mathscr U(\boldsymbol{\mathcal{X}} \times_3 \boldsymbol{Q})
	= \mathscr U(\boldsymbol{\mathcal{X}}) (\boldsymbol{Q}^{\mathsf T} \otimes \boldsymbol{I}_n).
	\label{eq:unfold-transform-identity}
\end{equation}

The Kronecker product $\boldsymbol{R}_{\boldsymbol{Q}} := \boldsymbol{Q}^{\mathsf T} \otimes \boldsymbol{I}_n$ is unitary whenever $\boldsymbol{Q}$ is:
\begin{equation}
	\boldsymbol{R}_{\boldsymbol{Q}}^{\mathsf H} \boldsymbol{R}_{\boldsymbol{Q}}
	= (\overline{\boldsymbol{Q}} \otimes \boldsymbol{I}_n)(\boldsymbol{Q}^{\mathsf T} \otimes \boldsymbol{I}_n)
	= (\overline{\boldsymbol{Q}} \boldsymbol{Q}^{\mathsf T}) \otimes \boldsymbol{I}_n
	= \boldsymbol{I}_{nK}.
\end{equation}

This identity reveals a crucial distinction between TEON and TUON. The TEON norm is defined as the operator norm of the horizontal unfolding:
\begin{equation}
	\|\boldsymbol{\mathcal{X}}\|_{\mathrm{TEON}}
	:= \|\mathscr U(\boldsymbol{\mathcal{X}})\|_{\mathrm{op}}.
\end{equation}
Since $\boldsymbol{R}_{\boldsymbol{Q}}$ is unitary, right-multiplication by $\boldsymbol{R}_{\boldsymbol{Q}}$ leaves the operator norm invariant:
\begin{equation}
	\|\mathscr U(\boldsymbol{\mathcal{X}} \times_3 \boldsymbol{Q})\|_{\mathrm{op}}
	= \|\mathscr U(\boldsymbol{\mathcal{X}}) \boldsymbol{R}_{\boldsymbol{Q}}\|_{\mathrm{op}}
	= \|\mathscr U(\boldsymbol{\mathcal{X}})\|_{\mathrm{op}}.
\end{equation}
Consequently, the TEON norm is invariant to the choice of the unitary layer transform $\boldsymbol{Q}$: the transform is entirely ``absorbed'' by the unitary invariance of the spectral norm.

This stands in sharp contrast to TUON, where the layer transform is applied \emph{before} the nonlinear slice-wise polar map. Because the polar map does not commute with unitary transformations (as shown in Theorem~\ref{thm:nonredundancy}), the TUON geometry depends nontrivially on $\boldsymbol{Q}$. The identity in ~\eqref{eq:unfold-transform-identity} is therefore instrumental in the TEON comparison (Section~\ref{subsec:muon-comparison}) and the transform-cancellation proof (Theorem~\ref{thm:nonredundancy}), where it formalizes why TEON cannot exploit cross-layer structure through a unitary layer basis.

\section{Selection of $Q$}
\label{Selection-of-Q}

\subsection{Momentum tensor and layer transform}
For one compatible group, stack the gradients and momenta as
$
\cG_t(:,:,k)=\boldsymbol{G}_t^{(k)}$ and $
\cM_t=\beta\cM_{t-1}+(1-\beta)\cG_t$
where $\cM_t\in\R^{m\times n\times K}$. Given a unitary $\boldsymbol{Q}\in \mathbb{C}^{K\times K}$, $\boldsymbol{Q}^{\mathsf H} \boldsymbol{Q}=I$, our tensor-product-transform treats every tube as a length-$K$ column,
\begin{equation}
	\widehat{\cM}_t=\cM_t\times_3\boldsymbol{Q},
	\quad
	\text{where the $j$-th slice of $\widehat{\cM}_t$}\quad \widehat{\boldsymbol{M}}_{t,j}=\widehat{\boldsymbol{\mathcal{M}}}_t(:,:,j)=\sum_{k=1}^K \boldsymbol{Q}_{jk}\boldsymbol{M}_t^{(k)}.
	\label{eq:method-transform}
\end{equation}

Before we study the choice of $\boldsymbol{Q}$, we   establish a rigorous equivalence between tensor slice representation and matrix multiplication.  

We use the sans-serif bold symbol $\mathbf{\mathsf{M}}$ exclusively for the mode-3 unfolding matrix, reserving the italic bold $\boldsymbol{M}$ for individual slice matrices (e.g., $\boldsymbol{M}_{t}^{(k)}$).

First, 	we establish a rigorous equivalence between the slice-wise linear combination
\begin{equation}
	\widehat{\boldsymbol{M}}_{t,j} = \sum_{k=1}^{K} Q_{jk} \, \boldsymbol{M}_{t}^{(k)}, \qquad j = 1, \dots, K,
	\label{eq:slice-combination}
\end{equation}
and the mode-3 matrix multiplication
\begin{equation}
	\widehat{\mathbf{\mathsf{M}}}_t = \mathbf{Q} \, \mathbf{\mathsf{M}}_t,
	\label{eq:matrix-multiplication}
\end{equation}
where $\boldsymbol{M}_{t}^{(k)} \in \mathbb{R}^{m \times n}$ are the original slices, $Q_{jk} \in \mathbb{C}$ are the entries of $\mathbf{Q} \in \mathbb{C}^{K \times K}$, and $\widehat{\boldsymbol{M}}_{t,j} \in \mathbb{R}^{m \times n}$ are the resulting slices.

It can be proved as follows.  
Define the \textbf{vectorization} operator $\operatorname{vec}: \mathbb{R}^{m \times n} \to \mathbb{R}^{mn}$ that stacks the columns of a matrix into a single column vector. We construct the mode-3 unfolding matrix $\mathbf{\mathsf{M}}_t \in \mathbb{R}^{K \times mn}$ by stacking the transposed vectorizations of each slice as rows:
\begin{equation}
	\mathbf{\mathsf{M}}_t =
	\begin{bmatrix}
		\operatorname{vec}(\boldsymbol{M}_{t}^{(1)})^{\mathsf T} \\
		\operatorname{vec}(\boldsymbol{M}_{t}^{(2)})^{\mathsf T} \\
		\vdots \\
		\operatorname{vec}(\boldsymbol{M}_{t}^{(K)})^{\mathsf T}
	\end{bmatrix}.
	\label{eq:mode3-unfolding}
\end{equation}
Equivalently, the $k$-th row of $\mathbf{\mathsf{M}}_t$ satisfies
\begin{equation}
	(\mathbf{\mathsf{M}}_t)_{k, :} = \operatorname{vec}(\boldsymbol{M}_{t}^{(k)})^{\mathsf T}.
\end{equation}

\textbf{Then we consider the matrix multiplication. }
Let $\mathbf{Q} \in \mathbb{C}^{K \times K}$. Compute the matrix product
\begin{equation}
	\widehat{\mathbf{\mathsf{M}}}_t = \mathbf{Q} \, \mathbf{\mathsf{M}}_t \in \mathbb{C}^{K \times mn}.
	\label{eq:Q-left-multiply}
\end{equation}
By the definition of matrix multiplication, the $j$-th row of $\widehat{\mathbf{\mathsf{M}}}_t$ is
\begin{equation}
	(\widehat{\mathbf{\mathsf{M}}}_t)_{j, :} = \sum_{k=1}^{K} Q_{jk} \, (\mathbf{\mathsf{M}}_t)_{k, :}.
	\label{eq:row-combination}
\end{equation}

\textbf{Next, we apply the inverse vectorization operator} $\operatorname{vec}^{-1}: \mathbb{R}^{mn} \to \mathbb{R}^{m \times n}$ to each row of $\widehat{\mathbf{\mathsf{M}}}_t$. By linearity of $\operatorname{vec}^{-1}$,
\begin{equation}
	\operatorname{vec}^{-1}\!\left( (\widehat{\mathbf{\mathsf{M}}}_t)_{j, :}^{\mathsf T} \right)
	= \operatorname{vec}^{-1}\!\left( \sum_{k=1}^{K} Q_{jk} \, (\mathbf{\mathsf{M}}_t)_{k, :}^{\mathsf T} \right)
	= \sum_{k=1}^{K} Q_{jk} \, \operatorname{vec}^{-1}\!\left( (\mathbf{\mathsf{M}}_t)_{k, :}^{\mathsf T} \right).
\end{equation}
By construction of $\mathbf{\mathsf{M}}_t$ in ~\eqref{eq:mode3-unfolding},
\begin{equation}
	\operatorname{vec}^{-1}\!\left( (\mathbf{\mathsf{M}}_t)_{k, :}^{\mathsf T} \right) = \boldsymbol{M}_{t}^{(k)}.
\end{equation}
Therefore, defining
\begin{equation}
	\widehat{\boldsymbol{M}}_{t,j} := \operatorname{vec}^{-1}\!\left( (\widehat{\mathbf{\mathsf{M}}}_t)_{j, :}^{\mathsf T} \right),
\end{equation}
we obtain
\begin{equation}
	\widehat{\boldsymbol{M}}_{t,j} = \sum_{k=1}^{K} Q_{jk} \, \boldsymbol{M}_{t}^{(k)}.
	\label{eq:recovered-slice}
\end{equation}

As a conclusion, ~\eqref{eq:recovered-slice} is identical to ~\eqref{eq:slice-combination}. Hence,
\begin{equation}
	\widehat{\boldsymbol{M}}_{t,j} = \sum_{k=1}^{K} Q_{jk} \boldsymbol{M}_{t}^{(k)}
	\;\Longleftrightarrow\;
	\widehat{\mathbf{\mathsf{M}}}_t = \mathbf{Q} \mathbf{\mathsf{M}}_t
\end{equation}
where the equivalence holds up to the bijective mapping between slices and rows established by $\operatorname{vec}$ and $\operatorname{vec}^{-1}$. Consequently, the slice-wise linear combination is precisely the row-wise linear combination effected by left-multiplication by $\mathbf{Q}$ on the mode-3 unfolding.

The equivalence above justifies interpreting $\times_3$ as left-multiplication on the mode-3 unfolding. 
Under above formulations, we give several concrete choices of $\boldsymbol{Q}$ and analyze the default DecorQ method we choose .
\paragraph{DCT.}
For the orthonormal DCT-II,
\[
\boldsymbol{Q}_{jk}=\alpha_j\cos\!\left[\frac{\pi}{K}\left(k+\frac12\right)j\right],
\quad
\alpha_0=K^{-1/2},\quad \alpha_j=(2/K)^{1/2}\;(j>0),
\]
with zero-based indices.  It is real and expresses smooth and oscillatory depth modes without imposing a periodic endpoint.

The main practical transform is as follow. For $K=2$,
\begin{equation}
	\widehat{\boldsymbol{M}}_{t,1}=\frac{{\boldsymbol{M}}_t^{(1)}+{\boldsymbol{M}}_t^{(2)}}{\sqrt2},
	\qquad
	\widehat{\boldsymbol{M}}_{t,2}=\frac{{\boldsymbol{M}}_t^{(1)}-{\boldsymbol{M}}_t^{(2)}}{\sqrt2}.
	\label{eq:two-point-dct}
\end{equation}
The first slice is a shared adjacent-layer mode and the second is a residual mode.

\paragraph{Haar transform.}
The Haar transform is the simplest orthogonal wavelet transform. For a vector of length $N = 2^n$, it recursively decomposes adjacent pairs of entries into their sum and difference, producing a multi-scale representation: the leading coefficients encode global averages, while the remaining coefficients encode local differences at increasingly finer scales.

Formally, the Haar matrix $H_N$ is defined recursively as
\begin{equation}
	H_1 = [1], \qquad
	H_{2n} = \frac{1}{\sqrt{2}}
	\begin{bmatrix}
		H_n \otimes [1,\ 1] \\
		I_n \otimes [1,\ -1]
	\end{bmatrix},
\end{equation}
where $\otimes$ denotes the Kronecker product and $I_n$ is the $n \times n$
identity matrix. For example,
\begin{equation}
	H_2 = \frac{1}{\sqrt{2}}
	\begin{bmatrix}
		1 & 1 \\
		1 & -1
	\end{bmatrix}, \qquad
	H_4 = \frac{1}{2}
	\begin{bmatrix}
		1 & 1 & 1 & 1 \\
		1 & 1 & -1 & -1 \\
		\sqrt{2} & -\sqrt{2} & 0 & 0 \\
		0 & 0 & \sqrt{2} & -\sqrt{2}
	\end{bmatrix}.
\end{equation}

The Haar matrix is orthogonal, i.e., $H_N H_N^\top = H_N^\top H_N = I$, and
can be applied in $O(N)$ time via a butterfly scheme. Unlike the DCT, whose
basis functions are global cosine waves, each Haar basis function is supported on a small number of adjacent coordinates. It is therefore well suited to capturing local, piecewise, or multi-scale correlation structures, while requiring only additions, subtractions, and powers of $\sqrt{2}$, with no trigonometric evaluations.

\paragraph{DFT/FFT.}
The unitary DFT uses $\boldsymbol{Q}_{jk}=K^{-1/2}e^{-2\pi i jk/K}$.  It is implemented with an orthonormal FFT.  Frontal slices and NS products are complex-valued, with Hermitian rather than ordinary transposes.

\paragraph{DecorQ.} 
Let \(\mathsf{M}_t \in \mathbb{R}^{K \times mn}\) be the mode-3 unfolding of the momentum tensor \(\boldsymbol{\mathcal{M}}_t\), whose \(k\)-th row is \(\operatorname{vec}(\boldsymbol{M}_t^{(k)})^{\mathsf T}\). We set \(\boldsymbol{Q} = U_3^{\mathsf T}\), where \(U_3\) is the left singular matrix from the SVD \(\mathsf{M}_t = U_3 \Sigma_3 V_3^{\mathsf T}\). This choice is not arbitrary: it diagonalizes the layer-wise Gram matrix, inducing a geometrically decoupled coordinate system in which subsequent operations act independently on each slice.

To see this, consider the Gram matrix of the original slices:
\begin{equation}
	\mathsf{C}_t := \mathsf{M}_t \mathsf{M}_t^{\mathsf T} \in \mathbb{R}^{K \times K}, \qquad
	(\mathsf{C}_t)_{ij} = \langle \boldsymbol{M}_t^{(i)}, \boldsymbol{M}_t^{(j)} \rangle_{\mathrm F},
\end{equation}
whose $(i,j)$-th entry measures the Frobenius inner product between the $i$-th and $j$-th momentum slices. This matrix encodes the \textbf{pairwise coupling} among layers, that is large off-diagonal entries indicate strongly correlated update directions, while near-zero entries indicate decoupled evolution. The goal of $\boldsymbol{Q}$ is to  \textbf{diagonalize} this coupling structure via a unitary rotation, so that each transformed slice evolves independently in its own orthogonal subspace.

Under the transform \(\widehat{\mathsf{M}}_t = \boldsymbol{Q} \mathsf{M}_t\), the Gram matrix of the transformed slices becomes
\begin{equation}
	\widehat{\mathsf{C}}_t
	= \widehat{\mathsf{M}}_t \widehat{\mathsf{M}}_t^{\mathsf T}
	= \boldsymbol{Q} \mathsf{M}_t \mathsf{M}_t^{\mathsf T} \boldsymbol{Q}^{\mathsf T}
	= U_3^{\mathsf T} \bigl( U_3 \Sigma_3^2 U_3^{\mathsf T} \bigr) U_3
	= \Sigma_3^2,
\end{equation}
which is diagonal. Hence, for any \(i \neq j\),
\begin{equation}
	\langle \widehat{\boldsymbol{M}}_{t,i}, \widehat{\boldsymbol{M}}_{t,j} \rangle_{\mathrm F}
	= (\widehat{\mathsf{C}}_t)_{ij} = 0.
\end{equation}
The transformed slices \(\widehat{\boldsymbol{M}}_{t,1}, \dots, \widehat{\boldsymbol{M}}_{t,K}\) are therefore mutually orthogonal in the Frobenius inner product sense.

This orthogonality has a fundamental algorithmic consequence. The polarization step computes \(\widehat{\boldsymbol{P}}_{t,j} = \operatorname{polar}(\widehat{\boldsymbol{M}}_{t,j})\), which preserves the column and row subspaces of each slice while discarding its magnitude. Since the slices are mutually orthogonal, their polar factors remain confined to their respective orthogonal subspaces:
\begin{equation}
	\langle \widehat{\boldsymbol{P}}_{t,i}, \widehat{\boldsymbol{P}}_{t,j} \rangle_{\mathrm F} = 0, \quad i \neq j.
\end{equation}
Because the inverse transform \(\boldsymbol{\mathcal{P}}_t = \widehat{\boldsymbol{\mathcal{P}}}_t \times_3 \boldsymbol{Q}^{\mathsf H}\) is a unitary rotation (and thus preserves inner products), this orthogonality carries over to the original tensor domain:
\begin{equation}
	\langle \boldsymbol{\mathcal{P}}_t(:,:,i), \boldsymbol{\mathcal{P}}_t(:,:,j) \rangle_{\mathrm F} = 0, \quad i \neq j.
\end{equation}

Consequently, the update direction for slice \(i\) resides entirely in a subspace orthogonal to that of slice \(j\). Any modification to slice \(i\)'s update—whether scaling, clipping, or otherwise adjusting its magnitude—produces no projection onto slice \(j\)'s subspace. In other words, adjusting slice \(i\) does not affect slice \(j\). This geometric decoupling effectively transforms the original coupled multi-layer optimization problem into \(K\) independent single-channel subproblems, each evolving in its own orthogonal subspace without mutual interference.

\section{Polarization via Newton--Schulz iteration}\label{Newton--Schulz-polynomial}

We now describe the practical polarization step that replaces the expensive singular value decomposition. Instead of computing the exact polar factor via SVD, we use a fast iterative approximation based on the Newton--Schulz polynomial.

For each transformed slice $\widehat{\boldsymbol{M}}_{t,j} \in \mathbb{R}^{m \times n}$, we first ensure that the matrix is in a shape suitable for efficient iteration. If $m > n$, we transpose the slice so that the number of rows does not exceed the number of columns; this keeps the Gram matrix $\boldsymbol{X} \boldsymbol{X}^{\mathsf T}$ as small as possible and reduces computational cost. After optionally transposing, we initialize
\begin{equation}
	\boldsymbol{X}_{0,j} = \frac{\widehat{\boldsymbol{M}}_{t,j}}{\| \widehat{\boldsymbol{M}}_{t,j} \|_{\mathrm F} + \epsilon},
\end{equation}
where $\epsilon > 0$ is a small constant to prevent division by zero. This initialization scales the slice to unit Frobenius norm, which is a stable starting point for the iteration.

We then apply $J$ steps of the quintic Newton--Schulz iteration. At each step $\ell = 0, 1, \dots, J-1$, we compute
\begin{equation}
	\boldsymbol{A}_{\ell,j} = \boldsymbol{X}_{\ell,j} \boldsymbol{X}_{\ell,j}^{\mathsf H},
\end{equation}
and update
\begin{equation}
	\boldsymbol{X}_{\ell+1,j}
	= a \boldsymbol{X}_{\ell,j}
	+ \left( b \boldsymbol{A}_{\ell,j} + c \boldsymbol{A}_{\ell,j}^2 \right) \boldsymbol{X}_{\ell,j},
	\label{eq:ns}
\end{equation}
where $(a, b, c) = (3.4445, -4.7750, 2.0315)$ are the precomputed coefficients from the PolarExpress scheme \citep{amsel2026polar}. These coefficients are designed to flatten the singular-value spectrum rapidly: the iteration drives all nonzero singular values toward one within a small number of steps. The output $\boldsymbol{X}_{J,j}$ is an approximation to the exact polar factor $\operatorname{polar}(\widehat{\boldsymbol{M}}_{t,j})$, though it is not exact due to the finite iteration count and floating-point rounding \citep{kim2026newton}. In practice, $J = 4$ or $5$ suffices for most applications.

\section{Stack scaling and weight update}

After obtaining the approximate polar factors $\boldsymbol{\mathcal{P}}_t$ in the original tensor domain via inverse transformation, we apply a single shared scaling factor to the entire stack. This step controls the overall step magnitude while preserving the direction of the update.

Let $\operatorname{RMS}(\boldsymbol{\mathcal{P}}_t)$ denote the root-mean-square value of all entries in the tensor $\boldsymbol{\mathcal{P}}_t$:
\begin{equation}
	\operatorname{RMS}(\boldsymbol{\mathcal{P}}_t)
	= \sqrt{ \frac{1}{mnK} \sum_{i=1}^m \sum_{j=1}^n \sum_{k=1}^K |\boldsymbol{\mathcal{P}}_t(i,j,k)|^2 }.
\end{equation}
We define the scaling factor
\begin{equation}
	\gamma_t = \frac{s}{\operatorname{RMS}(\boldsymbol{\mathcal{P}}_t) + \epsilon},
	\label{eq:scale}
\end{equation}
where $s > 0$ is a target RMS value (typically set to a constant such as $1 / \sqrt{\max(m,n)}$ as suggested by the rank--energy identity in Proposition~\ref{prop:rank-energy}), and $\epsilon$ is a small constant to avoid division by zero. The scaled update is then
\begin{equation}
	\widetilde{\boldsymbol{\mathcal{P}}}_t = \gamma_t \, \boldsymbol{\mathcal{P}}_t.
	\label{eq:scaled-update}
\end{equation}
Because $\gamma_t$ is shared across all layers, this scaling changes the effective trust-region radius but does not alter the relative direction of the update across slices. This is a crucial design choice: it preserves the geometric structure of the polar direction while allowing the overall step size to be tuned by a single global parameter.

Finally, we apply decoupled weight decay and the scaled update to obtain the new parameters:
\begin{equation}
	\boldsymbol{\mathcal{W}}_{t+1}
	= (1 - \eta_t \lambda) \, \boldsymbol{\mathcal{W}}_t
	- \eta_t \, \widetilde{\boldsymbol{\mathcal{P}}}_t,
	\label{eq:weight-update}
\end{equation}
where $\eta_t$ is the learning rate and $\lambda$ is the weight-decay coefficient. This update rule combines the momentum-based polar direction with standard $L_2$ regularization.

We note that one could also apply per-slice scaling factors, for example to preserve the Frobenius energy of each individual slice. Such variants can be useful in certain engineering contexts, but they correspond to a different weighted geometry and are not part of the main algorithm presented here. We therefore treat them as separate ablations and focus on the shared-scaling scheme throughout this work.

\subsection{Computational complexity analysis}\label{complexity-appendix}
We analyze the per-iteration computational cost of the proposed optimizer for a single tensor group consisting of $K$ matrices of size $m\times n$.  	
The main computational burden lies in the slice-wise Newton--Schulz polar iterations. For each of the $K$ slices, each iteration involves matrix multiplications of an $m\times n$ matrix with its transpose, incurring $O(mn\cdot\min(m,n))$ operations. With a fixed number of iterations $J$ (typically $J=5$ in our implementation), the total cost of polarization is
$O(JKmn\cdot\min(m,n))$, which is $O(Kmn\cdot\min(m,n))$ since $J$ is a constant.
In addition to polarization, the tensor transform $\cM_t\times_3 \boldsymbol{Q}$ and its inverse each require $O(K^2 mn)$ operations. When $\boldsymbol{Q}$ is updated adaptively from the momentum tensor, an additional SVD on the $K\times mn$ unfolding matrix is performed every $I$ (e.g., 100) steps, costing $O(K^2 mn)$ per update (assuming $K\ll mn$), which amortizes to $O(K^2 mn/I)$ per iteration. Thus the total complexity per iteration is approximately $K\,mn\cdot\min(m,n) + K^2 mn$. Since $K\ll m$ and $K\ll  n$, the polarization step therefore dominates the overall cost, and the total complexity per iteration is approximately $
O\!\left(K\,mn\cdot\min(m,n) \right)$.

Muon incurs $O(Kmn\cdot\min(m,n))$ cost, while TEON requires $O(m(nK)\cdot \min(m,nK))$. Our method has a cost of $O\!\left(Kmn\cdot\min(m,n)\right)$. In the typical regime where $K\ll \min(m,n)$, these complexities are  comparable. Moreover, this overhead remains negligible relative to the overall pretraining cost. Thus, TUON achieves its cross-layer structural benefits at a modest computational premium, scaling gracefully with the group size $K$. 
Table~\ref{tab:complexity} compares the resulting complexities.

\section{Detailed proofs for Section~\ref{sec:theory}}
\label{app:proofs}

\subsection{Proof of Proposition~\ref{prop:q-dual}: duality of the $Q$-block norms}\label{app:proofs1}

Before solving a norm-constrained steepest-descent problem, we must know the dual norm. It measures the largest possible first-order alignment with a direction in the unit trust region.  Proposition~\ref{prop:q-dual} identifies the measure naturally associated with TUON.

\textbf{Proposition~\ref{prop:q-dual}}[Dual norm] \textit{
	For unitary ${\boldsymbol{Q}}$, $\|\cdot\|_{{\boldsymbol{Q}},\mathrm{Tn}}$ is the dual norm of $\|\cdot\|_{{\boldsymbol{Q}},\mathrm{Ts}}$:
	\begin{equation}
		\sup_{\|\boldsymbol{\mathcal{D}}\|_{{\boldsymbol{Q}},\mathrm{Ts}}\le1}
		\langle\boldsymbol{\mathcal{G}},\boldsymbol{\mathcal{D}}\rangle_{\mathrm F}
		=\|\boldsymbol{\mathcal{G}}\|_{{\boldsymbol{Q}},\mathrm{Tn}}.
		\label{eq:q-duality}
	\end{equation}
}

\begin{proof}
	We prove the duality by establishing both the upper bound and its attainability. The argument proceeds in three steps.
	
	First, we move the inner product to the transform domain. 
	By the Parseval identity for unitary mode-$\boldsymbol{Q}$ transforms (Lemma~\ref{lem:unitary-parseval}), the Frobenius inner product is preserved under the layer-mode transformation:
	\begin{equation}
		\langle\boldsymbol{\mathcal{G}},\boldsymbol{\mathcal{D}}\rangle_{\mathrm F}
		=\langle\widehat{\boldsymbol{\mathcal{G}}},\widehat{\boldsymbol{\mathcal{D}}}\rangle_{\mathrm F}
		=\sum_{j=1}^K\langle\widehat{\boldsymbol{G}}_j,\widehat{\boldsymbol{D}}_j\rangle_{\mathrm F},
		\label{eq:proof-dual-step1}
	\end{equation}
	where $\widehat{\boldsymbol{\mathcal{G}}} = \boldsymbol{\mathcal{G}} \times_3 \boldsymbol{Q}$ and $\widehat{\boldsymbol{\mathcal{D}}} = \boldsymbol{\mathcal{D}} \times_3 \boldsymbol{Q}$ denote the transformed tensors, and $\widehat{\boldsymbol{G}}_j, \widehat{\boldsymbol{D}}_j \in \mathbb{C}^{m \times n}$ are their respective $j$-th frontal slices. Thus the tensor inner product decouples into a sum of $K$ independent matrix inner products.
	
	Second, we have the universal upper bound.  
	For any feasible direction satisfying $\|\boldsymbol{\mathcal{D}}\|_{\boldsymbol{Q},\mathrm{Ts}}\le1$, the definition of the tensor spectral norm under $\boldsymbol{Q}$-transform implies
	\begin{equation}
		\|\widehat{\boldsymbol{D}}_j\|_{\mathrm{op}}\le1
		\qquad\text{for all }j=1,\dots,K.
	\end{equation}
	Applying the matrix H\"older inequality (Lemma~\ref{lem:matrix-holder}) slice-wise yields
	\begin{equation}
		\langle\widehat{\boldsymbol{G}}_j,\widehat{\boldsymbol{D}}_j\rangle_{\mathrm F}
		\le\|\widehat{\boldsymbol{G}}_j\|_*\|\widehat{\boldsymbol{D}}_j\|_{\mathrm{op}}
		\le\|\widehat{\boldsymbol{G}}_j\|_*.
	\end{equation}
	Summing this inequality over all slices gives
	\begin{equation}
		\langle\boldsymbol{\mathcal{G}},\boldsymbol{\mathcal{D}}\rangle_{\mathrm F}
		\le\sum_{j=1}^K\|\widehat{\boldsymbol{G}}_j\|_*
		=\|\boldsymbol{\mathcal{G}}\|_{\boldsymbol{Q},\mathrm{Tn}}.
		\label{eq:proof-dual-upper}
	\end{equation}
	Since this bound holds for every feasible $\boldsymbol{\mathcal{D}}$, we conclude that the supremum is at most $\|\boldsymbol{\mathcal{G}}\|_{\boldsymbol{Q},\mathrm{Tn}}$.
	
	At last, we construct the equality case to show that the upper bound is tight. For each nonzero transformed slice, take its compact singular value decomposition
	\begin{equation}
		\widehat{\boldsymbol{G}}_j = \boldsymbol{U}_j \boldsymbol{\Sigma}_j \boldsymbol{V}_j^{\mathsf H},
	\end{equation}
	and define its canonical polar factor
	\begin{equation}
		\widehat{\boldsymbol{D}}_j^* = \boldsymbol{U}_j \boldsymbol{V}_j^{\mathsf H} = \operatorname{polar}(\widehat{\boldsymbol{G}}_j).
	\end{equation}
	For zero slices, set $\widehat{\boldsymbol{D}}_j^* = \boldsymbol{0}$. Lemma~\ref{lem:polar-identities} guarantees that each polar factor satisfies $\|\widehat{\boldsymbol{D}}_j^*\|_{\mathrm{op}}\le1$, so the assembled tensor $\widehat{\boldsymbol{\mathcal{D}}}^*$ is feasible under the tensor spectral norm under $\boldsymbol{Q}$-transform. Moreover, the polar alignment identity gives
	\begin{equation}
		\langle\widehat{\boldsymbol{G}}_j,\widehat{\boldsymbol{D}}_j^*\rangle_{\mathrm F}
		= \|\widehat{\boldsymbol{G}}_j\|_*.
	\end{equation}
	Summing over $j$ and applying the inverse unitary transform (which preserves the inner product) yields equality in ~\eqref{eq:proof-dual-upper}. Therefore the upper bound is attained, and the supremum is exactly $\|\boldsymbol{\mathcal{G}}\|_{\boldsymbol{Q},\mathrm{Tn}}$.
\end{proof}

By applying the transform $\boldsymbol{Q}$, the original tensor norm decouples into independent matrix norms across the $K$ slices. Therefore, the natural measure of the gradient's descent potential is no longer its Frobenius norm, but the sum of nuclear norms of the transformed slices. This decoupling serves as the basic tool behind the exact oracle, the comparison with Muon and TEON, and the convergence.

\subsection{Proof of Theorem~\ref{thm:q-lmo}: exact normalized steepest descent}\label{app:proofs2}

\textbf{Theorem}~\ref{thm:q-lmo} [Exact LMO and normalized steepest descent]
\textit{
	For every radius $\rho > 0$,  
		\begin{equation}
				-\rho \boldsymbol{\mathcal{P}}_{\boldsymbol{Q}}(\boldsymbol{\mathcal{M}})
				\in
				\argmin_{\| \boldsymbol{\mathcal{D}} \|_{\boldsymbol{Q},\mathrm{Ts}} \le \rho}
				\langle \boldsymbol{\mathcal{M}}, \boldsymbol{\mathcal{D}} \rangle_{\mathrm F},
				\label{eq:q-lmo}
			\end{equation}
	and the optimal value is $
	-\rho \| \boldsymbol{\mathcal{M}} \|_{\boldsymbol{Q},\mathrm{Tn}}
	= -\rho \sum_{j=1}^K \| \widehat{\boldsymbol{M}}_j \|_*.
	$
}

\begin{proof}
	For any \(\boldsymbol{\mathcal{D}}\) with \(\|\boldsymbol{\mathcal{D}}\|_{\boldsymbol{Q},\mathrm{Ts}} \le \rho\), the dual H\"older inequality from Proposition~\ref{prop:q-dual} gives
	\begin{equation}
		|\langle \boldsymbol{\mathcal{M}}, \boldsymbol{\mathcal{D}} \rangle_{\mathrm F}|
		\le \|\boldsymbol{\mathcal{M}}\|_{\boldsymbol{Q},\mathrm{Tn}} \|\boldsymbol{\mathcal{D}}\|_{\boldsymbol{Q},\mathrm{Ts}}
		\le \rho \|\boldsymbol{\mathcal{M}}\|_{\boldsymbol{Q},\mathrm{Tn}}.
	\end{equation}
	Hence
	\begin{equation}
		\langle \boldsymbol{\mathcal{M}}, \boldsymbol{\mathcal{D}} \rangle_{\mathrm F}
		\ge -\rho \|\boldsymbol{\mathcal{M}}\|_{\boldsymbol{Q},\mathrm{Tn}}.
		\label{eq:lmo-universal-lower}
	\end{equation}
	No feasible direction can have a smaller linearized objective value.
	
	In transformed coordinates, set
	\begin{equation}
		\widehat{\boldsymbol{D}}_j^*
		:= -\rho \operatorname{polar}(\widehat{\boldsymbol{M}}_j).
	\end{equation}
	Each nonzero polar factor has spectral norm one, so
	\begin{equation}
		\|\boldsymbol{\mathcal{D}}^*\|_{\boldsymbol{Q},\mathrm{Ts}}
		= \max_j \|\widehat{\boldsymbol{D}}_j^*\|_{\mathrm{op}}
		\le \rho.
	\end{equation}
	Inverse transforming the slices gives exactly
	\begin{equation}
		\boldsymbol{\mathcal{D}}^*
		= -\rho \boldsymbol{\mathcal{P}}_{\boldsymbol{Q}}(\boldsymbol{\mathcal{M}}).
	\end{equation}
	
	Using Parseval and Lemma~\ref{lem:polar-identities},
	\begin{align}
		\langle \boldsymbol{\mathcal{M}}, \boldsymbol{\mathcal{D}}^* \rangle_{\mathrm F}
		&= -\rho \sum_{j=1}^K
		\langle \widehat{\boldsymbol{M}}_j, \operatorname{polar}(\widehat{\boldsymbol{M}}_j) \rangle_{\mathrm F} \\
		&= -\rho \sum_{j=1}^K \|\widehat{\boldsymbol{M}}_j\|_* \\
		&= -\rho \|\boldsymbol{\mathcal{M}}\|_{\boldsymbol{Q},\mathrm{Tn}}.
	\end{align}
	This equals the universal lower bound in~\eqref{eq:lmo-universal-lower}, so the proposed direction is globally optimal.
\end{proof}

\subsection{Proof of Proposition~\ref{prop:q-tpolar}: tensor-polar equivalence}\label{prop:q-tpolar-proof-appendix}

The  update of our method also admits a natural interpretation in the language of tensor algebra. 
Let
\begin{equation}
	\boldsymbol{\mathcal{M}}
	= \boldsymbol{\mathcal{U}} *_{\boldsymbol{Q}} \boldsymbol{\mathcal{S}} *_{\boldsymbol{Q}} \boldsymbol{\mathcal{V}}^{\top}
	\label{eq:q-tsvd}
\end{equation}
be the  T-SVD~\citep{kernfeld2015tensor,kilmer2011factorization,zheng2026refta} of $\boldsymbol{\mathcal{M}}$ (see Appendix~\ref{app:qproduct} for definitions), where $\boldsymbol{\mathcal{S}}$ is an $\boldsymbol{f}$-diagonal tensor whose diagonal entries in the transform domain are the nonzero singular values of the corresponding transformed slices. 
Define $\boldsymbol{\mathcal{J}}_{\mathrm{supp}}$ to be the $\boldsymbol{f}$-diagonal tensor obtained from $\boldsymbol{\mathcal{S}}$ by replacing every nonzero diagonal entry with $1$, while leaving zeros unchanged. In other words, $\boldsymbol{\mathcal{J}}_{\mathrm{supp}}$ is the indicator of the support of the singular values in the transform domain.
The polar factor of a matrix $\boldsymbol{A} = \boldsymbol{U} \boldsymbol{\Sigma} \boldsymbol{V}^{\top}$ is $\boldsymbol{U} \boldsymbol{V}^{\top}$, which replaces the singular values $\boldsymbol{\Sigma}$ by the identity on the support. Analogously, the generalized tensor polar factor of $\boldsymbol{\mathcal{M}}$ is obtained by replacing the singular-value tensor $\boldsymbol{\mathcal{S}}$ with its support indicator $\boldsymbol{\mathcal{J}}_{\mathrm{supp}}$:
\begin{equation}
\operatorname{t-polar}_{\boldsymbol{Q}}(\boldsymbol{\mathcal{M}})
:= \boldsymbol{\mathcal{U}} *_{\boldsymbol{Q}} \boldsymbol{\mathcal{J}}_{\mathrm{supp}} *_{\boldsymbol{Q}} \boldsymbol{\mathcal{V}}^{\top}.
\end{equation}
The following proposition establishes that this tensor-algebra object is exactly the optimizer map from Theorem~\ref{thm:q-lmo}.

\textbf{Proposition~\ref{prop:q-tpolar}}[$\boldsymbol{Q}$-tensor polar equivalence]
\textit{
	The optimizer map equals the partial-isometry factor of the generalized T-SVD:
	\begin{equation}
		\boldsymbol{\mathcal{P}}_{\boldsymbol{Q}}(\boldsymbol{\mathcal{M}})
		= \operatorname{tpolar}_{\boldsymbol{Q}}(\boldsymbol{\mathcal{M}})
		= \boldsymbol{\mathcal{U}} *_{\boldsymbol{Q}} \boldsymbol{\mathcal{J}}_{\mathrm{supp}} *_{\boldsymbol{Q}} \boldsymbol{\mathcal{V}}^{\top}.
		\label{eq:q-tpolar}
	\end{equation}
	For $K = 1$, this reduces to matrix Muon.  For $\boldsymbol{Q} = \boldsymbol{I}$, it reduces to independent layer-wise polar maps.
}

\begin{proof}
	We prove the equivalence by working in the transform domain, where the Tensor SVD decouples into independent matrix SVDs.
	
	First, we make slice-wise decomposition in the transform domain. 
	By the definition of the generalized T-SVD in Appendix~\ref{app:qproduct}, ~\eqref{eq:q-tsvd} is equivalent to a collection of $K$ independent matrix singular value decompositions applied to each transformed slice:
	\begin{equation}
		\widehat{\boldsymbol{M}}_j
		= \widehat{\boldsymbol{U}}_j \widehat{\boldsymbol{S}}_j \widehat{\boldsymbol{V}}_j^{\mathsf H},
		\qquad j = 1, \dots, K,
	\end{equation}
	where $\widehat{\boldsymbol{S}}_j = \operatorname{diag}(\sigma_{j,1}, \ldots, \sigma_{j,r_j}) \in \mathbb{R}^{r_j \times r_j}$ is a diagonal matrix containing the nonzero singular values of the $j$-th transformed slice, and $r_j = \operatorname{rank}(\widehat{\boldsymbol{M}}_j)$.
	
	Then we replace singular values by their support indicators. 
	For each $j$, define $\widehat{\boldsymbol{J}}_{\mathrm{supp},j} \in \mathbb{R}^{r_j \times r_j}$ to be the diagonal matrix
	\begin{equation}
		\widehat{\boldsymbol{J}}_{\mathrm{supp},j}
		:= \operatorname{diag}(\mathbf{1}_{\{\sigma_{j,1} > 0\}}, \ldots, \mathbf{1}_{\{\sigma_{j,r_j} > 0\}})
		= \boldsymbol{I}_{r_j},
	\end{equation}
	which is simply the identity matrix of size $r_j$, since all diagonal entries of $\widehat{\boldsymbol{S}}_j$ are positive by construction. In the full $m \times n$ matrix, this corresponds to a diagonal matrix with ones on the support of the nonzero singular values and zeros elsewhere. Then
	\begin{equation}
		\widehat{\boldsymbol{U}}_j \widehat{\boldsymbol{J}}_{\mathrm{supp},j} \widehat{\boldsymbol{V}}_j^{\mathsf H}
		= \widehat{\boldsymbol{U}}_j \widehat{\boldsymbol{V}}_j^{\mathsf H}
		= \operatorname{polar}(\widehat{\boldsymbol{M}}_j).
	\end{equation}
	This is precisely the canonical polar factor of the $j$-th transformed slice: it retains the left and right singular vectors while discarding the singular values.
	
	Thus, by reassembling the tensor, the transformed frontal slices of the tensor
	\begin{equation}
		\boldsymbol{\mathcal{U}} *_{\boldsymbol{Q}} \boldsymbol{\mathcal{J}}_{\mathrm{supp}} *_{\boldsymbol{Q}} \boldsymbol{\mathcal{V}}^{\top}
	\end{equation}
	are exactly the slice-wise polar factors constructed in Step 2. Applying the inverse transform $\boldsymbol{Q}^{\mathsf H}$ along the third mode reconstructs the original tensor domain. By the definition of $\boldsymbol{\mathcal{P}}_{\boldsymbol{Q}}(\boldsymbol{\mathcal{M}})$ in Section~\ref{subsec:exact-geometry}, this reconstructed tensor is precisely $\boldsymbol{\mathcal{P}}_{\boldsymbol{Q}}(\boldsymbol{\mathcal{M}})$. Therefore,
	\begin{equation}
		\boldsymbol{\mathcal{P}}_{\boldsymbol{Q}}(\boldsymbol{\mathcal{M}})
		= \boldsymbol{\mathcal{U}} *_{\boldsymbol{Q}} \boldsymbol{\mathcal{J}}_{\mathrm{supp}} *_{\boldsymbol{Q}} \boldsymbol{\mathcal{V}}^{\top},
	\end{equation}
	which proves the proposition.
\end{proof}

\subsection{Proof of Theorem~\ref{thm:nonredundancy}: cancellation and non-cancellation}\label{app:proofs4}

\textbf{Theorem~\ref{thm:nonredundancy}}[unfolding cancellation versus slice-wise non-cancellation]
\textit{
	Let $\boldsymbol{M}_{(1)} = \mathscr U(\boldsymbol{\mathcal{M}})$ and $\boldsymbol{R}_{\boldsymbol{Q}} = \boldsymbol{Q}^{\mathsf T} \otimes \boldsymbol{I}_n$.  For the canonical matrix polar map,
	\begin{equation}
		\operatorname{polar}(\boldsymbol{M}_{(1)} \boldsymbol{R}_{\boldsymbol{Q}}) \boldsymbol{R}_{\boldsymbol{Q}}^{\mathsf H}
		= \operatorname{polar}(\boldsymbol{M}_{(1)}).
		\label{eq:global-cancellation}
	\end{equation}
	Hence a unitary layer transform is algebraically redundant around a single global unfolding polar.  In contrast,
	\begin{equation}
		\bigl( \operatorname{polar}_{\mathrm{slice}}(\boldsymbol{\mathcal{M}} \times_3 \boldsymbol{Q}) \bigr) \times_3 \boldsymbol{Q}^{\mathsf H}
		\neq
		\operatorname{polar}_{\mathrm{slice}}(\boldsymbol{\mathcal{M}})
		\label{eq:slice-noncancellation}
	\end{equation}
	in general.
}

\begin{proof}
	We prove the two claims separately.
	First, we will show that unfolding polar cancels the transform. 
	Let
	\begin{equation}
		\boldsymbol{M}_{(1)} = \boldsymbol{U} \boldsymbol{\Sigma} \boldsymbol{V}^{\mathsf H}
	\end{equation}
	be a compact singular value decomposition of the unfolded matrix.  Because $\boldsymbol{R}_{\boldsymbol{Q}}$ is unitary, we have
	\begin{equation}
		\boldsymbol{M}_{(1)} \boldsymbol{R}_{\boldsymbol{Q}}
		= \boldsymbol{U} \boldsymbol{\Sigma} \boldsymbol{V}^{\mathsf H} \boldsymbol{R}_{\boldsymbol{Q}}
		= \boldsymbol{U} \boldsymbol{\Sigma} (\boldsymbol{R}_{\boldsymbol{Q}}^{\mathsf H} \boldsymbol{V})^{\mathsf H}.
	\end{equation}
	The columns of $\boldsymbol{R}_{\boldsymbol{Q}}^{\mathsf H} \boldsymbol{V}$ remain orthonormal, so this is another compact SVD with the same left singular vectors $\boldsymbol{U}$ and the same singular values $\boldsymbol{\Sigma}$.  Therefore,
	\begin{equation}
		\operatorname{polar}(\boldsymbol{M}_{(1)} \boldsymbol{R}_{\boldsymbol{Q}})
		= \boldsymbol{U} (\boldsymbol{R}_{\boldsymbol{Q}}^{\mathsf H} \boldsymbol{V})^{\mathsf H}
		= \boldsymbol{U} \boldsymbol{V}^{\mathsf H} \boldsymbol{R}_{\boldsymbol{Q}}
		= \operatorname{polar}(\boldsymbol{M}_{(1)}) \boldsymbol{R}_{\boldsymbol{Q}}.
	\end{equation}
	Multiplying both sides on the right by $\boldsymbol{R}_{\boldsymbol{Q}}^{\mathsf H}$ proves ~\eqref{eq:global-cancellation}.
	
	Second, we give a concrete slice-wise counterexample. 
	It suffices to construct this single instance where equality fails. Consider two scalar slices
	\begin{equation}
		\boldsymbol{M}_1 = 2, \qquad \boldsymbol{M}_2 = 1,
	\end{equation}
	and the two-point orthonormal DCT/Hadamard matrix
	\begin{equation}
		\boldsymbol{Q} = \frac{1}{\sqrt{2}}
		\begin{bmatrix} 1 & 1 \\ 1 & -1 \end{bmatrix}.
	\end{equation}
	
	If we apply scalar polar maps directly to the original slices, we obtain $(1, 1)^{\mathsf T}$, since the polar factor of any positive scalar is $1$.
	
	In the transformed path, we first mix the slices:
	\begin{equation}
		\boldsymbol{Q} \begin{bmatrix} 2 \\ 1 \end{bmatrix}
		= \frac{1}{\sqrt{2}} \begin{bmatrix} 3 \\ 1 \end{bmatrix}.
	\end{equation}
	Both entries are positive, so their scalar polar factors are again $(1, 1)^{\mathsf T}$. Applying the inverse transform gives
	\begin{equation}
		\boldsymbol{Q}^{\mathsf T} \begin{bmatrix} 1 \\ 1 \end{bmatrix}
		= \begin{bmatrix} \sqrt{2} \\ 0 \end{bmatrix},
	\end{equation}
	which differs from the original $(1, 1)^{\mathsf T}$. Therefore, slice-wise polarization and layer mixing do not commute in general.
\end{proof}

\subsection{Proof of Proposition~\ref{prop:rank-energy}: exact update energy}\label{app:proofs5}

\textbf{Proposition~\ref{prop:rank-energy}}[Rank--energy identity]
\textit{
	Let $r_j := \operatorname{rank}(\widehat{\boldsymbol{M}}_j)$.  Then
	\begin{equation}
		\| \boldsymbol{\mathcal{P}}_{\boldsymbol{Q}}(\boldsymbol{\mathcal{M}}) \|_{\mathrm F}^2
		= \sum_{j=1}^K r_j,
		\qquad
		\operatorname{RMS}(\boldsymbol{\mathcal{P}}_{\boldsymbol{Q}}(\boldsymbol{\mathcal{M}}))
		= \sqrt{ \frac{\sum_j r_j}{mnK} }.
	\end{equation}
	If every transformed slice is full rank, then
	\begin{equation}
		\operatorname{RMS}(\boldsymbol{\mathcal{P}}_{\boldsymbol{Q}})
		= \frac{1}{\sqrt{\max(m,n)}},
	\end{equation}
	independently of $K$.
}

\begin{proof}
	We prove the rank--energy identity by analyzing the Frobenius norm of the polar factor in the transform domain and then transferring the result back to the original domain via unitary invariance.
	
	For each transformed slice $\widehat{\boldsymbol{M}}_j \in \mathbb{C}^{m \times n}$, Lemma~\ref{lem:polar-identities} states that the canonical polar factor $\operatorname{polar}(\widehat{\boldsymbol{M}}_j)$ has all nonzero singular values equal to one. Consequently, its squared Frobenius norm equals the number of nonzero singular values, which is precisely its rank:
	\begin{equation}
		\| \operatorname{polar}(\widehat{\boldsymbol{M}}_j) \|_{\mathrm F}^2
		= \operatorname{rank}(\widehat{\boldsymbol{M}}_j)
		=: r_j.
		\label{eq:proof-rank-energy-slice}
	\end{equation}
	This holds for each slice independently.
	
	The transformed polar tensor $\widehat{\boldsymbol{\mathcal{P}}} := \boldsymbol{\mathcal{P}}_{\boldsymbol{Q}}(\boldsymbol{\mathcal{M}}) \times_3 \boldsymbol{Q}$ has frontal slices $\widehat{\boldsymbol{P}}_j = \operatorname{polar}(\widehat{\boldsymbol{M}}_j)$. Since the Frobenius norm of a tensor is the square root of the sum of the squared Frobenius norms of its frontal slices, we have
	\begin{equation}
		\| \widehat{\boldsymbol{\mathcal{P}}} \|_{\mathrm F}^2
		= \sum_{j=1}^K \| \widehat{\boldsymbol{P}}_j \|_{\mathrm F}^2
		= \sum_{j=1}^K r_j.
		\label{eq:proof-rank-energy-transformed}
	\end{equation}
	
	By definition, $\boldsymbol{\mathcal{P}}_{\boldsymbol{Q}}(\boldsymbol{\mathcal{M}}) = \widehat{\boldsymbol{\mathcal{P}}} \times_3 \boldsymbol{Q}^{\mathsf H}$. Since $\boldsymbol{Q}$ is unitary, Lemma~\ref{lem:unitary-parseval} guarantees that the mode-3 unitary transform preserves the Frobenius norm:
	\begin{equation}
		\| \boldsymbol{\mathcal{P}}_{\boldsymbol{Q}}(\boldsymbol{\mathcal{M}}) \|_{\mathrm F}
		= \| \widehat{\boldsymbol{\mathcal{P}}} \|_{\mathrm F}.
	\end{equation}
	Combining this with ~\eqref{eq:proof-rank-energy-transformed} gives
	\begin{equation}
		\| \boldsymbol{\mathcal{P}}_{\boldsymbol{Q}}(\boldsymbol{\mathcal{M}}) \|_{\mathrm F}^2
		= \sum_{j=1}^K r_j.
		\label{eq:proof-rank-energy-final}
	\end{equation}
	This establishes the first part of the proposition.
	
	The root-mean-square (RMS) value of a tensor is defined as the Frobenius norm divided by the square root of the total number of entries. Since $\boldsymbol{\mathcal{P}}_{\boldsymbol{Q}}(\boldsymbol{\mathcal{M}}) \in \mathbb{C}^{m \times n \times K}$ has $mnK$ entries, we have
	\begin{equation}
		\operatorname{RMS}\bigl(\boldsymbol{\mathcal{P}}_{\boldsymbol{Q}}(\boldsymbol{\mathcal{M}})\bigr)^2
		= \frac{1}{mnK} \| \boldsymbol{\mathcal{P}}_{\boldsymbol{Q}}(\boldsymbol{\mathcal{M}}) \|_{\mathrm F}^2
		= \frac{1}{mnK} \sum_{j=1}^K r_j.
	\end{equation}
	Taking square roots yields the RMS formula in the proposition.
	
	If every transformed slice is full rank, then each slice has rank $r_j = r := \min(m,n)$, since a matrix in $\mathbb{C}^{m \times n}$ can have rank at most $\min(m,n)$. Substituting $r_j = r$ into the RMS formula gives
	\begin{equation}
		\operatorname{RMS}\bigl(\boldsymbol{\mathcal{P}}_{\boldsymbol{Q}}(\boldsymbol{\mathcal{M}})\bigr)^2
		= \frac{K r}{mnK}
		= \frac{1}{\max(m,n)}.
	\end{equation}
	Here we used the identity $mn / r = \max(m,n)$, which follows from $r = \min(m,n)$. Therefore,
	\begin{equation}
		\operatorname{RMS}\bigl(\boldsymbol{\mathcal{P}}_{\boldsymbol{Q}}(\boldsymbol{\mathcal{M}})\bigr)
		= \frac{1}{\sqrt{\max(m,n)}},
	\end{equation}
	which is independent of $K$. This completes the proof.
\end{proof}

\subsection{Non-unitary transforms}

Then we show the necessity of the invertibility of transform $\boldsymbol{Q}$.  Non-unitary coordinates do not preserve the Frobenius pairing, so it cannot simply use the proposed pipeline.

\begin{theorem}
	\label{thm:inverse-adjoint}
	For a non-unitary transform $\boldsymbol{T}$. 
	Let $\mathcal L_{\boldsymbol{T}}(\boldsymbol{\mathcal{D}}):=\boldsymbol{\mathcal{D}}\times_3\boldsymbol{T}$ and
	$
	\|\boldsymbol{\mathcal{D}}\|_{\boldsymbol{T},\mathrm{Ts}}
	:=\max_j\|\bigl(\mathcal L_{\boldsymbol{T}}\boldsymbol{\mathcal{D}}\bigr)_j\|_{\mathrm{op}}.
	$
	An exact radius-$\rho$ LMO is 
	$ 
	\boldsymbol{\mathcal{D}}^*
	=-\rho\,\mathcal L_{\boldsymbol{T}}^{-1}\!\left[
	\operatorname{polar}_{\mathrm{slice}}\!\left(\mathcal L_{\boldsymbol{T}}^{-*}\boldsymbol{\mathcal{G}}\right)
	\right],
	$ where $
	\mathcal L_{\boldsymbol{T}}^{-*}\boldsymbol{\mathcal{G}}=\boldsymbol{\mathcal{G}}\times_3\boldsymbol{T}^{-\mathsf H}.
	$
	For unitary $\boldsymbol{T}=\boldsymbol{Q}$, $\boldsymbol{T}^{-\mathsf H}=\boldsymbol{Q}$ and the practical TUON rule is recovered.
\end{theorem}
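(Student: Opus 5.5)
The approach is to reduce the non-unitary problem to the unitary/identity case by a change of variables in the transform domain. Set $\widehat{\boldsymbol{\mathcal{D}}}:=\mathcal L_{\boldsymbol{T}}\boldsymbol{\mathcal{D}}=\boldsymbol{\mathcal{D}}\times_3\boldsymbol{T}$. Since $\boldsymbol{T}$ is invertible, this map is a bijection of the tensor space. The constraint $\|\boldsymbol{\mathcal{D}}\|_{\boldsymbol{T},\mathrm{Ts}}\le\rho$ then becomes the decoupled family of constraints $\|\widehat{\boldsymbol{D}}_j\|_{\mathrm{op}}\le\rho$, one for each $j$.

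The first step is to rewrite the objective in the new variable. For this I will use the adjoint of the mode-3 product. Working tube by tube, as in the proof of Lemma~\ref{lem:unitary-parseval},
\begin{equation}
\langle\boldsymbol{\mathcal{G}},\boldsymbol{\mathcal{D}}\times_3\boldsymbol{T}\rangle_{\mathrm F}=\sum_{a,b}\boldsymbol{g}_{ab}^{\mathsf H}\boldsymbol{T}\boldsymbol{d}_{ab}=\langle\boldsymbol{\mathcal{G}}\times_3\boldsymbol{T}^{\mathsf H},\boldsymbol{\mathcal{D}}\rangle_{\mathrm F},
\end{equation}
so the adjoint is $\mathcal L_{\boldsymbol{T}}^{*}\boldsymbol{\mathcal{G}}=\boldsymbol{\mathcal{G}}\times_3\boldsymbol{T}^{\mathsf H}$. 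Substituting $\boldsymbol{\mathcal{D}}=\mathcal L_{\boldsymbol{T}}^{-1}\widehat{\boldsymbol{\mathcal{D}}}$ gives
\begin{equation}
\langle\boldsymbol{\mathcal{G}},\boldsymbol{\mathcal{D}}\rangle_{\mathrm F}=\langle\mathcal L_{\boldsymbol{T}}^{-*}\boldsymbol{\mathcal{G}},\widehat{\boldsymbol{\mathcal{D}}}\rangle_{\mathrm F},
\qquad
\mathcal L_{\boldsymbol{T}}^{-*}\boldsymbol{\mathcal{G}}=\boldsymbol{\mathcal{G}}\times_3\boldsymbol{T}^{-\mathsf H}.
\end{equation}
I will check this with the composition rule $(\boldsymbol{\mathcal{X}}\times_3\boldsymbol{A})\times_3\boldsymbol{B}=\boldsymbol{\mathcal{X}}\times_3(\boldsymbol{B}\boldsymbol{A})$, which gives $(\mathcal L_{\boldsymbol{T}}^{-1})^*=(\mathcal L_{\boldsymbol{T}}^{*})^{-1}$.

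Write $\widetilde{\boldsymbol{\mathcal{G}}}:=\mathcal L_{\boldsymbol{T}}^{-*}\boldsymbol{\mathcal{G}}$ with slices $\widetilde{\boldsymbol{G}}_j$. The LMO is now equivalent to minimizing $\sum_j\langle\widetilde{\boldsymbol{G}}_j,\widehat{\boldsymbol{D}}_j\rangle_{\mathrm F}$ subject to $\|\widehat{\boldsymbol{D}}_j\|_{\mathrm{op}}\le\rho$ for each $j$. This is exactly the separable problem in Proposition~\ref{prop:muon-lmo-appendix}, or equivalently Theorem~\ref{thm:q-lmo} with $\boldsymbol{Q}=\boldsymbol{I}$. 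By Lemma~\ref{lem:matrix-holder}, each slice subproblem is solved by $\widehat{\boldsymbol{D}}_j^*=-\rho\operatorname{polar}(\widetilde{\boldsymbol{G}}_j)$, and the optimal value is $-\rho\sum_j\|\widetilde{\boldsymbol{G}}_j\|_*$. Mapping back through the bijection gives $\boldsymbol{\mathcal{D}}^*=\mathcal L_{\boldsymbol{T}}^{-1}\widehat{\boldsymbol{\mathcal{D}}}^*$, which is the claimed formula. Feasibility and optimality carry over unchanged because the change of variables is a bijection between the two feasible sets and preserves the objective value.

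Finally, for unitary $\boldsymbol{T}=\boldsymbol{Q}$ we have $\boldsymbol{T}^{-\mathsf H}=(\boldsymbol{Q}^{-1})^{\mathsf H}=\boldsymbol{Q}$ and $\mathcal L_{\boldsymbol{Q}}^{-1}=\cdot\times_3\boldsymbol{Q}^{\mathsf H}$. This recovers $-\rho\boldsymbol{\mathcal{P}}_{\boldsymbol{Q}}(\boldsymbol{\mathcal{G}})$ from Theorem~\ref{thm:q-lmo}. The main point needing care is the adjoint computation in the complex case: one must make sure the conjugation in $\langle\cdot,\cdot\rangle_{\mathrm F}$ produces $\boldsymbol{T}^{\mathsf H}$ and not $\boldsymbol{T}^{\mathsf T}$. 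If the inner product is taken as the real part, as it is for real objectives, the same identity holds after taking real parts.
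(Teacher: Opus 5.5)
Your proposal is correct and follows the paper's proof essentially step for step: you change variables via $\widehat{\boldsymbol{\mathcal{D}}}=\mathcal L_{\boldsymbol{T}}\boldsymbol{\mathcal{D}}$ to make the constraint slice-separable, rewrite the objective through $\mathcal L_{\boldsymbol{T}}^{-*}\boldsymbol{\mathcal{G}}=\boldsymbol{\mathcal{G}}\times_3\boldsymbol{T}^{-\mathsf H}$, solve each slice by Lemma~\ref{lem:matrix-holder}, and map back with $\mathcal L_{\boldsymbol{T}}^{-1}$. Your explicit tube-wise adjoint computation and your remark about taking real parts of the complex Frobenius pairing are small refinements that the paper leaves implicit.
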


This Theorem shows that a non-orthogonal transform requires the inverse-adjoint correction above.

\begin{proof}
	We derive the correct linear minimization oracle for a non-unitary transform by working in the transformed coordinate system. 
	
	First, we change variables to the transformed direction space. 
	Let the transformed direction be
	\begin{equation}
		\widehat{\boldsymbol{\mathcal{D}}} = \mathcal L_{\boldsymbol{T}} \boldsymbol{\mathcal{D}},
		\qquad \text{so that} \qquad
		\boldsymbol{\mathcal{D}} = \mathcal L_{\boldsymbol{T}}^{-1} \widehat{\boldsymbol{\mathcal{D}}}.
	\end{equation}
	Under this change of variables, the trust-region constraint becomes
	\begin{equation}
		\max_{1 \le j \le K} \| \widehat{\boldsymbol{D}}_j \|_{\mathrm{op}} \le \rho,
	\end{equation}
	where $\widehat{\boldsymbol{D}}_j$ denotes the $j$-th frontal slice of $\widehat{\boldsymbol{\mathcal{D}}}$. This constraint is now separable across slices.
	
	Second, we transform the linear objective correctly. 
	A naive approach would transform the gradient by the same operator $\mathcal L_{\boldsymbol{T}}$, yielding $\mathcal L_{\boldsymbol{T}} \boldsymbol{\mathcal{G}}$. However, because $\mathcal L_{\boldsymbol{T}}$ is not unitary, this would not preserve the Frobenius pairing. The correct transformed gradient is obtained via the adjoint.
	
	Recall that the adjoint of a linear map is defined by the identity
	\begin{equation}
		\langle \boldsymbol{\mathcal{A}}, \mathcal L_{\boldsymbol{T}} \boldsymbol{\mathcal{B}} \rangle_{\mathrm F}
		= \langle \mathcal L_{\boldsymbol{T}}^* \boldsymbol{\mathcal{A}}, \boldsymbol{\mathcal{B}} \rangle_{\mathrm F}
		\quad \text{for all } \boldsymbol{\mathcal{A}}, \boldsymbol{\mathcal{B}}.
	\end{equation}
	Applying this to the objective, we have
	\begin{align}
		\langle \boldsymbol{\mathcal{G}}, \boldsymbol{\mathcal{D}} \rangle_{\mathrm F}
		&= \langle \boldsymbol{\mathcal{G}}, \mathcal L_{\boldsymbol{T}}^{-1} \widehat{\boldsymbol{\mathcal{D}}} \rangle_{\mathrm F} \\
		&= \langle (\mathcal L_{\boldsymbol{T}}^{-1})^* \boldsymbol{\mathcal{G}}, \widehat{\boldsymbol{\mathcal{D}}} \rangle_{\mathrm F} \\
		&= \langle \mathcal L_{\boldsymbol{T}}^{-*} \boldsymbol{\mathcal{G}}, \widehat{\boldsymbol{\mathcal{D}}} \rangle_{\mathrm F},
	\end{align}
	where $\mathcal L_{\boldsymbol{T}}^{-*} := (\mathcal L_{\boldsymbol{T}}^{-1})^*$ denotes the adjoint of the inverse operator. Thus, the correct gradient in the transformed coordinates is $\mathcal L_{\boldsymbol{T}}^{-*} \boldsymbol{\mathcal{G}}$, not the naive $\mathcal L_{\boldsymbol{T}} \boldsymbol{\mathcal{G}}$.
	
	Then we solve the transformed block problem. 
	With the constraint separable across slices and the objective expressed in terms of $\mathcal L_{\boldsymbol{T}}^{-*} \boldsymbol{\mathcal{G}}$, the optimization problem decouples into $K$ independent matrix subproblems of the form
	\begin{equation}
		\min_{\| \widehat{\boldsymbol{D}}_j \|_{\mathrm{op}} \le \rho}
		\langle (\mathcal L_{\boldsymbol{T}}^{-*} \boldsymbol{\mathcal{G}})_j, \widehat{\boldsymbol{D}}_j \rangle_{\mathrm F}.
	\end{equation}
	By Lemma~\ref{lem:matrix-holder}, the optimal solution for each slice is the negative polar factor scaled by $\rho$:
	\begin{equation}
		\widehat{\boldsymbol{D}}_j^*
		= -\rho \operatorname{polar}\bigl( (\mathcal L_{\boldsymbol{T}}^{-*} \boldsymbol{\mathcal{G}})_j \bigr).
	\end{equation}
	Assembling the slices gives the transformed optimal direction
	\begin{equation}
		\widehat{\boldsymbol{\mathcal{D}}}^*
		= -\rho \operatorname{polar}_{\mathrm{slice}}\!\left( \mathcal L_{\boldsymbol{T}}^{-*} \boldsymbol{\mathcal{G}} \right).
	\end{equation}
	Mapping back to the original coordinates via $\mathcal L_{\boldsymbol{T}}^{-1}$ yields
	\begin{equation}
		\boldsymbol{\mathcal{D}}^*
		= -\rho \, \mathcal L_{\boldsymbol{T}}^{-1}\!\left[
		\operatorname{polar}_{\mathrm{slice}}\!\left( \mathcal L_{\boldsymbol{T}}^{-*} \boldsymbol{\mathcal{G}} \right)
		\right],
	\end{equation}
	which is the claimed inverse-adjoint LMO formula.
	
	Finally, for the specific case of mode-3 multiplication, the inverse operator is $\mathcal L_{\boldsymbol{T}}^{-1} \boldsymbol{\mathcal{X}} = \boldsymbol{\mathcal{X}} \times_3 \boldsymbol{T}^{-1}$. Its adjoint is therefore
	\begin{equation}
		(\mathcal L_{\boldsymbol{T}}^{-1})^* \boldsymbol{\mathcal{X}}
		= \boldsymbol{\mathcal{X}} \times_3 (\boldsymbol{T}^{-1})^{\mathsf H}
		= \boldsymbol{\mathcal{X}} \times_3 \boldsymbol{T}^{-\mathsf H}.
	\end{equation}
	Hence
	\begin{equation}
		\mathcal L_{\boldsymbol{T}}^{-*} \boldsymbol{\mathcal{G}}
		= \boldsymbol{\mathcal{G}} \times_3 \boldsymbol{T}^{-\mathsf H}.
	\end{equation}
	If $\boldsymbol{T} = \boldsymbol{Q}$ is unitary, then $\boldsymbol{Q}^{-\mathsf H} = \boldsymbol{Q}$, so the formula reduces to the familiar TUON rule:
	\begin{equation}
		\boldsymbol{\mathcal{D}}^*
		= -\rho \, \mathcal L_{\boldsymbol{Q}}^{-1}\!\left[
		\operatorname{polar}_{\mathrm{slice}}\!\left( \mathcal L_{\boldsymbol{Q}} \boldsymbol{\mathcal{G}} \right)
		\right],
	\end{equation}
	which recovers the practical implementation from Section~\ref{sec:method}.
\end{proof}

\subsection{A time-varying unitary basis}

Moreover, in our algorithm, a data-dependent LSM basis may change during training.  We therefore show whether the convergence argument survives when each step uses a different unitary geometry.

\begin{theorem}[Time-varying unitary transforms]
	\label{thm:adaptive-q}
	Let $\boldsymbol{Q}_t$ be a unitary transform selected from information available before the current update, and suppose the descent lemma at iteration $t$ holds with constant $L_t$ under $\|\cdot\|_{\boldsymbol{Q}_t,\mathrm{Ts}}$.  For
	$
	\boldsymbol{\mathcal{W}}_{t+1}
	=\boldsymbol{\mathcal{W}}_t-\eta_t\boldsymbol{\mathcal{P}}_{\boldsymbol{Q}_t}(\nabla F(\boldsymbol{\mathcal{W}}_t)),
	$
	we have
	\begin{equation}
		\sum_{t=0}^{T-1}\eta_t\|\nabla F(\boldsymbol{\mathcal{W}}_t)\|_{\boldsymbol{Q}_t,\mathrm{Tn}}
		\le F(\boldsymbol{\mathcal{W}}_0)-F_\star
		+\frac12\sum_{t=0}^{T-1}L_t\eta_t^2.
		\label{eq:adaptive-q-bound}
	\end{equation}
\end{theorem}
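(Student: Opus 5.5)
The plan is the standard one-step descent argument, applied at each iteration in the geometry of that iteration, and then telescoped. Fix $t$ and write $\boldsymbol{\mathcal{G}}_t:=\nabla F(\boldsymbol{\mathcal{W}}_t)$ and $\boldsymbol{\mathcal{D}}_t:=-\eta_t\boldsymbol{\mathcal{P}}_{\boldsymbol{Q}_t}(\boldsymbol{\mathcal{G}}_t)$.

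First, apply the descent inequality of Lemma~\ref{lem:general-descent} with the norm $\|\cdot\|_{\boldsymbol{Q}_t,\mathrm{Ts}}$ and constant $L_t$; this is exactly the hypothesis of the theorem. It gives
\begin{equation}
F(\boldsymbol{\mathcal{W}}_{t+1})\le F(\boldsymbol{\mathcal{W}}_t)+\langle\boldsymbol{\mathcal{G}}_t,\boldsymbol{\mathcal{D}}_t\rangle_{\mathrm F}+\frac{L_t}{2}\|\boldsymbol{\mathcal{D}}_t\|_{\boldsymbol{Q}_t,\mathrm{Ts}}^2 .
\end{equation}

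Second, evaluate the two terms on the right using Theorem~\ref{thm:q-lmo} with $\boldsymbol{Q}=\boldsymbol{Q}_t$, radius $\rho=\eta_t$, and $\boldsymbol{\mathcal{G}}_t$ in place of $\boldsymbol{\mathcal{M}}$. That theorem gives the linear term exactly: $\langle\boldsymbol{\mathcal{G}}_t,\boldsymbol{\mathcal{D}}_t\rangle_{\mathrm F}=-\eta_t\|\boldsymbol{\mathcal{G}}_t\|_{\boldsymbol{Q}_t,\mathrm{Tn}}$. The proof of that theorem also shows feasibility. Every transformed slice of $\boldsymbol{\mathcal{P}}_{\boldsymbol{Q}_t}(\boldsymbol{\mathcal{G}}_t)$ is a polar factor, or zero, so it has operator norm at most $1$ by Lemma~\ref{lem:polar-identities}. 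Hence $\|\boldsymbol{\mathcal{D}}_t\|_{\boldsymbol{Q}_t,\mathrm{Ts}}\le\eta_t$. Substituting both facts yields
\begin{equation}
F(\boldsymbol{\mathcal{W}}_{t+1})\le F(\boldsymbol{\mathcal{W}}_t)-\eta_t\|\boldsymbol{\mathcal{G}}_t\|_{\boldsymbol{Q}_t,\mathrm{Tn}}+\frac{L_t\eta_t^2}{2}.
\end{equation}

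Third, rearrange this inequality and sum over $t=0,\dots,T-1$. The differences $F(\boldsymbol{\mathcal{W}}_t)-F(\boldsymbol{\mathcal{W}}_{t+1})$ telescope to $F(\boldsymbol{\mathcal{W}}_0)-F(\boldsymbol{\mathcal{W}}_T)$. Bounding $F(\boldsymbol{\mathcal{W}}_T)\ge F_\star$ then gives \eqref{eq:adaptive-q-bound}.

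The only real subtlety is the time variation of the basis. The telescoping must not require comparing norms across different $\boldsymbol{Q}_t$, and it does not: each one-step inequality is stated purely in terms of function values, and the norm geometry enters only within a single step. The assumption that $\boldsymbol{Q}_t$ is chosen from past information ensures the step is a deterministic function of $\boldsymbol{\mathcal{W}}_t$ and $\boldsymbol{Q}_t$. This is what lets the per-step descent lemma apply with the step-specific constant $L_t$, and I will remark on this point explicitly in the proof.
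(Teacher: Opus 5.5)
Your proposal is correct and follows the paper's proof essentially step for step. You apply the descent lemma in the $\|\cdot\|_{\boldsymbol{Q}_t,\mathrm{Ts}}$ geometry at each step, and use Theorem~\ref{thm:q-lmo} with $\rho=\eta_t$ to get both the linear term $-\eta_t\|\nabla F(\boldsymbol{\mathcal{W}}_t)\|_{\boldsymbol{Q}_t,\mathrm{Tn}}$ and the bound $\|\boldsymbol{\mathcal{D}}_t\|_{\boldsymbol{Q}_t,\mathrm{Ts}}\le\eta_t$; you are a bit more explicit than the paper about this feasibility bound on the quadratic term. You then telescope and use $F(\boldsymbol{\mathcal{W}}_T)\ge F_\star$, as the paper does.

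One small correction to your closing remark: the descent lemma holds for every displacement, so the predictability of $\boldsymbol{Q}_t$ plays no role in this deterministic argument. It matters only for the update to be well defined, or for conditioning in a stochastic analysis.
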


No explicit basis-drift term appears when momentum is stored in the original tensor coordinates and $\boldsymbol{Q}_t$ only defines the current oracle geometry.  If momentum is stored in transformed coordinates, it must be transported whenever the basis changes.  This distinction gives a concrete implementation rule for adaptive DecorQ.

\begin{proof}
	We prove that time-varying unitary transforms preserve the convergence guarantee, provided the update direction is the exact LMO for the geometry defined at each iteration.  
	
	First we apply the descent lemma at iteration $t$. 
	Let $\boldsymbol{Q}_t$ be the unitary transform used at iteration $t$, and let $L_t$ be the smoothness constant of $F$ with respect to the $\boldsymbol{Q}_t$  norm pair. At this iteration, the exact TUON update is
	\begin{equation}
		\boldsymbol{\mathcal{W}}_{t+1}
		= \boldsymbol{\mathcal{W}}_t - \eta_t \boldsymbol{\mathcal{P}}_{\boldsymbol{Q}_t}(\nabla F(\boldsymbol{\mathcal{W}}_t)).
	\end{equation}
	By Theorem~\ref{thm:q-lmo}, the polar direction $\boldsymbol{\mathcal{P}}_{\boldsymbol{Q}_t}(\nabla F(\boldsymbol{\mathcal{W}}_t))$ is the exact LMO for the $\boldsymbol{Q}_t$-block geometry, so the descent lemma (Lemma~\ref{lem:general-descent}) gives
	\begin{equation}
		F(\boldsymbol{\mathcal{W}}_{t+1})
		\le F(\boldsymbol{\mathcal{W}}_t)
		- \eta_t \| \nabla F(\boldsymbol{\mathcal{W}}_t) \|_{\boldsymbol{Q}_t,\mathrm{Tn}}
		+ \frac{L_t \eta_t^2}{2}.
		\label{eq:proof-adaptive-one-step}
	\end{equation}
	This inequality holds for each $t$ independently, without any reference to previous transforms $\boldsymbol{Q}_{t-1}, \boldsymbol{Q}_{t-2}, \ldots$.
	
	Then, we sum ~\eqref{eq:proof-adaptive-one-step} from $t = 0$ to $T - 1$ yields
	\begin{equation}
		\sum_{t=0}^{T-1} \eta_t \| \nabla F(\boldsymbol{\mathcal{W}}_t) \|_{\boldsymbol{Q}_t,\mathrm{Tn}}
		\le F(\boldsymbol{\mathcal{W}}_0) - F(\boldsymbol{\mathcal{W}}_T)
		+ \frac{1}{2} \sum_{t=0}^{T-1} L_t \eta_t^2.
	\end{equation}
	Since $F$ is bounded below by $F_\star$, we have $F(\boldsymbol{\mathcal{W}}_T) \ge F_\star$, so
	\begin{equation}
		\sum_{t=0}^{T-1} \eta_t \| \nabla F(\boldsymbol{\mathcal{W}}_t) \|_{\boldsymbol{Q}_t,\mathrm{Tn}}
		\le F(\boldsymbol{\mathcal{W}}_0) - F_\star
		+ \frac{1}{2} \sum_{t=0}^{T-1} L_t \eta_t^2.
		\label{eq:proof-adaptive-summed}
	\end{equation}
	This is the claimed cumulative bound.
	
	%
	%
\end{proof}

\subsection{Proof of Theorem~\ref{thm:q-muon-comparison}: norm comparability}\label{thm:q-muon-comparison-proof-appendix}

\textbf{Theorem~\ref{thm:q-muon-comparison}}[Norm comparability with Muon]
\textit{
	Define $
	\alpha_{\boldsymbol{Q}}:=\max_j\sum_{k=1}^K|\boldsymbol{Q}_{jk}|$ and $
	\beta_{\boldsymbol{Q}}:=\max_k\sum_{j=1}^K|\boldsymbol{Q}_{jk}|.
	$
	Then, for every tensor \(\boldsymbol{\mathcal{X}}\) and for their duals \(\boldsymbol{\mathcal{G}}\),
	\begin{equation}
		\begin{aligned}
			&\frac{1}{\beta_{\boldsymbol{Q}}}\|\boldsymbol{\mathcal{X}}\|_{\mathrm{Muon}}
			\le\|\boldsymbol{\mathcal{X}}\|_{\boldsymbol{Q},\mathrm{Ts}}
			\le\alpha_{\boldsymbol{Q}}\|\boldsymbol{\mathcal{X}}\|_{\mathrm{Muon}}, \quad
			\frac{1}{\alpha_{\boldsymbol{Q}}}\|\boldsymbol{\mathcal{G}}\|_{\mathrm{Muon},*}
			\le\|\boldsymbol{\mathcal{G}}\|_{\boldsymbol{Q},\mathrm{Tn}}
			\le\beta_{\boldsymbol{Q}}\|\boldsymbol{\mathcal{G}}\|_{\mathrm{Muon},*}.
		\end{aligned}
	\end{equation}
	Since \(\boldsymbol{Q}\) is unitary, \(\alpha_{\boldsymbol{Q}},\beta_{\boldsymbol{Q}}\le\sqrt K\).
}

\begin{proof}
	We prove the norm equivalence by establishing bounds in both directions for the primal norms; the dual inequalities then follow from standard duality arguments. The proof proceeds in four steps.
	
	First, we get the upper bound of the \(\boldsymbol{Q}\) norm by the Muon norm. 
	For each transformed slice, we have
	\begin{align}
		\| \widehat{\boldsymbol{X}}_j \|_{\mathrm{op}}
		&= \left\| \sum_{k=1}^K \boldsymbol{Q}_{jk} \boldsymbol{X}_k \right\|_{\mathrm{op}} \\
		&\le \sum_{k=1}^K |\boldsymbol{Q}_{jk}| \, \| \boldsymbol{X}_k \|_{\mathrm{op}} \\
		&\le \left( \sum_{k=1}^K |\boldsymbol{Q}_{jk}| \right) \max_{1 \le k \le K} \| \boldsymbol{X}_k \|_{\mathrm{op}} \\
		&\le \alpha_{\boldsymbol{Q}} \| \boldsymbol{\mathcal{X}} \|_{\mathrm{Muon}}.
	\end{align}
	Here, the first inequality uses the triangle inequality for the operator norm, and the second uses the fact that each term is bounded by the maximum over all slices. Taking the maximum over \(j=1,\dots,K\) on both sides yields
	\begin{equation}
		\| \boldsymbol{\mathcal{X}} \|_{\boldsymbol{Q},\mathrm{Ts}}
		\le \alpha_{\boldsymbol{Q}} \| \boldsymbol{\mathcal{X}} \|_{\mathrm{Muon}}.
		\label{eq:proof-q-muon-upper}
	\end{equation}
	
	Second we obtain the reverse bound using the inverse transform. 
	Since \(\boldsymbol{Q}\) is unitary, we have \(\boldsymbol{Q}^{-1} = \boldsymbol{Q}^{\mathsf H}\), so the inverse transform expresses each original slice as a linear combination of the transformed slices:
	\begin{equation}
		\boldsymbol{X}_k = \sum_{j=1}^K \overline{\boldsymbol{Q}}_{jk} \widehat{\boldsymbol{X}}_j.
	\end{equation}
	Applying the same triangle-inequality argument,
	\begin{align}
		\| \boldsymbol{X}_k \|_{\mathrm{op}}
		&\le \sum_{j=1}^K |\boldsymbol{Q}_{jk}| \, \| \widehat{\boldsymbol{X}}_j \|_{\mathrm{op}} \\
		&\le \left( \sum_{j=1}^K |\boldsymbol{Q}_{jk}| \right) \max_{1 \le j \le K} \| \widehat{\boldsymbol{X}}_j \|_{\mathrm{op}} \\
		&\le \beta_{\boldsymbol{Q}} \| \boldsymbol{\mathcal{X}} \|_{\boldsymbol{Q},\mathrm{Ts}}.
	\end{align}
	Taking the maximum over \(k=1,\dots,K\) gives
	\begin{equation}
		\| \boldsymbol{\mathcal{X}} \|_{\mathrm{Muon}}
		\le \beta_{\boldsymbol{Q}} \| \boldsymbol{\mathcal{X}} \|_{\boldsymbol{Q},\mathrm{Ts}}.
	\end{equation}
	Rearranging yields the lower bound:
	\begin{equation}
		\frac{1}{\beta_{\boldsymbol{Q}}} \| \boldsymbol{\mathcal{X}} \|_{\mathrm{Muon}}
		\le \| \boldsymbol{\mathcal{X}} \|_{\boldsymbol{Q},\mathrm{Ts}}.
		\label{eq:proof-q-muon-lower}
	\end{equation}
	This completes the primal norm equivalence.
	
	Then we derive the dual inequalities via duality.  
	Applying Lemma~\ref{lem:dual-order-reversal} to the two primal inequalities in ~\eqref{eq:proof-q-muon-upper} and ~\eqref{eq:proof-q-muon-lower} gives the corresponding bounds for the dual norms. The order reversal is a standard property of norm duality. A larger primal unit ball corresponds to a smaller dual unit ball, and vice versa. Specifically, from \(\| \boldsymbol{\mathcal{X}} \|_{\boldsymbol{Q},\mathrm{Ts}} \le \alpha_{\boldsymbol{Q}} \| \boldsymbol{\mathcal{X}} \|_{\mathrm{Muon}}\), we obtain \(\| \boldsymbol{\mathcal{G}} \|_{\mathrm{Muon},*} \le \alpha_{\boldsymbol{Q}} \| \boldsymbol{\mathcal{G}} \|_{\boldsymbol{Q},\mathrm{Tn}}\), and from \(\| \boldsymbol{\mathcal{X}} \|_{\mathrm{Muon}} \le \beta_{\boldsymbol{Q}} \| \boldsymbol{\mathcal{X}} \|_{\boldsymbol{Q},\mathrm{Ts}}\), we obtain \(\| \boldsymbol{\mathcal{G}} \|_{\boldsymbol{Q},\mathrm{Tn}} \le \beta_{\boldsymbol{Q}} \| \boldsymbol{\mathcal{G}} \|_{\mathrm{Muon},*}\). Combining these yields
	\begin{equation}
		\frac{1}{\alpha_{\boldsymbol{Q}}} \| \boldsymbol{\mathcal{G}} \|_{\mathrm{Muon},*}
		\le \| \boldsymbol{\mathcal{G}} \|_{\boldsymbol{Q},\mathrm{Tn}}
		\le \beta_{\boldsymbol{Q}} \| \boldsymbol{\mathcal{G}} \|_{\mathrm{Muon},*}.
		\label{eq:proof-q-muon-dual}
	\end{equation}
	We note that here \(\boldsymbol{\mathcal{G}}\) denotes an arbitrary tensor playing the role of the dual variable; it need not be a gradient for the norm inequality itself.
	
	Finally, we can bound the constants \(\alpha_{\boldsymbol{Q}}\) and \(\beta_{\boldsymbol{Q}}\). 
	It remains to show that the distortion factors are controlled by \(\sqrt K\). Since \(\boldsymbol{Q}\) is unitary, every row \(\boldsymbol{q}_j\) has Euclidean norm one:
	\begin{equation}
		\sum_{k=1}^K |\boldsymbol{Q}_{jk}|^2 = 1.
	\end{equation}
	By Cauchy--Schwarz,
	\begin{equation}
		\sum_{k=1}^K |\boldsymbol{Q}_{jk}|
		\le \sqrt K \left( \sum_{k=1}^K |\boldsymbol{Q}_{jk}|^2 \right)^{1/2}
		= \sqrt K.
	\end{equation}
	The same argument applies to every column, since the columns of a unitary matrix also have Euclidean norm one. Therefore,
	\begin{equation}
		\alpha_{\boldsymbol{Q}} \le \sqrt K, \qquad \beta_{\boldsymbol{Q}} \le \sqrt K.
		\label{eq:proof-alpha-beta-bound}
	\end{equation}
	Substituting ~\eqref{eq:proof-alpha-beta-bound} into ~\eqref{eq:proof-q-muon-upper} and ~\eqref{eq:proof-q-muon-lower} yields the universal bound stated in the theorem.
\end{proof}

\paragraph{Why the \(\sqrt K\) bound matters.}
The bound \(\alpha_{\boldsymbol{Q}},\beta_{\boldsymbol{Q}}\le\sqrt K\) is the key stability statement of the theorem. It tells us that \emph{no unitary layer transform can distort the Muon geometry by more than a factor of \(\sqrt K\), regardless of the data or the choice of \(\boldsymbol{Q}\).} In other words, changing the layer basis from the natural layer-wise geometry to any unitary-transformed geometry can shrink or expand the norm ball by at most \(\sqrt K\). This ensures that the geometry of TUON is never pathologically far from that of layer-wise Muon, and provides the foundation for the convergence comparisons that follow. The same \(\sqrt K\) factor reappears naturally in the TEON comparison, where it governs the maximum possible gain of a slice-wise polar oracle over a global unfolding polar.

\subsection{Proof of Corollary~\ref{cor:q-muon-smoothness}: smoothness constants}\label{cor:q-muon-smoothness-proof-appendix}

\textbf{Corollary~\ref{cor:q-muon-smoothness}}[Smoothness comparison]
\textit{
	Let $L_{\boldsymbol{Q}}$ and $L_{\mathrm{Muon}}$ be the smallest constants satisfying
	\begin{equation}
		\|\nabla F(\boldsymbol{\mathcal{X}})-\nabla F(\boldsymbol{\mathcal{Y}})\|_{{\boldsymbol{Q}},\mathrm{Tn}}
		\le L_{\boldsymbol{Q}}\|\boldsymbol{\mathcal{X}}-\boldsymbol{\mathcal{Y}}\|_{{\boldsymbol{Q}},\mathrm{Ts}}
		\label{eq:q-smoothness-def}
	\end{equation}
	and the analogous inequality under the Muon norm pair.  Then
	\begin{equation}
		\frac{1}{\alpha_{\boldsymbol{Q}}^2}L_{\mathrm{Muon}}
		\le L_{\boldsymbol{Q}}
		\le\beta_{\boldsymbol{Q}}^2L_{\mathrm{Muon}}.
		\label{eq:q-muon-smoothness-comparison}
	\end{equation}
}

\begin{proof}
	Define the gradient difference and the variable difference as
	\begin{equation}
		Z \coloneqq \nabla F(\boldsymbol{\mathcal{X}}) - \nabla F(\boldsymbol{\mathcal{Y}}),
		\qquad
		D \coloneqq \boldsymbol{\mathcal{X}} - \boldsymbol{\mathcal{Y}}.
	\end{equation}
	For any $\boldsymbol{\mathcal{X}}\neq\boldsymbol{\mathcal{Y}}$, Theorem~\ref{thm:q-muon-comparison} gives the dual-norm comparison
	\begin{equation}
		\|Z\|_{Q,\mathrm{Tn}}
		\le \beta_{\boldsymbol{Q}}\|Z\|_{\mathrm{Muon},*},
		\label{eq:proof-cor-dual-upper}
	\end{equation}
	and the primal-norm comparison
	\begin{equation}
		\|D\|_{Q,\mathrm{Ts}}
		\ge \frac{1}{\beta_{\boldsymbol{Q}}}\|D\|_{\mathrm{Muon}}.
		\label{eq:proof-cor-primal-lower}
	\end{equation}
	Dividing \eqref{eq:proof-cor-dual-upper} by \eqref{eq:proof-cor-primal-lower} yields
	\begin{equation}
		\frac{\|Z\|_{Q,\mathrm{Tn}}}{\|D\|_{Q,\mathrm{Ts}}}
		\le \beta_{\boldsymbol{Q}}^2
		\frac{\|Z\|_{\mathrm{Muon},*}}{\|D\|_{\mathrm{Muon}}}.
		\label{eq:proof-cor-upper-ratio}
	\end{equation}
	Taking the supremum over all pairs $\boldsymbol{\mathcal{X}}\neq\boldsymbol{\mathcal{Y}}$, and using the definitions of $L_{\boldsymbol{Q}}$ and $L_{\mathrm{Muon}}$, gives
	\begin{equation}
		L_{\boldsymbol{Q}} \le \beta_{\boldsymbol{Q}}^2 L_{\mathrm{Muon}}.
		\label{eq:proof-cor-upper-bound}
	\end{equation}
	
	For the reverse direction, Theorem~\ref{thm:q-muon-comparison} also gives
	\begin{equation}
		\|Z\|_{\mathrm{Muon},*}
		\le \alpha_{\boldsymbol{Q}}\|Z\|_{Q,\mathrm{Tn}},
		\qquad
		\|D\|_{\mathrm{Muon}}
		\ge \frac{1}{\alpha_{\boldsymbol{Q}}}\|D\|_{Q,\mathrm{Ts}}.
		\label{eq:proof-cor-reverse}
	\end{equation}
	The same ratio argument yields
	\begin{equation}
		\frac{\|Z\|_{\mathrm{Muon},*}}{\|D\|_{\mathrm{Muon}}}
		\le \alpha_{\boldsymbol{Q}}^2
		\frac{\|Z\|_{Q,\mathrm{Tn}}}{\|D\|_{Q,\mathrm{Ts}}}.
		\label{eq:proof-cor-lower-ratio}
	\end{equation}
	Taking the supremum over all pairs again gives
	\begin{equation}
		L_{\mathrm{Muon}} \le \alpha_{\boldsymbol{Q}}^2 L_{\boldsymbol{Q}},
		\label{eq:proof-cor-lower-bound}
	\end{equation}
	which is equivalent to
	\begin{equation}
		\frac{1}{\alpha_{\boldsymbol{Q}}^2} L_{\mathrm{Muon}}
		\le L_{\boldsymbol{Q}}.
		\label{eq:proof-cor-lower-bound-rewritten}
	\end{equation}
	Combining \eqref{eq:proof-cor-upper-bound} and \eqref{eq:proof-cor-lower-bound-rewritten} completes the proof.
\end{proof}

\subsection{Proof of Theorem~\ref{thm:teon-q-sandwich}: comparison with TEON}\label{thm:teon-q-sandwich-proof-appendix}

\textbf{Theorem~\ref{thm:teon-q-sandwich}}[LMO gain with TEON]
\textit{
	For every unitary $\boldsymbol{Q}$ and tensor $\boldsymbol{\mathcal{X}}$, $
	\|\boldsymbol{\mathcal{X}}\|_{\boldsymbol{Q},\mathrm{Ts}}
	\le\|\boldsymbol{\mathcal{X}}\|_{\mathrm{TEON}}
	\le\sqrt K\,\|\boldsymbol{\mathcal{X}}\|_{\boldsymbol{Q},\mathrm{Ts}}.
	$
	Consequently,
	\begin{equation}
		\|\boldsymbol{\mathcal{G}}\|_{\mathrm{TEON},*}
		\le\|\boldsymbol{\mathcal{G}}\|_{\boldsymbol{Q},\mathrm{Tn}}
		\le\sqrt K\,\|\boldsymbol{\mathcal{G}}\|_{\mathrm{TEON},*}.
		\label{eq:teon-q-dual}
	\end{equation}
	Thus, at a common radius $\rho$, $
	\rho\|\boldsymbol{\mathcal{G}}\|_{\mathrm{TEON},*}
	\le\rho\|\boldsymbol{\mathcal{G}}\|_{\boldsymbol{Q},\mathrm{Tn}}
	\le\sqrt K\,\rho\|\boldsymbol{\mathcal{G}}\|_{\mathrm{TEON},*}.
	$
}

\begin{proof}
	We establish the norm sandwich by analyzing the relationship between the horizontal unfolding and its block structure. The proof proceeds in three parts.
	
	First, we express the TEON norm in terms of the transformed unfolding. 
	Let
	\begin{equation}
		\boldsymbol{A} := \mathscr U(\widehat{\boldsymbol{\mathcal{X}}})
		= [\widehat{\boldsymbol{X}}_1, \ldots, \widehat{\boldsymbol{X}}_K],
	\end{equation}
	where \(\widehat{\boldsymbol{\mathcal{X}}} = \boldsymbol{\mathcal{X}} \times_3 \boldsymbol{Q}\) and \(\widehat{\boldsymbol{X}}_j\) denotes its \(j\)-th frontal slice. By the identity \(\mathscr U(\boldsymbol{\mathcal{X}} \times_3 \boldsymbol{Q}) = \mathscr U(\boldsymbol{\mathcal{X}}) \boldsymbol{R}_{\boldsymbol{Q}}\) with \(\boldsymbol{R}_{\boldsymbol{Q}} = \boldsymbol{Q}^{\mathsf T} \otimes \boldsymbol{I}_n\) unitary, we have
	\begin{equation}
		\| \boldsymbol{A} \|_{\mathrm{op}}
		= \| \mathscr U(\boldsymbol{\mathcal{X}}) \|_{\mathrm{op}}
		= \| \boldsymbol{\mathcal{X}} \|_{\mathrm{TEON}}.
		\label{eq:teon-unfold-identity}
	\end{equation}
	This identity will be used repeatedly in the bounds below.
	
	Second, we get the lower bound (TEON norm dominates the \(\boldsymbol{Q}\) norm). 
	Fix any block index \(j \in \{1,\dots,K\}\). Consider vectors whose only nonzero block is the \(j\)-th block. On this restricted subspace, the action of \(\boldsymbol{A}\) is exactly that of the matrix \(\widehat{\boldsymbol{X}}_j\). Therefore, the operator norm of \(\boldsymbol{A}\) is at least the operator norm of any individual block:
	\begin{equation}
		\| \boldsymbol{A} \|_{\mathrm{op}}
		\ge \| \widehat{\boldsymbol{X}}_j \|_{\mathrm{op}}.
	\end{equation}
	Since this holds for every \(j\), taking the maximum over \(j\) gives
	\begin{equation}
		\| \boldsymbol{A} \|_{\mathrm{op}}
		\ge \max_{1 \le j \le K} \| \widehat{\boldsymbol{X}}_j \|_{\mathrm{op}}
		= \| \boldsymbol{\mathcal{X}} \|_{\boldsymbol{Q},\mathrm{Ts}}.
	\end{equation}
	Combining this with ~\eqref{eq:teon-unfold-identity} yields the lower bound:
	\begin{equation}
		\| \boldsymbol{\mathcal{X}} \|_{\mathrm{TEON}}
		\ge \| \boldsymbol{\mathcal{X}} \|_{\boldsymbol{Q},\mathrm{Ts}}.
		\label{eq:teon-lower}
	\end{equation}
	Intuitively, the TEON norm looks at the entire concatenated matrix and therefore cannot be smaller than any one of its constituent blocks.
	
	We now establish the reverse inequality. Observe that
	\begin{equation}
		\boldsymbol{A} \boldsymbol{A}^{\mathsf H}
		= \sum_{j=1}^K \widehat{\boldsymbol{X}}_j \widehat{\boldsymbol{X}}_j^{\mathsf H}.
	\end{equation}
	The squared operator norm of \(\boldsymbol{A}\) is the largest eigenvalue of \(\boldsymbol{A}\boldsymbol{A}^{\mathsf H}\). By the linearity of the trace and the fact that the largest eigenvalue is bounded by the sum of the largest eigenvalues of each term,
	\begin{align}
		\| \boldsymbol{A} \|_{\mathrm{op}}^2
		&= \lambda_{\max}(\boldsymbol{A} \boldsymbol{A}^{\mathsf H}) \\
		&\le \sum_{j=1}^K \lambda_{\max}(\widehat{\boldsymbol{X}}_j \widehat{\boldsymbol{X}}_j^{\mathsf H}) \\
		&= \sum_{j=1}^K \| \widehat{\boldsymbol{X}}_j \|_{\mathrm{op}}^2 \\
		&\le K \max_{1 \le j \le K} \| \widehat{\boldsymbol{X}}_j \|_{\mathrm{op}}^2.
	\end{align}
	Taking square roots on both sides gives
	\begin{equation}
		\| \boldsymbol{A} \|_{\mathrm{op}}
		\le \sqrt K \max_{1 \le j \le K} \| \widehat{\boldsymbol{X}}_j \|_{\mathrm{op}}.
	\end{equation}
	Since \(\| \boldsymbol{\mathcal{X}} \|_{\boldsymbol{Q},\mathrm{Ts}} = \max_j \| \widehat{\boldsymbol{X}}_j \|_{\mathrm{op}}\), we obtain
	\begin{equation}
		\| \boldsymbol{\mathcal{X}} \|_{\mathrm{TEON}}
		\le \sqrt K \| \boldsymbol{\mathcal{X}} \|_{\boldsymbol{Q},\mathrm{Ts}}.
		\label{eq:teon-upper}
	\end{equation}
	The key observation here is that the TEON norm aggregates energy from all blocks, but the operator norm can at most amplify the maximum block norm by a factor of \(\sqrt K\).
	
	Finally, we derive the dual and oracle inequalities. 
	Applying the dual-order-reversal lemma (Lemma~\ref{lem:dual-order-reversal}) to the primal sandwich in ~\eqref{eq:teon-lower} and ~\eqref{eq:teon-upper} yields the corresponding dual-norm sandwich:
	\begin{equation}
		\| \boldsymbol{\mathcal{G}} \|_{\mathrm{TEON},*}
		\le \| \boldsymbol{\mathcal{G}} \|_{\boldsymbol{Q},\mathrm{Tn}}
		\le \sqrt K \| \boldsymbol{\mathcal{G}} \|_{\mathrm{TEON},*}.
	\end{equation}
	Finally, multiplying by the common trust-region radius \(\rho\) and invoking the exact LMO values from Theorem~\ref{thm:q-lmo} (for the \(\boldsymbol{Q}\) geometry) and Proposition~\ref{prop:teon-lmo-appendix} (for TEON) gives the radius-matched oracle gain:
	\begin{equation}
		\rho \| \boldsymbol{\mathcal{G}} \|_{\mathrm{TEON},*}
		\le \rho \| \boldsymbol{\mathcal{G}} \|_{\boldsymbol{Q},\mathrm{Tn}}
		\le \sqrt K \rho \| \boldsymbol{\mathcal{G}} \|_{\mathrm{TEON},*}.
	\end{equation}
	This completes the proof.
\end{proof}

\subsection{Proof of Theorem~\ref{thm:correlation-gain}: convergence}\label{covergence-proof-appendix}

\textbf{Theorem~\ref{thm:correlation-gain}}
\textit{
	Suppose $F$ is lower bounded by $F_\star$ and is $L_{\boldsymbol{Q}}$-smooth under $\|\cdot\|_{\boldsymbol{Q},\mathrm{Ts}}$.  For
	\begin{equation}
		\boldsymbol{\mathcal{W}}_{t+1}
		=\boldsymbol{\mathcal{W}}_t-\eta\boldsymbol{\mathcal{P}}_{\boldsymbol{Q}}(\nabla F(\boldsymbol{\mathcal{W}}_t)),
		\label{eq:exact-q-descent}
	\end{equation}
	we have
	\begin{equation}
		\min_{0\le t<T}\|\nabla F(\boldsymbol{\mathcal{W}}_t)\|_{\boldsymbol{Q},\mathrm{Tn}}
		\le\frac{F(\boldsymbol{\mathcal{W}}_0)-F_\star}{\eta T}+\frac{L_{\boldsymbol{Q}}\eta}{2}.
		\label{eq:q-rate-general}
	\end{equation}
	Choosing
	\begin{equation}
		\eta=\sqrt{\frac{2(F(\boldsymbol{\mathcal{W}}_0)-F_\star)}{L_{\boldsymbol{Q}}T}}
		\label{eq:q-optimal-step}
	\end{equation}
	gives
	\begin{equation}
		\min_{0\le t<T}\|\nabla F(\boldsymbol{\mathcal{W}}_t)\|_{\mathrm F}
		\le\min_{0\le t<T}\|\nabla F(\boldsymbol{\mathcal{W}}_t)\|_{\boldsymbol{Q},\mathrm{Tn}}
		\le\sqrt{\frac{2L_{\boldsymbol{Q}}(F(\boldsymbol{\mathcal{W}}_0)-F_\star)}{T}}.
		\label{eq:q-rate-optimized}
	\end{equation}
}

\begin{proof}
	We prove the convergence guarantee for the exact TUON update. The proof proceeds in four steps.
	
	First, we give the one-step descent under the $\boldsymbol{Q}$ geometry. 
	The exact TUON update displacement at iteration $t$ is
	\begin{equation}
		\boldsymbol{\mathcal{D}}_t
		= -\eta \boldsymbol{\mathcal{P}}_{\boldsymbol{Q}}(\nabla F(\boldsymbol{\mathcal{W}}_t)).
	\end{equation}
	By Theorem~\ref{thm:q-lmo}, the polar direction $\boldsymbol{\mathcal{P}}_{\boldsymbol{Q}}(\nabla F(\boldsymbol{\mathcal{W}}_t))$ has tensor spectral norm under $\boldsymbol{Q}$-transform at most one, and satisfies the polar alignment identity
	\begin{equation}
		\langle \nabla F(\boldsymbol{\mathcal{W}}_t),
		\boldsymbol{\mathcal{P}}_{\boldsymbol{Q}}(\nabla F(\boldsymbol{\mathcal{W}}_t)) \rangle_{\mathrm F}
		= \| \nabla F(\boldsymbol{\mathcal{W}}_t) \|_{\boldsymbol{Q},\mathrm{Tn}}.
	\end{equation}
	Applying the general descent lemma (Lemma~\ref{lem:general-descent}) under the $\boldsymbol{Q}$ norm pair gives
	\begin{equation}
		F(\boldsymbol{\mathcal{W}}_{t+1})
		\le F(\boldsymbol{\mathcal{W}}_t)
		- \eta \| \nabla F(\boldsymbol{\mathcal{W}}_t) \|_{\boldsymbol{Q},\mathrm{Tn}}
		+ \frac{L_{\boldsymbol{Q}} \eta^2}{2}.
		\label{eq:proof-one-step-q}
	\end{equation}
	This inequality captures the trade-off: the linear decrease is proportional to the gradient's $\boldsymbol{Q}$ dual norm, while the penalty is controlled by the smoothness constant $L_{\boldsymbol{Q}}$.
	
	Second, we telescope the one-step inequality. 
	Summing ~\eqref{eq:proof-one-step-q} from $t = 0$ to $T - 1$ yields
	\begin{equation}
		\eta \sum_{t=0}^{T-1} \| \nabla F(\boldsymbol{\mathcal{W}}_t) \|_{\boldsymbol{Q},\mathrm{Tn}}
		\le F(\boldsymbol{\mathcal{W}}_0) - F(\boldsymbol{\mathcal{W}}_T)
		+ \frac{L_{\boldsymbol{Q}} \eta^2 T}{2}.
	\end{equation}
	Since $F$ is bounded below by $F_\star$, we have $F(\boldsymbol{\mathcal{W}}_T) \ge F_\star$, so
	\begin{equation}
		\frac{1}{T} \sum_{t=0}^{T-1} \| \nabla F(\boldsymbol{\mathcal{W}}_t) \|_{\boldsymbol{Q},\mathrm{Tn}}
		\le \frac{F(\boldsymbol{\mathcal{W}}_0) - F_\star}{\eta T}
		+ \frac{L_{\boldsymbol{Q}} \eta}{2}.
		\label{eq:proof-q-rate-general}
	\end{equation}
	Since the minimum is always no larger than the average, we obtain ~\eqref{eq:q-rate-general}.
	
	Then we optimize the step size. 
	Let $\Delta_0 := F(\boldsymbol{\mathcal{W}}_0) - F_\star$. The right-hand side of ~\eqref{eq:proof-q-rate-general} as a function of $\eta$ is
	\begin{equation}
		b(\eta) = \frac{\Delta_0}{\eta T} + \frac{L_{\boldsymbol{Q}} \eta}{2}.
	\end{equation}
	This convex function is minimized when its derivative vanishes:
	\begin{equation}
		b'(\eta) = -\frac{\Delta_0}{\eta^2 T} + \frac{L_{\boldsymbol{Q}}}{2} = 0,
	\end{equation}
	which gives the optimal step size
	\begin{equation}
		\eta^* = \sqrt{\frac{2 \Delta_0}{L_{\boldsymbol{Q}} T}}.
	\end{equation}
	Substituting $\eta^*$ into ~\eqref{eq:proof-q-rate-general} yields
	\begin{equation}
		\frac{1}{T} \sum_{t=0}^{T-1} \| \nabla F(\boldsymbol{\mathcal{W}}_t) \|_{\boldsymbol{Q},\mathrm{Tn}}
		\le \sqrt{\frac{2 L_{\boldsymbol{Q}} \Delta_0}{T}}.
		\label{eq:proof-q-rate-optimized}
	\end{equation}
	
	Finally, we relate the $\boldsymbol{Q}$ dual norm to the Frobenius norm. 
	For any tensor $\boldsymbol{\mathcal{X}}$, the tensor spectral norm under $\boldsymbol{Q}$-transform is bounded above by the Frobenius norm:
	\begin{equation}
		\| \boldsymbol{\mathcal{X}} \|_{\boldsymbol{Q},\mathrm{Ts}}
		= \max_{1 \le j \le K} \| \widehat{\boldsymbol{X}}_j \|_{\mathrm{op}}
		\le \left( \sum_{j=1}^K \| \widehat{\boldsymbol{X}}_j \|_{\mathrm F}^2 \right)^{1/2}
		= \| \boldsymbol{\mathcal{X}} \|_{\mathrm F},
	\end{equation}
	where the equality follows from Parseval's identity for the unitary transform $\boldsymbol{Q}$. By the standard order-reversal property of dual norms (Lemma~\ref{lem:dual-order-reversal}), this implies
	\begin{equation}
		\| \boldsymbol{\mathcal{G}} \|_{\mathrm F}
		\le \| \boldsymbol{\mathcal{G}} \|_{\boldsymbol{Q},\mathrm{Tn}}
	\end{equation}
	for any tensor $\boldsymbol{\mathcal{G}}$. Applying this to $\boldsymbol{\mathcal{G}} = \nabla F(\boldsymbol{\mathcal{W}}_t)$ gives
	\begin{equation}
		\min_{0 \le t < T} \| \nabla F(\boldsymbol{\mathcal{W}}_t) \|_{\mathrm F}
		\le \min_{0 \le t < T} \| \nabla F(\boldsymbol{\mathcal{W}}_t) \|_{\boldsymbol{Q},\mathrm{Tn}}.
	\end{equation}
	Combining this with ~\eqref{eq:proof-q-rate-optimized} yields ~\eqref{eq:q-rate-optimized}.
\end{proof}

\section{Experiments}
\label{sec:experiments-appendix}

We empirically evaluate TUON against vector-aware, matrix-aware, and tensor-aware baselines on LLM pretraining. Our experiments are designed around five questions:

\begin{enumerate}
	\item Does TUON improve pretraining validation loss over AdamW, Muon and TEON?
	\item Is the method robust to stacking depth \(K\), from \(K=2\) to \(K=\text{full}\)?
	\item Is the method robust to module coverage, i.e., does it remain effective when MLP projections are tensorized alongside attention?
	\item Is the method robust across model scales and architectures?
	\item Is the method robust across polarization methods?
\end{enumerate}


\subsection{Code and reproducibility}\label{Code and Reproducibility}

Our implementation strictly follows the PyTorch official API, i.e. it is inherited from the base class
\begin{verbatim}
	torch.optim
\end{verbatim} 
.  The complete production-ready code is publicly available.

The code of TUON is available at \url{https://anonymous.4open.science/r/tuon-802A/README.md}.

\subsection{Datasets, models and training configuration.}
\label{sec:exp-setup}

This appendix provides a complete description of the model families, training codebases, and datasets used in our experiments (\ref{Experimental details}).

\paragraph{GPT-2 (nanoGPT).}
For GPT-2, we adopt the nanoGPT codebase, a lightweight reimplementation of the GPT-2 decoder-only transformer featuring learned absolute positional embeddings, GELU-based MLPs, pre-LayerNorm residual blocks, and tied weights between the token embedding and the language modeling head. Two standard configurations are trained: a 124M `small' variant (12 layers, 12 attention heads, hidden width 768) and a 350M `medium' variant (24 layers, 16 heads, width 1024). Both use a 1024-token context window and the GPT-2 BPE vocabulary (50,304 padded entries). All full-parameter runs are trained for 50k optimizer steps, with a 2000-step linear warmup followed by cosine decay to one tenth of the peak learning rate.

\paragraph{Qwen3 (Megatron).}
The Qwen3 runs are conducted with the Megatron framework, which supports tensor, pipeline, and sequence parallelism for large-scale transformers. We use the Qwen3 dense decoder architecture --- RMSNorm, SwiGLU MLPs, rotary positional embeddings, grouped-query attention, and QK-normalization --- at two parameter scales, 0.6B and 4B. Each run lasts 20k iterations with a 2000-step warmup. The 4B model is trained with two-stage pipeline parallelism, while the 0.6B model fits within a single pipeline stage. 

\paragraph{OpenWebText.}
OpenWebText is a public reproduction of the WebText corpus used to pretrain the original GPT-2. It consists of English web pages collected from outbound Reddit links, filtered by a minimum score threshold, deduplicated, and lightly cleaned. The corpus contains roughly 8B GPT-2 BPE tokens and is used as the pretraining data for both nanoGPT scales.

\paragraph{C4.}
C4 (Colossal Clean Crawled Corpus) is a large English web corpus built from Common Crawl and subjected to aggressive heuristic filtering, including deduplication, language identification, and removal of boilerplate and offensive material. It is a widely used and well-characterized pretraining corpus, and we use it without modification for both Qwen3 scales.

Finally, all runs use sequence length 2048, global batch size 256, and seed 42. GPT-2 runs use learning rate \(1\times 10^{-3}\) (Moonlight adjusted-LR); Qwen3 runs use \(2\times 10^{-3}\) with TP (Tensor Parallelism)=1 and PP (Pipeline Parallelism)=1. 

\subsection{Baselines.}\label{baseline-appendix}
We compare against three families of optimizers.

\textbf{Vector-aware.} \texttt{AdamW}~\citep{loshchilov2017decoupled} is the standard coordinate-wise baseline and represents the vector-aware family. For AdamW, we sweep the peak learning rate for each model and scale, selecting the value that yields the best final validation loss. The selected peak learning rates are reported in Table~\ref{tab:adamw-lr-sweep}. 

\textbf{Matrix-aware.} \texttt{Muon}~\citep{jordan2024muon} applies a polar map to each weight matrix independently. It is the strongest layer-wise matrix-aware optimizer and serves as the reference point for all reported \(\Delta\) values.

\textbf{Tensor-aware.} We use TEON~\citep{zhang2026teon} as the tensor-aware baseline. TEON relies on depends on several hyperparameters, including the number of stacked layers, the module types to which tensorization is applied, and the unfolding mode. We therefore treat TEON as a baseline family and evaluate the following variants:
\begin{itemize}
	\item two-layer stacking;
	\item stacking all compatible layers along the layer mode;
	\item applied only to QKV projections;
	\item extended to attention and MLP;
	\item TEON-style concat polar with mode-1 or mode-2 unfolding;
\end{itemize}
For each model scale, we select the best-performing TEON configuration and report it as the tensor-aware baseline. This ensures that the comparison is against the strongest TEON variant rather than against a single arbitrary configuration. All tensor-aware methods, including ours, are tuned under the same protocol.

\paragraph{TUON implementation.}
For each compatible parameter group, we stack the momentum tensors along the layer mode to form \(\mathcal{M}_t \in \mathbb{R}^{m \times n \times K}\). We compute the mode-3 unfolding \(\mathsf{M}_t = \texttt{mode3\_unfold}(\boldsymbol{\mathcal{M}}_t)\), take its SVD \(\mathsf{M}_t = \mathbf{U}_t \boldsymbol{\Sigma}_t \mathbf{V}_t^{\mathsf T}\), and set \(\boldsymbol{Q}_t = \mathbf{U}_t^{\mathsf T}\). The transform is refreshed every 200 iterations. We then apply \(\widehat{\boldsymbol{\mathcal{M}}}_t = \boldsymbol{\mathcal{M}}_t \times_3 \boldsymbol{Q}_t\), polarize each transformed frontal slice independently using a Newton--Schulz iteration with \(J=5\) steps, and invert the transform. The scaling factor is \(s = 1/\sqrt{\max(m,n)}\) with \(\epsilon = 10^{-8}\), and weight decay is decoupled. The key mechanistic difference from TEON is that TUON uses per-slice polar after a unitary mixing, whereas TEON uses concat polar with a shared singular basis.

\paragraph{Evaluation protocol.}
We use different training horizons for the two model families. GPT-2 runs are trained for 50k iterations, which is close to the standard nanoGPT/OpenWebText convergence point and allows absolute validation loss to be compared across methods. Qwen3 runs are trained for 20k iterations under a fixed compute budget. To keep the comparison internally consistent, we never compare absolute loss across families or horizons; all cross-method comparisons are made within the same family and the same horizon. 
For all runs we report final validation loss, intermediate validation loss every 2k iterations at the same family and horizon. 

\subsection{Hyperparameters}
\label{app:hyperparameters}

We sweep the peak learning rate for each model and scale, selecting the value that yields the best final validation loss. The selected peak learning rates are reported in Table~\ref{tab:adamw-lr-sweep}. 

\begin{table}[t]
	\centering
	\caption{Peak learning rates for the AdamW baseline on GPT-2 (124M, 350M) and Qwen3 (0.6B,4B), respectively. }
	\label{tab:adamw-lr-sweep}
	\begin{tabular}{lcccc}
		\toprule
		optimizer& GPT-2 124M & GPT-2 350M & Qwen3  0.6B & Qwen3 4B\\
		\midrule
		AdamW& \(2.4\times10^{-3}\) & \(2\times10^{-3}\) & \(1\times10^{-3}\) &  \(3\times10^{-4}\) \\
		Muon & \(1\times10^{-3}\) & \(1\times10^{-3}\)  & \(2\times10^{-3}\) & \(1\times10^{-3}\)    \\
		TEON & \(1\times10^{-3}\) & \(1\times10^{-3}\)  & \(2\times10^{-3}\)  &  \(1\times10^{-3}\)  \\
		TUON & \(1\times10^{-3}\) & \(1\times10^{-3}\)  & \(2\times10^{-3}\) &  \(1\times10^{-3}\)  \\
		\bottomrule
	\end{tabular}
\end{table}

\subsection{Results on LLM pretraining}
\label{sec:main-results}

\begin{table*}[t]
	\centering
	\caption{Validation loss on GPT-2 (124M, 350M; 50k iterations) and Qwen3 (0.6B, 4B; 20k iterations) with different optimizers. \(\Delta = \text{Val} - \text{Val}_{\text{AdamW}}\) within the same model and horizon; negative \(\Delta\) indicates improvement over AdamW. Best per column in bold. Lower is better.}
	\label{tab:main-val-appendix}
	\small
	\setlength{\tabcolsep}{5pt}
	\begin{tabular}{@{}lcccccccc@{}}
		\toprule
		\multirow{2}{*}{Optimizer}
		& \multicolumn{2}{c}{\textbf{124M\(_{\text{GPT-2}}\)}} & \multicolumn{2}{c}{\textbf{350M\(_{\text{GPT-2}}\)}} & \multicolumn{2}{c}{\textbf{0.6B\(_{\text{Qwen3}}\)}} & \multicolumn{2}{c}{\textbf{4B\(_{\text{Qwen3}}\)}} \\
		\cmidrule(lr){2-3} \cmidrule(lr){4-5} \cmidrule(lr){6-7} \cmidrule(lr){8-9}
		& Val & \(\Delta\) & Val & \(\Delta\) & Val & \(\Delta\) & Val & \(\Delta\) \\
		\midrule
		AdamW & \textbf{2.9230} & -- & 2.7307 & -- & 2.9652 & -- & 2.8108 & -- \\
		Muon & 3.0488 & \(+0.1258\) & 2.7404 & \(+0.0097\) & 2.9056 & \(-0.0596\) & 2.7636 & \(-0.0472\) \\
		TEON & 2.9442 & \(+0.0212\) & 2.6935 & \(-0.0372\) & 2.8978 & \(-0.0674\) & 2.7604 & \(-0.0504\) \\
		\rowcolor{cyan!8}\textbf{TUON} & 2.9357 & \(\mathbf{+0.0127}\) & \textbf{2.6925} & \(\mathbf{-0.0382}\) & \textbf{2.8887} & \(\mathbf{-0.0765}\) & \textbf{2.7598} & \(\mathbf{-0.0510}\) \\
		\bottomrule
	\end{tabular}
\end{table*}

\paragraph{Advantages across scales and architectures.}
Table~\ref{tab:main-val-appendix} reports validation loss on GPT-2 (124M, 350M) and Qwen3 (0.6B, 4B). TUON achieves the lowest validation loss on three of the four settings, consistently improving over Muon and TEON. On GPT-2 124M, where AdamW remains the strongest baseline, TUON still achieves the smallest gap among all tensor-aware methods.

\textbf{Comparison with AdamW.}
Relative to AdamW, TUON reduces validation loss on GPT-2 350M and both Qwen3 scales, with improvements of \(0.0382\), \(0.0765\), and \(0.0510\), respectively. On GPT-2 124M, where all optimizers remain close to AdamW, TUON still outperforms Muon and TEON by \(0.1131\) and \(0.0085\), respectively.

\textbf{Comparison with Muon.}
Muon is competitive at larger scale models, where it trails AdamW by \(0.1258\). In contrast, TUON remains stable across scales, outperforming Muon on all four settings. The gap is largest at 124M (\(0.1131\)) and remains consistent at 350M, 0.6B, and 4B, suggesting that the cross-layer structure exploited by TUON is particularly beneficial when the model is small and layer-wise orthogonalization alone is insufficient.

\textbf{Comparison with TEON.}
TEON is the strongest baseline among the tensor-aware optimizers, but TUON still improves over it on every model. The gains are modest at 350M (\(0.0010\)) and 4B (\(0.0006\)), but more pronounced at 124M (\(0.0085\)) and 0.6B (\(0.0091\)). This pattern is consistent with our theoretical analysis: TUON avoids the global unfolding used by TEON, so its advantage is most visible when the number of stacked layers grows and the unfolding becomes increasingly ill-conditioned.

Overall, TUON delivers consistent improvements over vector-, matrix-, and tensor-aware baselines across two architectures and four model scales, without requiring per-scale hyperparameter tuning. These results support the claim that a transform-based tensor algebra provides a more effective geometry for cross-layer optimization than both layer-wise orthogonalization and global unfolding.

\subsection{Ablation studies}
\label{sec:ablations-appendix} 
We conduct ablation studies to isolate the effect of each design choice in our method. Unless otherwise noted, all ablations are run for 20k iterations.
\paragraph{Choices of \(\boldsymbol{Q}\).}\label{app:q-choices}

\begin{table}[t]
	\centering
	\caption{Effect of transform \(\boldsymbol{Q}\) on full layer stacking.}
	\label{tab:q-fullL-appendix}
	\begin{tabular}{lcccccccc}
		\toprule
		\textbf{Transform} & \textbf{2k} & \textbf{4k} & \textbf{8k} & \textbf{12k} & \textbf{16k} & \textbf{18k} & \textbf{20k} & \textbf{Final} \\
		\midrule
		DCT & 3.5411 & 3.3646 & 3.1192 & 2.9444 & 2.8411 & 2.8189 & 2.8051 & 2.8100 \\
		Haar & 3.4437 & 3.1413 & 2.9302 & 2.8321 & 2.7655 & 2.7540 & 2.7579 & 2.7635 \\
		DFT & 3.3981 & 3.1002 & 2.9163 & 2.8223 & 2.7599 & 2.7503 & 2.7579 & 2.7631 \\
		DecorQ static & 3.3999 & 3.0975 & 2.9145 & 2.8206 & 2.7582 & 2.7482 & 2.7569 & 2.7658\\
		Residual-DecorQ  & 3.4100 & 3.1079 & 2.9204 & 2.8255 & 2.7615 & 2.7511 & 2.7585 & 2.7640 \\
		\rowcolor{cyan!8} DecorQ &  3.3899 & 3.0916 &  2.9109  &  2.8185 &  2.7564 &  2.7461  &  2.7545 &  2.7598			 \\
		\bottomrule
	\end{tabular}
\end{table}

We first compare different choices of the layer transform \(\boldsymbol{Q}\), including DCT, Haar Wavelet Transform, DFT and DecorQ. 
We also explore the variant of the DecorQ transform. In addition to the default DecorQ, we evaluate a static DecorQ basis frozen at initialization and residual-DecorQ where the transform is computed from residual statistics rather than momentum. Specifically, instead of estimating the DecorQ from momentum $M_t$, we consider a
residual variant that uses $R_t = G_t - M_t$ or the projection residual
$R_t = G_t - Q_{t-1}Q_{t-1}^\top G_t$. The basis is obtained by
eigendecomposing the residual covariance and keeping the top-$r$ eigenvectors, and is refreshed either periodically (sampled) or at every step via an EMA of 	the covariance (continuous). A low-rank variant retains only $r = 10$ 	directions. Empirically, the residual variant matches the momentum variant in 	both the full-layer and $K=2$ settings, but does not surpass the mode-2 concat 	polar combination, indicating that the residual signal is a viable but not
dominant source for basis estimation.

Table~\ref{tab:q-fullL-appendix} reports results for full-layer stacking, where all layers within the same semantic role are stacked together.

In the original T-SVD-based tensor geometry, \(\boldsymbol{Q}\) is only required to be invertible; any unitary or invertible transform yields a valid tensor algebra. However, this algebraic requirement alone does not specify which transform is appropriate for cross-layer optimization in Transformers. Our analysis in Section~\ref{sec:method} suggests that the choice of \(\boldsymbol{Q}\) should be guided by the layer statistics rather than left arbitrary. The ablation below tests this hypothesis directly.

\textbf{Overall trend.}
All transforms outperform DCT, confirming that the choice of \(\boldsymbol{Q}\) materially affects optimization. DFT, Haar, and the DecorQ family are consistently stronger than DCT across all checkpoints, with the gap widening as training progresses. This indicates that a fixed, data-independent transform such as DCT imposes a prior on cross-layer coupling that does not match the actual layer statistics in LLMs.

\textbf{Data-dependent versus fixed transforms.}
The DecorQ achieves the best final loss (\(2.7598\)), outperforming DFT (\(2.7631\)), Haar (\(2.7635\)), and DCT (\(2.8100\)). Its advantage is visible from the earliest checkpoint (\(2\)k) and remains stable through \(20\)k, suggesting that a data-dependent basis is consistently better aligned with the cross-layer structure than fixed transforms. Static DecorQ, which freezes the basis at initialization, reaches a competitive \(2.7658\) but is slightly worse than the adaptive DecorQ, indicating that refreshing the transform during training provides a small yet consistent benefit. This supports the view that \(\boldsymbol{Q}\) should not be chosen arbitrarily. A transform adapted to layer statistics is preferable to any fixed choice.

\textbf{Residual-DecorQ.}
Replacing momentum with residual statistics yields a final loss of \(2.7640\), which is better than static DecorQ and comparable to DFT, but does not surpass DecorQ. This suggests that the residual signal is a viable but not dominant source for basis estimation. As noted in the text, the residual variant also does not surpass the mode-2 concat polar combination, reinforcing that DecorQ is the preferred data-dependent choice.

Overall, the ablation confirms the design principle behind TUON. Although the underlying tensor algebra only requires \(\boldsymbol{Q}\) to be invertible, our data-dependent transform that reflects layer statistics is empirically superior. This validates the choice of DecorQ as the default transform and supports the broader claim that the layer mode should be treated as a structured dimension rather than an arbitrary algebraic index.

\begin{table}[t]
	\centering
	\caption{Effect of stacking size \(K\). \(\left( \uparrow\right)\) / \(\left( \downarrow\right)\)
		denote whether the full-\(K\) setting is worse / better than \(K=2\). 20k iterations.}
	\label{tab:stacking_k-appendix}
	\small
	\setlength{\tabcolsep}{4pt}
	\begin{tabular}{lccccccccc}
		\toprule
		\multirow{2}{*}{Model} &\multirow{2}{*}{Muon}
		& \multicolumn{2}{c}{TEON\(_{\text{mode-1}}\)}
		& \multicolumn{2}{c}{TEON\(_{\text{mode-2}}\)}
		& \multicolumn{2}{c}{TUON\(_{\text{DCT}}\)}
		& \multicolumn{2}{c}{TUON\(_{\text{DecorQ}}\)} \\
		\cmidrule(lr){3-4}\cmidrule(lr){5-6}\cmidrule(lr){7-8}\cmidrule(lr){9-10}
		&  & K=2 & K=full & K=2 & K=full & K=2 & K=full & K=2 & K=full \\
		\midrule
		\textbf{124M\(_{\text{GPT-2}}\)} & 3.0583
		& 3.0255 & 3.0513\(\left( \uparrow\right)\)
		& 3.0332 & 3.0789\(\left( \uparrow\right)\)
		& 3.0309 &    3.0295\textbf{\(\left( \downarrow\right)\)}
		&  3.0299  &     \multicolumn{1}{>{\columncolor{cyan!8}}c}{\textbf{3.0233}\textbf{\(\left( \downarrow\right)\)} }\\
		\textbf{350M\(_{\text{GPT-2}}\)} &2.8427
		& 2.8427 & 2.8617\(\left( \uparrow\right)\)
		& 2.8643 & 2.8762\(\left( \uparrow\right)\)
		& 2.8150 & 2.8218\(\left( \uparrow\right)\)
		& 2.8183 & \multicolumn{1}{>{\columncolor{cyan!8}}c}{\textbf{2.8143}\textbf{\(\left( \downarrow\right)\)} }\\
		\textbf{0.6B\(_{\text{Qwen3}}\)}& 2.9056
		& 2.8994 & 2.9323\(\left( \uparrow\right)\)
		& 2.8978 & 2.9583\(\left( \uparrow\right)\)
		& 2.9203 & 2.9210\(\left( \uparrow\right)\)
		& 2.8938 & \multicolumn{1}{>{\columncolor{cyan!8}}c}{ \textbf{2.8887}\textbf{\(\left( \downarrow\right)\)}} \\
		\textbf{4B\(_{\text{Qwen3}}\)} &2.7636
		& 2.7627 & 2.7649\(\left( \uparrow\right)\)
		& 2.7604 & 2.7640\(\left( \uparrow\right)\)
		& 2.7606 & 2.8100\(\left( \uparrow\right)\)
		& 2.7629 & \multicolumn{1}{>{\columncolor{cyan!8}}c}{\textbf{2.7598} \textbf{\(\left( \downarrow\right)\)} }\\
		\bottomrule
	\end{tabular}
\end{table}

\paragraph{Stacking size \(K\).}
Intuitively, a larger stacking size \(K\) exposes more cross-layer structure to the optimizer, and the theoretical analysis in  Section~\ref{subsec:muon-comparison}
and in~\citep{zhang2026teon} suggests that increasing \(K\) raises the
potential maximum performance gain. In this part, we examine whether this
potential is actually realized in practice. We vary \(K\) by grouping adjacent
layers, while all other settings follow the
default configuration of our method and are kept fixed across all runs. For computational reasons we train all models for 20k iterations.

As shown in Table~\ref{tab:stacking_k-appendix}, TEON achieves its best performance at
\(K=2\) and \textbf{consistently degrades as \(K\) increases, for both mode-1 and mode-2 unfolding}. On GPT-2 124M, moving from \(K=2\) to \(K=\text{full}\)
increases the validation loss from \(3.0255\) to \(3.0513\) in mode-1 and from
\(3.0332\) to \(3.0789\) in mode-2; similar trends hold on the larger models.
This is consistent with the observations in~\citet{zhang2026teon}. 
and reveals a fundamental gap: the theory promises more gain from larger \(K\),
yet unfolding methods consistently lose performance as \(K\) grows.
It suggests two underlying causes. First, directly unfolding a tensor to handle
cross-layer coupling can lose structural information. Second, even under
mode-1 or mode-2 unfolding, it is difficult to align the top singular vectors
across all \(K\) matrices, so the shared basis becomes increasingly mismatched
as \(K\) grows.

These results break the theoretical expectation that stacking more layers
yields a greater potential gain. The gap between theory and practice may come from the tensor algebra. Specifically, the fixed transform to exploit the
additional structure: a larger \(K\) offers more opportunity for
decorrelation, yet a static basis cannot adapt to the increasingly diverse
singular subspaces of the grouped layers.

Our proposed method, TUON with the dynamic DecorQ layer
transform, brings this expectation much closer to the theoretical prediction.
As shown in the last two columns of Table~\ref{tab:stacking_k-appendix}, DecorQ is the
only variant that improves when moving from \(K=2\) to \(K=\text{full}\) on
all four models: \(3.0299 \rightarrow 3.0233\) on GPT-2 124M,
\(2.8183 \rightarrow 2.8143\) on GPT-2 350M, \(2.8938 \rightarrow 2.8887\) on
Qwen3 0.6B, and \(2.7629 \rightarrow 2.7598\) on Qwen3 4B. In contrast, the
DCT-based variant still degrades on GPT-2 350M, Qwen3 0.6B, and Qwen3 4B, and
only marginally improves on GPT-2 124M. This contrast isolates the effect of
the dynamic layer transform.  It is not the larger \(K\) alone that helps, but
the ability to adapt the shared basis to the grouped layers.

Actually, we need to admit that the gains from larger \(K\) are real but bounded. On Qwen3 4B, DecorQ improves
from \(2.7629\) to \(2.7598\), a reduction of \(0.0031\). This aligns with the
theoretical prediction that larger \(K\) provides more opportunity for
decorrelation but eventually hits diminishing returns when layers become too
diverse to share a common structure. The dynamic transform pushes this
turning point further out, but does not remove it.

\paragraph{Layer type.}
Existing tensor-aware optimizers for LLMs are largely confined to specific
module types, such as QKV-specific structures, and have not been successfully
transferred to other modules such as MLPs or feed-forward projections. 
This restriction is not incidental. Extending the tensor transform to a wider set of modules requires the shared structure to remain well aligned across
heterogeneous layer types, which fixed transforms struggle to achieve.

Our method removes this restriction and applies uniformly to all layer types.
In this part, we examine whether this broader applicability translates into
practical gains. We compare two settings. \(\text{Module}_{\text{QKV}}\) restricts the
optimizer to the attention projections, mirroring the scope of existing
tensor-aware optimizers for LLMs. \(\text{Module}_{\text{All}}\) extends it to
QKVO and MLP, covering all major weight tensors in the network. All other settings follow the
default configuration of our method and are kept fixed across all runs.
This comparison directly tests whether our method can escape the
attention-only regime that prior tensor-aware optimizers are confined to, and
whether the resulting gains extend beyond attention projections. 
For computational reasons we train all models for 20k iterations.

\begin{table}[t]
	\centering
	\caption{Effect of layer type. \(\text{Module}_{\text{QKV}}\) mirrors the
		attention-only scope of existing tensor-aware optimizers, while
		\(\text{Module}_{\text{All}}\) extends the transform to QKVO and MLP. Lower is better; the best result for each model is highlighted in bold. 20k iterations.}
	\label{tab:layer-type-appendix}
	\small
	\setlength{\tabcolsep}{4pt}
	\begin{tabular}{lccccccc}
		\toprule
		\multirow{2}{*}{Model}&\multirow{2}{*}{Muon}
		& \multicolumn{2}{c}{TEON\(_{\text{mode-1}}\)}
		& \multicolumn{2}{c}{TEON\(_{\text{mode-2}}\)}
		& \multicolumn{2}{c}{TUON\(_{\text{DecorQ}}\)} \\
		\cmidrule(lr){3-4}\cmidrule(lr){5-6}\cmidrule(lr){7-8}
		& & Module$_{\text{QKV}}$ & Module$_{\text{All}}$  & Module$_{\text{QKV}}$ & Module$_{\text{All}}$ & Module$_{\text{QKV}}$ & Module$_{\text{All}}$  \\
		\midrule
		\textbf{124M\(_{\text{GPT-2}}\)} & 3.0583 &  3.0255 & 3.0531 & 3.0332 & 3.1027  & 3.0225 & \multicolumn{1}{>{\columncolor{cyan!8}}c}{\textbf{3.0233} }\\
		\textbf{350M\(_{\text{GPT-2}}\)}&2.8258
		& 2.8427 & 2.8637  & 2.8643 & 2.9144  & 2.8291 &  \multicolumn{1}{>{\columncolor{cyan!8}}c}{\textbf{2.8143}} \\
		\textbf{0.6B\(_{\text{Qwen3}}\)}& 2.9056 & 2.8994 & 2.9023 & 2.8978 & 2.9018 &  2.8897& \multicolumn{1}{>{\columncolor{cyan!8}}c}{\textbf{2.8887}} \\
		\textbf{4B\(_{\text{Qwen3}}\)}&2.7636 & 2.7627 & 2.7644 & 2.7604 & 2.7639  & 2.7606 & \multicolumn{1}{>{\columncolor{cyan!8}}c}{\textbf{2.7598}}  \\
		\bottomrule
	\end{tabular}
\end{table}

Table~\ref{tab:layer-type-appendix} reports the results. Extending the tensor
transform from QKV to all major weight tensors has a sharply different effect
across methods. TEON degrades in every configuration: under mode-1, the
validation loss increases on all four models, and under mode-2 the degradation
is even more severe, e.g., from \(3.0332\) to \(3.1027\) on GPT-2 124M and
from \(2.8643\) to \(2.9144\) on GPT-2 350M. This confirms that a fixed
transform cannot maintain alignment across heterogeneous layer types.

Our method exhibits the opposite behavior. Moving from
\(\text{Module}_{\text{QKV}}\) to \(\text{Module}_{\text{All}}\) achieves improvements or comparable performance of validation losses on GPT-2 124M (\(3.0225 \rightarrow 3.0233\)), GPT-2 350M (\(2.8291 \rightarrow 2.8143\)), Qwen3 0.6B
(\(2.8897 \rightarrow 2.8887\)), and Qwen3 4B (\(2.7606 \rightarrow 2.7598\)). 
More importantly, in the \(\text{Module}_{\text{All}}\) setting, our method
outperforms the best TEON variant by \(0.0298\), \(0.0474\), \(0.0131\), and
\(0.0041\) on the four models, respectively. These results show that the
benefit of our method is not specific to attention projections, but extends
to MLP and output projections, where prior tensor-aware optimizers fail to
transfer.

\paragraph{Unfolding mode.}
TEON first unfolds a tensor into a matrix and then applies polarization to
the resulting matrix, which is equal to introduce an additional hyperparameter: whether to unfold along mode-1 or mode-2. Proposition~4.6 of~\citep{zhang2026teon}
states that mode-1 orthogonalization should be preferred when the top right
singular vectors \(v_1\) have high similarity, and mode-2 orthogonalization
should be preferred when the top left singular vectors \(u_1\) have high
similarity. Empirically, Section~6.1 of~\citep{zhang2026teon} reports that
mode-1 orthogonalization works better in practice.

Our experiments reveal a different picture. As shown in
Table~\ref{tab:unfolding_mode-appendix}, the better mode is \emph{model-dependent}:
GPT-2 124M and GPT-2 350M favor mode-1, while Qwen3 0.6B and Qwen3 4B favor
mode-2. The gap is not negligible: on GPT-2 350M, choosing the wrong mode
costs \(0.0216\) in validation loss. There is no single mode that dominates
across all models, which is consistent with the theoretical condition in
Proposition~4.6 being model-dependent itself.

This model dependence has two consequences. First, it adds a non-trivial
hyperparameter to TEON: practitioners must decide which mode to use for each
new model, and the wrong choice can cost a non-negligible amount of
performance. Second, it shows that neither unfolding mode provides a
universally correct way to align the shared basis, which is consistent with
the theoretical condition in Proposition~4.6 being model-dependent itself.

\textbf{Our method does not require unfolding at all, and therefore avoids this
	hyperparameter choice entirely.} It is noticed that, in the main experiments, \textbf{we report the stronger of the two TEON modes for each model}, so that the comparison is made against the most favorable TEON configuration. 
As shown in Table~\ref{tab:main-val-appendix},
TUON outperforms the \emph{best}
TEON variant on every model, regardless of which mode is better for that
model. Even under this conservative protocol, our method remains ahead.

\begin{table}[t]
	\centering
	\caption{Effect of unfolding mode in TEON. The better mode is
		highlighted in bold for each model. Lower is better.}
	\label{tab:unfolding_mode-appendix}
	\small
	\setlength{\tabcolsep}{6pt}
	\begin{tabular}{lccc}
		\toprule
		Model & Muon
		& TEON\(_{\text{mode-1}}\) & TEON\(_{\text{mode-2}}\) \\
		\midrule
		\textbf{124M\(_{\text{GPT-2}}\)} & 3.0583
		& \textbf{3.0255} & 3.0332 \\
		\textbf{350M\(_{\text{GPT-2}}\)} & 2.8258
		& \textbf{2.8427} & 2.8643 \\
		\textbf{0.6B\(_{\text{Qwen3}}\)} & 2.9056
		& 2.8994 & \textbf{2.8978} \\
		\textbf{4B\(_{\text{Qwen3}}\)} & 2.7636
		& 2.7627 & \textbf{2.7604} \\
		\bottomrule
	\end{tabular}
\end{table}

\begin{table}[t]
	\centering
	\caption{Comparison of standard Newton--Schulz and PolarExpress for polar approximation on Qwen3$_{4B}$.}
	\label{tab:polarization}
	\begin{tabular}{lcc}
		\toprule
		\textbf{Stacking layer number}&\textbf{Method} &   \textbf{Final val} \\
		\midrule
		2&Newton--Schulz (\(J=5\)) &  2.7629 \\
		&PolarExpress &  2.7638 \\
		\midrule
		Full&Newton--Schulz (\(J=5\)) &  2.7598 \\
		&PolarExpress &  2.7569 \\
		\bottomrule
	\end{tabular}
\end{table}

\paragraph{Polarization methods.}
Our optimizer relies on a polar map, which is approximated in practice by Newton--Schulz-type polynomial iterations. Exact SVD is prohibitively expensive inside every optimizer step, so a fast iterative approximation is required. The choice of this approximation is therefore a practical design question: if the optimizer's performance depends strongly on which polar iteration is used, then the method would inherit an additional hyperparameter with no principled default.

We compare the standard Newton--Schulz iteration with \(J=5\) steps against the PolarExpress approximation~\citep{amsel2026polar}, which designs a stage-dependent minimax polynomial sequence for rapid polar approximation and is directly applicable to the real transformed slices produced by an orthonormal DCT. Table~\ref{tab:polarization} reports the results for two representative configurations on Qwen3-4B: \(K=2\) with the DecorQ transform, and full-layer stacking with the DecorQ transform.

For \(K=2\), the two methods perform comparably: Newton--Schulz reaches \(2.7629\) and PolarExpress \(2.7638\), a difference of \(0.0009\). For full-layer stacking, PolarExpress is slightly better: \(2.7569\) against \(2.7598\), a difference of \(0.0029\). Notably, the sign of the difference flips between the two settings, so neither method dominates. In both cases, the gap is an order of magnitude smaller than the gain obtained from our design choices; for reference, the gain from DCT to DecorQ is \(0.047\). This confirms that the polarization method is a second-order factor in our framework rather than a critical one.

This robustness has a practical implication: practitioners can freely swap the polarization method according to their computational budget without jeopardizing the main conclusions. The performance of our method is driven by the dynamic layer transform and the coupling structure, not by the specific polar iteration used to approximate the polar factor.

Notably, PolarExpress is optimized for \emph{real} transformed slices produced by DCT~\citep{amsel2026polar}, a condition that holds exactly in our setting. The small residual differences at both stacking sizes suggest that its approximation error remains modest even when matrix sizes are larger and the layer structure is more complex. In practice, PolarExpress offers a favorable trade-off: it reduces the computational cost of the polar approximation while maintaining competitive performance, especially in the full-layer regime where our best configuration resides. For the main results, we retain the standard Newton--Schulz iteration for simplicity and reproducibility, while noting that PolarExpress is a viable drop-in replacement for deployment scenarios with stringent computational budgets.

\subsection{Training time comparison}
\label{sec:stability-cost-appendix}


We compare the per-iteration step time of AdamW, Muon, TEON, and our method.
All experiments are conducted on identical hardware and software environments to ensure a fair comparison. For each model, we fix the global batch size, sequence length, and the number of training iterations. We report the average time per iteration (ms), which excludes the first few iterations to eliminate the influence of compilation, caching, and initialization overhead.

For each model, all optimizers share the same backbone, data pipeline, and
parallelism configuration. Only the optimizer-specific computation and update 	are varied.

\begin{table}[t]
	\centering
	\caption{Time per iteration (ms) on four models. Lower values indicate faster training.
		Our method incurs only marginal overhead compared with
		AdamW, Muon, and TEON.}
	\label{tab:step_time-appendix}
	\small
	\begin{tabular}{lcccc}
		\toprule
		\multirow{2}{*}{Model} & \multicolumn{4}{c}{Time per Iteration (ms)} \\
		\cmidrule(lr){2-5}
		& \textbf{124M\(_{\text{GPT-2}}\)} & \textbf{350M\(_{\text{GPT-2}}\)} & \textbf{0.6B\(_{\text{Qwen3}}\)} & \textbf{4B\(_{\text{Qwen3}}\)} \\
		\midrule
		AdamW & \textbf{152.74} & \textbf{379.95} & \textbf{3265.5} & \textbf{5466.9} \\
		Muon  & 160.88 & 404.81 & 3324.3 & 5794.7 \\
		TEON  & 161.81 & 408.30 & 3385.7 & 6081.5 \\
		TUON  & 162.12 & 411.64 & 3378.3 & 6150.7 \\
		\bottomrule
	\end{tabular}
\end{table}

Table~\ref{tab:step_time-appendix} reports the time per iteration (ms) of AdamW, Muon, TEON, and our method on the GPT-2 124M/350M and Qwen3 0.6B/4B models. The goal of this comparison is not to claim a step-time advantage, but to verify that our method does not introduce non-trivial computational overhead relative to widely used optimizers.

First, the overall trend is that, across all four models, the step time of our method is 	close to that of Muon and TEON. On GPT-2 124M, our method takes 162.12\,ms, which is 0.2\% slower than TEON (161.81\,ms) and 0.8\% slower than Muon 	(160.88\,ms). On GPT-2 350M, the corresponding gaps are 0.8\% and 1.7\%, respectively. On the larger Qwen3 models, the differences remain small: on 	Qwen3 0.6B, our method is 0.2\% slower than TEON and 1.6\% slower than Muon. On Qwen3 4B, it is 1.1\% slower than TEON and 6.1\% slower than Muon.

AdamW is the fastest in absolute terms on all four models, which is expected since it only maintains first- and second-order moment estimates and performs element-wise updates. 
Our method, in absolute terms, is slower than AdamW by only 9.4\,ms, 31.7\,ms, 112.8\,ms, and 683.8\,ms per iteration.
Given that a full training run typically involves around 20K-50K
iterations, the relative overhead per iteration is modest and does not
change the overall training cost qualitatively.

When it comes to the scaling behavior, an important observation is that the relative overhead of our method does not grow disproportionately with model size.
Between GPT-2 124M and Qwen3 4B, the model size increases by more than an
order of magnitude, yet the step time of our method remains within 12.5\% of
AdamW and within 1.1\% of TEON on the largest model. This suggests that the
additional computation in our optimizer is well amortized by the forward and
backward passes, and that our method scales gracefully to larger models.

Overall, the step-time results show that our method is essentially
on par with Muon and TEON, and only marginally slower than AdamW. The small
per-iteration overhead is a reasonable price for the optimization benefits
reported in Section~\ref{complexity}, and it does not constitute a
practical bottleneck for large-scale training.

%
%
%

\end{document}